\documentclass{article}

    \PassOptionsToPackage{numbers, compress}{natbib}
 \usepackage[preprint]{neurips_2026}

\usepackage[utf8]{inputenc} % allow utf-8 input
\usepackage[T1]{fontenc}    % use 8-bit T1 fonts
\usepackage{hyperref}       % hyperlinks
\usepackage{url}            % simple URL typesetting
\usepackage{booktabs}       % professional-quality tables
\usepackage{amsfonts}       % blackboard math symbols
\usepackage{nicefrac}       % compact symbols for 1/2, etc.
\usepackage{microtype}      % microtypography
\usepackage{xcolor}         % colors
\usepackage{amsthm}         %% Theorems
\usepackage{graphicx}       %% images
\usepackage{multirow}       %% Multirow in tables
\usepackage{adjustbox}      %% Fitting tables
\usepackage{wrapfig}

\usepackage{xspace}
\usepackage[dvipsnames]{xcolor}
\usepackage{graphicx}
\usepackage{amsmath,amssymb,amsfonts}
\usepackage{todonotes}
\usepackage{outlines}
\usepackage{mathrsfs}  % For \mathscr
\usepackage{euscript}  % For \EuScript

\newcommand{\eq}[1]{
    \begin{align}
        #1
    \end{align}
}

\definecolor{tdbg}{HTML}{FFCC00}
\definecolor{fixbg}{HTML}{CC0000}
\definecolor{revbg}{HTML}{EB0CB7}
\definecolor{checkbg}{HTML}{95FA2F}
\definecolor{changebg}{HTML}{12C915}
\definecolor{ongoingbg}{HTML}{19FFA5}
\definecolor{ablatebg}{HTML}{1F3FF5}
\definecolor{modbg}{HTML}{CC00CC}
\definecolor{tdlbg}{HTML}{BFFFF0}
\definecolor{tdnbg}{HTML}{FFE3BF}
\definecolor{notebg}{HTML}{FF0000}
\definecolor{donebg}{HTML}{00C3FF}
\definecolor{textdark}{HTML}{1F1F1F}
\definecolor{darkgreen}{RGB}{0,120,60}

\newtheorem{assumption}{Assumption}
\newtheorem{proposition}{Proposition}
\newtheorem{lemma}{Lemma}
\newtheorem{theorem}{Theorem}
\newtheorem{corollary}{Corollary}

\title{\textbf{Towards Metric-Faithful Neural Graph Matching}}

\author{
    Jyotirmaya Shivottam$^{1,2}$ \\
    \texttt{\url{jyotirmaya.shivottam@niser.ac.in}} \\
    $^1$National Institute of Science Education and Research (NISER), Bhubaneswar, India \\
    $^2$Homi Bhabha National Institute, Mumbai, India
    \And
    Subhankar Mishra$^{1,2}$ \\
    \texttt{\url{smishra@niser.ac.in}} \\
    $^1$National Institute of Science Education and Research (NISER), Bhubaneswar, India \\
    $^2$Homi Bhabha National Institute, Mumbai, India
}
\begin{document}

\maketitle

\begin{abstract}
Graph Edit Distance (GED) is a fundamental, albeit NP-hard, metric for structural graph similarity. Recent neural graph matching architectures approximate GED by first encoding graphs with a Graph Neural Network (GNN) and then applying either a graph-level regression head or a matching-based alignment module. Despite substantial architectural progress, the role of encoder geometry in neural GED estimation remains poorly understood. In this paper, we develop a theoretical framework that connects encoder geometry to GED estimation quality for two broad classes of neural GED estimators: graph similarity predictors and alignment-based methods. On fixed graph collections, where the doubly-stochastic metric $d_{\mathrm{DS}}$ is comparable to GED, we show that graph-level bi-Lipschitz encoders yield controlled GED surrogates and improved ranking stability; for matching-based estimators, node-level bi-Lipschitz geometry propagates to encoder-induced alignment costs and the resulting optimized alignment objective. We instantiate this perspective using FSW-GNN~\citep{sverdlov2025fsw}, a bi-Lipschitz WL-equivalent encoder, as a drop-in replacement in representative neural GED architectures. Across representative baselines and benchmark datasets, the resulting geometry-aware variants significantly improve GED prediction and ranking metrics. A faithfulness case study of untrained encoders, together with ablations and transfer experiments, supports the view that these gains arise from improved representation geometry, positioning encoder geometry as a useful design principle for neural graph matching.
\end{abstract}

%%%%%%%%%%%%%%%%%%%%%%%%%%%%%%%%%%%%
%% Intro
%%%%%%%%%%%%%%%%%%%%%%%%%%%%%%%%%%%%

\section{Introduction}
\label{sec:intro}

Graph matching and graph similarity are central to applications such as molecular retrieval, program analysis, and structured search~\citep{wang2025neural,xu2025graph,liu2024graph}. A standard target in this setting is Graph Edit Distance (GED), the minimum-cost sequence of node and edge edits required to transform one graph into another~\citep{riesen}. GED is mathematically well founded but computationally difficult: exact computation reduces to a hard combinatorial matching problem and quickly becomes impractical as graph size grows~\citep{zheng2025grasp}. This difficulty has long motivated continuous relaxations of graph matching. More broadly, the classical literature already shows that the quality of a relaxation can strongly affect faithfulness to the target correspondence: Lyzinski et al.~show that, in a correlated random graph setting, a common convex relaxation can ``almost always yield[] the wrong matching''~\citep{lyzinski2014graph}. Bernard et al.~show that one can seek tighter \emph{lifting-free} convex relaxations that remain scalable, underscoring that the geometry and tightness of continuous surrogates matter even before learning enters the pipeline~\citep{bernard2018ds}.

To avoid the full combinatorial cost of exact GED, a large literature studies neural GED estimators. These methods typically fall into two broad families: \textit{graph-similarity predictors} (e.g., GMN~\citep{li2019graph}, SimGNN~\citep{bai2020simgnn}, GraSP~\citep{zheng2025grasp}), which map each graph to an embedding and regress a scalar distance, and \textit{matching-based estimators} (e.g., GEDGNN~\citep{piao2023computing}, GraphEDX~\citep{jain2024graph}, FGWAlign~\citep{tang2025fused}), which build and score a relaxed cross-graph alignment object. Despite their differences, most such pipelines share the same front end: a GNN encoder produces graph-level or node-level representations, followed by a downstream prediction or matching module~\citep{xu2025graph,liu2024graph}.

Yet the geometry induced by this encoder is rarely analyzed explicitly. Existing methods typically motivate encoder choice through expressive power or architectural convenience~\citep{xu2019powerful,piao2023computing}, while the relationship between representational distortion and GED estimation quality remains largely unexplored. If the encoder substantially distorts graph-level distances or cross-graph node discrepancies, the downstream module must operate on a poorly conditioned surrogate of the underlying edit geometry. This concern is especially relevant for Message Passing Neural Networks (MPNNs): while their discriminative power is bounded by the \(1\)-Weisfeiler--Lehman test~\citep{xu2019powerful}, recent theory shows that standard sum-based architectures do not enjoy strong lower-Lipschitz guarantees, and their representation separation quality can deteriorate rapidly with depth~\citep{davidsonHolderStabilityMultiset2024}. Recent surveys and benchmark studies further highlight substantial generalization failures in current neural GED systems~\citep{xu2025graph,roy2025position}.

\begin{wrapfigure}{r}{0.65\textwidth}
    \vspace{-10pt}
    \centering
    \includegraphics[trim=0.4cm 0.4cm 0.4cm 0.4cm, clip, width=\linewidth]{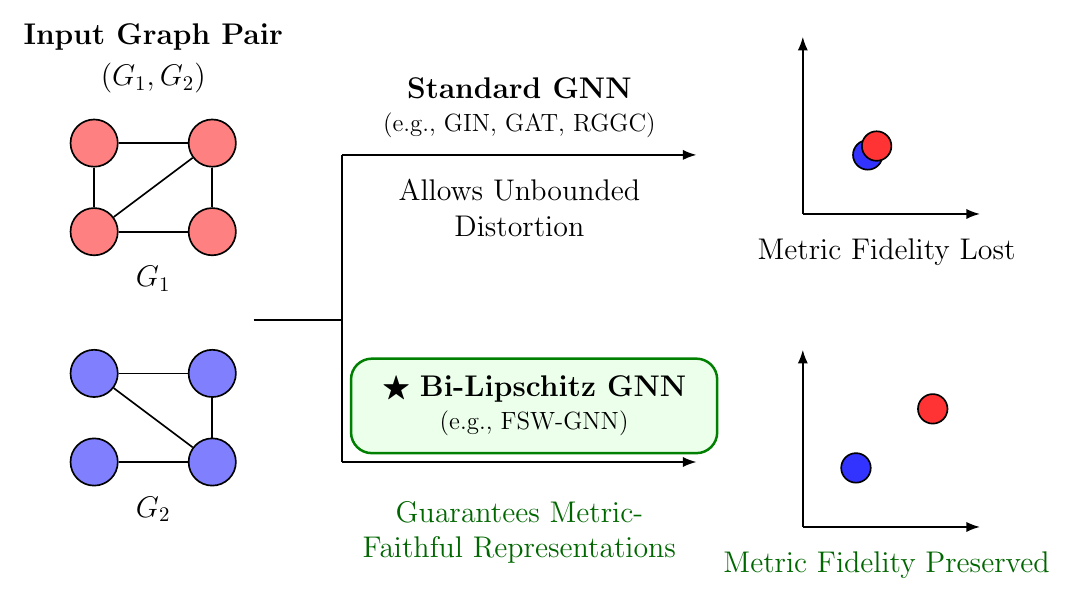}
    \vspace{-10pt}
    \caption{
    Motivation for geometrically rigorous GNN encoders in graph matching pipelines: Standard GNNs can distort structural distances, while bi-Lipschitz GNNs preserve metric fidelity in Euclidean space.
    }
    \label{fig:lede_motivation}
    \vspace{-10pt}
\end{wrapfigure}

We characterize this issue through the lens of metric distortion (Fig.~\ref{fig:lede_motivation}). At the graph level, structurally distinct graphs may be mapped too close together by similarity predictors~\citep{bai2020simgnn,ranjan2022greed}, weakening the fidelity of embedding-based GED surrogates. At the node level, matching-based estimators can inherit distorted cross-graph comparison costs before alignment is even performed~\citep{jain2024graph,tang2025fused}. Since many neural GED methods operate on graph-pair distances, node-matching scores, or relaxed alignment objectives derived from encoder outputs, the quality of these representations directly affects the faithfulness of the downstream estimator. We bridge ideas from invariant geometric machine learning, specifically bi-Lipschitz invariant theory~\citep{cahill2026approximation}, to neural graph matching in the \(1\)-WL regime considered in current benchmarks~\citep{liu2024graph,bai2020simgnn,pellizzoni2025gelato,xu2019powerful,xu2025graph,roy2025position}. We focus on two broad classes of methods: \textit{graph-level similarity predictors} and \textit{node-level matching-based estimators}. On fixed graph collections where the doubly-stochastic metric \(d_{\mathrm{DS}}\) is comparable to GED, we show that graph-level bi-Lipschitz encoder geometry yields tighter and more stable GED surrogates, while for matching-based methods, node-level bi-Lipschitz geometry propagates to encoder-induced alignment costs and the optimized alignment objective. Our goal is to demonstrate that geometry-aware encoders provide a strong and analyzable foundation for neural GED pipelines.

To instantiate this perspective, we adapt FSW-GNN~\citep{sverdlov2025fsw}, a WL-equivalent encoder with bi-Lipschitz guarantees with respect to \(d_{\mathrm{DS}}\), as a drop-in replacement in representative neural GED pipelines. Empirically, we observe that the resulting geometry-aware variants consistently improve GED prediction and ranking performance across multiple model families. Additional faithfulness analyses, transfer results, and ablations support the interpretation suggested by the theory: better encoder geometry improves the conditioning of the surrogate quantities from which neural GED is estimated.

\paragraph{Our Contributions.}
\begin{enumerate}
    \item We develop a geometric framework for neural GED estimation that unifies both graph-similarity predictors and matching-based estimators. For fixed \(1\)-WL-separable graph collections where \(d_{\mathrm{DS}}\) is comparable to GED, we show that encoder distortion directly governs the quality of the resulting GED surrogate. \textit{To the best of our knowledge, this is the first work to systematically relate encoder geometry to GED estimation performance.}
    
    \item For matching-based neural GED architectures, we prove that node-level bi-Lipschitz geometry propagates from encoder-induced node costs to relaxed alignment objectives, giving a precise mechanism by which encoder quality affects downstream edit estimation.
    
    \item We instantiate the theory with FSW-GNN~\citep{sverdlov2025fsw}, a bi-Lipschitz WL-equivalent encoder, as a drop-in replacement in representative neural GED architectures, without changing their downstream prediction or matching mechanisms. The resulting geometry-aware variants consistently and significantly improve GED prediction and ranking quality; faithfulness, transfer, and ablation results support the view that these gains arise from stronger representation geometry.
\end{enumerate}

%%%%%%%%%%%%%%%%%%%%%%%%%%%%%%%%%%%%
%% RW
%%%%%%%%%%%%%%%%%%%%%%%%%%%%%%%%%%%%

\section{Related Work}
\label{sec:rw}

\paragraph{Neural GED and graph matching.}
Neural GED methods are commonly divided into graph-similarity predictors and matching-based estimators~\citep{liu2024graph}. The first family includes models such as GMN, SimGNN, GraphSim, EGSC, ERIC, H2MN, and GraSP, which predict GED or graph similarity from learned graph-level representations~\citep{li2019graph,bai2020simgnn,bai2020learning,egsc,eric,h2mn,zheng2025grasp}. The second family includes models such as GEDGNN, GraphEDX, ISONET, and fused Gromov--Wasserstein approaches, which estimate similarity through explicit or relaxed alignments~\citep{piao2023computing,jain2024graph,isonet,tang2025fused}. Across both families, most work focuses on architectural design, prediction heads, or alignment objectives, while the geometry of the encoder itself is rarely analyzed.

\paragraph{Metric-aware neural graph matching.}
Several works have introduced stronger output-side or objective-side structure. GREED enforces metric properties at the prediction layer~\citep{ranjan2022greed}; GraphEDX and GEDAN learn more flexible edit-cost surrogates~\citep{jain2024graph,leonardi2026gedan}; and TaGSim decomposes similarity into operation-aware components~\citep{tagsim}. More recently, GRAIL explores an alternative program-synthesis route using LLMs~\citep{verma2025grail}. This perspective also connects to the broader graph matching literature on continuous relaxations, where both the risks of loose convex surrogates~\citep{lyzinski2014graph, dymds} and the value of tighter lifting-free relaxations~\citep{bernard2018ds} have been emphasized. Our perspective is complementary: rather than changing the head or matching objective, we study how the geometry of the upstream encoder affects the quality of the quantities these downstream modules consume.

\paragraph{Geometric representation learning.}
Recent work in invariant and geometric learning has moved beyond expressive power alone and started to analyze quantitative representation quality. Davidson and Dym show that standard sum-based multiset and graph networks satisfy only weaker lower-H\"older-type guarantees in expectation~\citep{davidsonHolderStabilityMultiset2024}, while FSW-GNN provides bi-Lipschitz guarantees with respect to WL-equivalent graph metrics, including the doubly-stochastic metric used here~\citep{sverdlov2025fsw}. Related work studies metric control for graph stability, permutation-invariant embeddings, and low-distortion orbit-space embeddings~\citep{chuang,dym2026quantitative,cahill2026approximation}. Our contribution brings these geometric ideas into neural GED estimation and uses them to analyze both graph-level prediction pipelines and matching-based architectures.

%%%%%%%%%%%%%%%%%%%%%%%%%%%%%%%%%%%%
%% Theory
%%%%%%%%%%%%%%%%%%%%%%%%%%%%%%%%%%%%

\section{Geometry of Neural GED}
\label{sec:theory}

Let \(\Omega \subset \mathbb{R}^d\) be compact, and let
\(\mathcal{G}^{\mathrm{WL}}_{\le N}\) denote a class of attributed graphs with at most \(N\) nodes that are pairwise separable by the \(1\)-WL test. Throughout this section, we work with a finite collection \(\mathcal G \subseteq \mathcal{G}^{\mathrm{WL}}_{\le N}\) consisting of the graphs appearing in the usual datasets, taken modulo graph isomorphism. Let \(d_{\mathrm{GED}}\) denote the uniform-cost graph edit
distance on \(\mathcal{G}^{\mathrm{WL}}_{\le N}\), and let \(d_{\mathrm{DS}}\) denote the generalized doubly-stochastic graph metric used in the bi-Lipschitz analysis of FSW-GNN~\citep{sverdlov2025fsw}. We state the main results in this section and defer the detailed proofs to Appendix~\ref{app:proofs}.

\paragraph{DS Metric} The metric \(d_{\mathrm{DS}}\) originates from the doubly-stochastic relaxation of graph isomorphism. For equal-size featureless graphs, graph isomorphism can be written as the existence of a permutation matrix \(P\) satisfying \(AP=PA'\), where \(A\) and \(A'\) are the adjacency matrices of the two graphs. Replacing the permutation constraint by
the convex hull of permutation matrices, namely the set of doubly-stochastic matrices, yields a continuous relaxation and leads to the DS metric by minimizing the residual of this relaxed commutation relation.\ \citet{sverdlov2025fsw} extend this construction to attributed graphs with varying node counts by jointly penalizing structural mismatch and node-feature mismatch over rectangular doubly-stochastic
couplings. A key result of~\citep{sverdlov2025fsw} is that this generalized \(d_{\mathrm{DS}}\) is a \(1\)-WL-equivalent graph metric on \(\mathcal{G}^{\mathrm{WL}}_{\le N}\): it vanishes exactly on \(1\)-WL-equivalent graph pairs. Moreover, on compact feature domains, FSW-GNN admits graph-level bi-Lipschitz
embeddings with respect to \(d_{\mathrm{DS}}\)~\citep{sverdlov2025fsw}. This makes
\(d_{\mathrm{DS}}\) a natural bridge between the discrete edit geometry of GED and the continuous Euclidean representation spaces used by neural GED models.

We treat the two broad classes of neural GED estimators separately. The first class consists of \textit{graph similarity predictors} that map each graph to a graph-level embedding and estimate GED from the resulting pair of vectors. The second class consists of \textit{matching-based estimators} that form a cross-graph alignment object from node embeddings and then score that object. This distinction is important because graph-level and node-level geometry enter these architectures in different ways.

We note that the ambient regime for the theory is therefore the \(1\)-WL-separable one. We chose this as the natural setting for the present analysis for two reasons. First, the encoder families
compared here are built from WL-type message-passing architectures and are designed to operate in that expressive regime~\citep{xu2019powerful,sverdlov2025fsw}. Second,
current neural GED methods are overwhelmingly evaluated on datasets that effectively lie within this regime, while recent work moving beyond it does so by explicitly increasing
expressive power, e.g. via \(3\)-WL-type architectures~\citep{xu2025graph,liu2024graph,
zheng2025grasp,moscatelli20253wl}. Thus, the point of our analysis is not to claim that \(1\)-WL geometry is universally sufficient for GED, but rather to understand how the \emph{quality} of encoder geometry within the regime \textit{used in practice} affects neural GED estimation. We leave extending the analysis beyond the \(1\)-WL setting for future work.

On a fixed finite collection of graphs, GED is often accessed in practice through a continuous surrogate geometry rather than directly. In this work, we choose \(d_{\mathrm{DS}}\) as that surrogate, because it is \(1\)-WL-equivalent and because
FSW-GNN provides explicit bi-Lipschitz control with respect to it~\citep{sverdlov2025fsw}. This makes \(d_{\mathrm{DS}}\) a natural bridge between the discrete edit metric and the continuous representation spaces used by neural GED models.

\begin{proposition}[GED--DS comparability]
\label{prop:ged_ds_dataset}
Define
\eq{
a_{\mathcal G}
:=
\min_{\substack{G,H\in\mathcal G\\G\neq H}}
\frac{d_{\mathrm{GED}}(G,H)}{d_{\mathrm{DS}}(G,H)},
\qquad
b_{\mathcal G}
:=
\max_{\substack{G,H\in\mathcal G\\G\neq H}}
\frac{d_{\mathrm{GED}}(G,H)}{d_{\mathrm{DS}}(G,H)}.
}
Then
\eq{
0<a_{\mathcal G}\le b_{\mathcal G}<\infty,
}
and, \(\forall G,H\in\mathcal G\),
\eq{
a_{\mathcal G}\, d_{\mathrm{DS}}(G,H)
\;\le\;
d_{\mathrm{GED}}(G,H)
\;\le\;
b_{\mathcal G}\, d_{\mathrm{DS}}(G,H).
}
\end{proposition}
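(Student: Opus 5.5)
The argument uses only finiteness of $\mathcal G$ and the fact that both $d_{\mathrm{GED}}$ and $d_{\mathrm{DS}}$ are genuine metrics on it, i.e.\ they vanish exactly on the diagonal.

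\textbf{Step 1: the ratios are well defined and positive.} Since $\mathcal G$ is taken modulo isomorphism, $G\neq H$ means $G\not\cong H$. Since $\mathcal G\subseteq\mathcal G^{\mathrm{WL}}_{\le N}$, such a pair is also $1$-WL distinguishable.

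\begin{itemize}
\item The $1$-WL-equivalence of $d_{\mathrm{DS}}$ from \citep{sverdlov2025fsw} says $d_{\mathrm{DS}}$ vanishes exactly on $1$-WL-equivalent pairs, so $d_{\mathrm{DS}}(G,H)>0$ and the denominator never vanishes.
\item Uniform-cost GED is a metric on isomorphism classes: every edit costs at least one unit, and an edit sequence of total cost zero is empty. Hence $d_{\mathrm{GED}}(G,H)\ge 1>0$.
\item $d_{\mathrm{DS}}$ is finite: the feasible set of rectangular doubly-stochastic couplings is compact and nonempty, node counts are at most $N$, and features lie in the compact $\Omega$.
\item $d_{\mathrm{GED}}$ is finite: one can always delete all of $G$ and insert all of $H$, with cost bounded in terms of $N$.
\end{itemize}

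So every ratio $d_{\mathrm{GED}}(G,H)/d_{\mathrm{DS}}(G,H)$ with $G\ne H$ lies in $(0,\infty)$.

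\textbf{Step 2: the min and max are attained.} $\mathcal G$ is finite, so there are finitely many unordered pairs $G\neq H$, and the min and max range over a finite set of positive reals. If $|\mathcal G|\ge 2$, this gives $0<a_{\mathcal G}\le b_{\mathcal G}<\infty$ immediately. If $|\mathcal G|\le 1$ the index set is empty; I would adopt the convention $a_{\mathcal G}=b_{\mathcal G}=1$ (any positive constant works), since the sandwich inequality is then trivial.

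\textbf{Step 3: the sandwich inequality.} For $G\neq H$, the definitions of min and max give
$$a_{\mathcal G}\le \frac{d_{\mathrm{GED}}(G,H)}{d_{\mathrm{DS}}(G,H)}\le b_{\mathcal G}.$$
Multiplying by $d_{\mathrm{DS}}(G,H)>0$ yields the claim. For $G=H$ all three quantities are $0$, using $d_{\mathrm{GED}}(G,G)=d_{\mathrm{DS}}(G,G)=0$.

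\textbf{Main obstacle.} The only step that is not bookkeeping is the positivity of $d_{\mathrm{DS}}$ on distinct elements of $\mathcal G$. This is exactly where the hypothesis that $\mathcal G$ is pairwise $1$-WL separable is needed. Without it, two non-isomorphic but WL-equivalent graphs would have $d_{\mathrm{GED}}>0=d_{\mathrm{DS}}$, and $b_{\mathcal G}$ would be infinite. I would check that the quotient by isomorphism together with the WL-separability of the ambient class gives precisely this, citing the WL-equivalence theorem of \citep{sverdlov2025fsw}. I would also note that the constants depend on $\mathcal G$ and need not be uniform over $\mathcal G^{\mathrm{WL}}_{\le N}$.
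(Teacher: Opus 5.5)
Your proof is correct and follows the paper's argument essentially step for step. Both use the finiteness of $\mathcal G$ modulo isomorphism, get positivity of $d_{\mathrm{DS}}$ off the diagonal from pairwise $1$-WL separability together with the WL-equivalence of $d_{\mathrm{DS}}$, note that uniform-cost GED is positive on non-isomorphic pairs, and then take min and max over a finite set of positive reals, with the diagonal case trivial. Your extra remarks on the finiteness of both distances and on the degenerate case $|\mathcal G|\le 1$ (where the min and max range over an empty set) cover small points the paper leaves implicit.
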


Proposition~\ref{prop:ged_ds_dataset} says that on the fixed finite graph collection under study, \(d_{\mathrm{DS}}\) and \(d_{\mathrm{GED}}\) are comparable up to dataset-dependent constants \(a_{\mathcal G}\) and \(b_{\mathcal G}\). These constants are not controlled by the encoder or the downstream method; they only provide the bridge from the surrogate geometry \(d_{\mathrm{DS}}\) to \(d_{\mathrm{GED}}\). The encoder-side contribution enters separately through \(c_G\) and \(C_G\), so on a fixed dataset any improvement in surrogate conditioning must come from improved encoder geometry. This is precisely why FSW-GNN is relevant here, as it provides explicit bi-Lipschitz control with respect to \(d_{\mathrm{DS}}\), whereas standard sum-based MPNNs are only known to satisfy much weaker lower-Hölder-type guarantees in expectation~\citep{davidsonHolderStabilityMultiset2024}. We now state the encoder-side assumption that will be propagated through graph-level neural GED architectures.

\begin{assumption}[Graph-level bi-Lipschitz encoder]
\label{ass:graph_bilip}
The graph encoder \(\Phi : \mathcal G \to \mathbb{R}^m\) satisfies
\eq{
c_G\, d_{\mathrm{DS}}(G,H)
\;\le\;
\|\Phi(G)-\Phi(H)\|_2
\;\le\;
C_G\, d_{\mathrm{DS}}(G,H)
\qquad
\forall\, G,H \in \mathcal G,
}
for some constants \(0 < c_G \le C_G < \infty\).
\end{assumption}

Assumption~\ref{ass:graph_bilip} requires graph embeddings to preserve the \(d_{\mathrm{DS}}\) geometry up to controlled multiplicative distortion. We argue that this bi-Lipschitz property is precisely the geometric condition that makes encoder
representations useful for GED estimation: the lower bound prevents distant graphs from collapsing in the embedding space, while the upper bound ensures that small perturbations are not artificially amplified. For FSW-GNN-style encoders, this assumption is directly motivated by the graph-level bi-Lipschitz theorem of\ \citet{sverdlov2025fsw}. More broadly,
it serves as the encoder-side condition under which graph-pair prediction quality can be analyzed quantitatively. This is also the natural point of contrast with standard sum-based MPNNs, for which available results are substantially weaker, e.g. lower-Hölder-type guarantees in expectation whose quality degrades with depth~\citep{davidsonHolderStabilityMultiset2024}.

Taken together, Proposition~\ref{prop:ged_ds_dataset} and
Assumption~\ref{ass:graph_bilip} formalize a common intuition underlying most neural GED solvers through the DS–GED connection: Euclidean distances between graph embeddings encode quantitative information about GED. Crucially, this conclusion is not a consequence of expressive power alone; it depends on the metric conditioning of the encoder.

\subsection{Graph Similarity Predictors}

Graph similarity predictors estimate GED from graph-level representations, typically by a norm-based distance head or by an interaction head applied to a pair of embeddings. For this class of methods, the central question is \textit{how much of the target edit geometry survives the passage from graphs to graph embeddings}. Under Assumption~\ref{ass:graph_bilip} and with Proposition~\ref{prop:ged_ds_dataset}, we obtain explicit answers.

\begin{proposition}[Graph-level bounds]
\label{prop:graph_sandwich}
For every \(G,H \in \mathcal G\),
\eq{
\frac{a_{\mathcal G}}{C_G}\,\|\Phi(G)-\Phi(H)\|_2
\;\le\;
d_{\mathrm{GED}}(G,H)
\;\le\;
\frac{b_{\mathcal G}}{c_G}\,\|\Phi(G)-\Phi(H)\|_2.
}
\end{proposition}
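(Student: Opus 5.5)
The plan is to chain the two comparability statements already available: Proposition~\ref{prop:ged_ds_dataset}, which sandwiches \(d_{\mathrm{GED}}\) between multiples of \(d_{\mathrm{DS}}\), and Assumption~\ref{ass:graph_bilip}, which sandwiches \(\|\Phi(G)-\Phi(H)\|_2\) between multiples of \(d_{\mathrm{DS}}\). The second is inverted to bound \(d_{\mathrm{DS}}\) by the embedding distance, and the result is substituted into the first. Each direction is a two-step inequality chain using only positivity of the constants.

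First handle the trivial case \(G=H\) (as elements of \(\mathcal G\), i.e.\ up to isomorphism). Then \(d_{\mathrm{GED}}(G,H)=0\) and \(\|\Phi(G)-\Phi(H)\|_2=0\), since \(\Phi\) is defined on isomorphism classes; equivalently the upper bound in Assumption~\ref{ass:graph_bilip} gives \(\|\Phi(G)-\Phi(H)\|_2\le C_G d_{\mathrm{DS}}(G,H)=0\). Both sides of the claim vanish.

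For \(G\neq H\), rearrange Assumption~\ref{ass:graph_bilip} using \(0<c_G\le C_G<\infty\). This gives
\[
\frac{1}{C_G}\|\Phi(G)-\Phi(H)\|_2\le d_{\mathrm{DS}}(G,H)\le \frac{1}{c_G}\|\Phi(G)-\Phi(H)\|_2 .
\]

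\emph{Lower bound.} Proposition~\ref{prop:ged_ds_dataset} gives \(d_{\mathrm{GED}}\ge a_{\mathcal G} d_{\mathrm{DS}}\). Since \(a_{\mathcal G}>0\), multiplying the left half of the display by \(a_{\mathcal G}\) preserves the inequality, so
\[
d_{\mathrm{GED}}(G,H)\ge \frac{a_{\mathcal G}}{C_G}\|\Phi(G)-\Phi(H)\|_2 .
\]

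\emph{Upper bound.} Proposition~\ref{prop:ged_ds_dataset} gives \(d_{\mathrm{GED}}\le b_{\mathcal G} d_{\mathrm{DS}}\). Since \(b_{\mathcal G}>0\), multiplying the right half of the display by \(b_{\mathcal G}\) gives
\[
d_{\mathrm{GED}}(G,H)\le \frac{b_{\mathcal G}}{c_G}\|\Phi(G)-\Phi(H)\|_2 .
\]

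There is no real obstacle. The only points needing care are the following.
\begin{itemize}
\item The constants \(a_{\mathcal G}, b_{\mathcal G}, c_G, C_G\) must be strictly positive and finite, so that dividing and multiplying is legitimate. This is exactly what Proposition~\ref{prop:ged_ds_dataset} and Assumption~\ref{ass:graph_bilip} supply.
\item The \(G=H\) case must be treated separately. The ratios defining \(a_{\mathcal G}\) and \(b_{\mathcal G}\) are only taken over distinct pairs, and the argument uses that \(d_{\mathrm{DS}}(G,H)>0\) for distinct graphs in the 1-WL-separable collection.
\end{itemize}
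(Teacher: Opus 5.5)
Your proposal is correct and takes the same approach as the paper's proof: invert each half of Assumption~\ref{ass:graph_bilip} to bound \(d_{\mathrm{DS}}\) by the embedding distance, then chain it with the matching half of Proposition~\ref{prop:ged_ds_dataset}. Your separate treatment of \(G=H\) is harmless but unnecessary, since Proposition~\ref{prop:ged_ds_dataset} already holds for all pairs, including \(G=H\).
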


Proposition~\ref{prop:graph_sandwich} shows that graph-embedding distances provide a multiplicatively controlled surrogate for GED on the considered graph datasets. The quality of this surrogate is governed by two distinct sources of distortion: the
comparability of \(d_{\mathrm{DS}}\) to GED through \(a_{\mathcal G}\) and \(b_{\mathcal G}\), and the graph-level distortion of the encoder through \(c_G\) and \(C_G\). In particular, improved graph geometry tightens the surrogate independently of the specific prediction head used downstream.

\begin{theorem}[Best scalar calibration for norm-based predictors]
\label{thm:calibration}
Let
\eq{
\alpha := \frac{a_{\mathcal G}}{C_G},
\quad
\beta := \frac{b_{\mathcal G}}{c_G},
\quad
\gamma^\star := \frac{\alpha+\beta}{2}.
}
Then, for every \(G,H \in \mathcal G\),
\eq{
\left|
\gamma^\star \|\Phi(G)-\Phi(H)\|_2 - d_{\mathrm{GED}}(G,H)
\right|
\le
\frac{\beta-\alpha}{2}\,\|\Phi(G)-\Phi(H)\|_2.
}
\end{theorem}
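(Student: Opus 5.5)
The plan is to reduce the statement to an elementary fact about intervals, using Proposition~\ref{prop:graph_sandwich} as the only input. Fix \(G,H\in\mathcal G\) and write \(x:=\|\Phi(G)-\Phi(H)\|_2\ge 0\) and \(y:=d_{\mathrm{GED}}(G,H)\). With \(\alpha=a_{\mathcal G}/C_G\) and \(\beta=b_{\mathcal G}/c_G\), Proposition~\ref{prop:graph_sandwich} gives exactly \(\alpha x\le y\le \beta x\). So \(y\) lies in the interval \([\alpha x,\beta x]\), and the theorem asserts that the midpoint \(\gamma^\star x\) of this interval lies within half its length of \(y\).

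First I would confirm the interval is well formed, namely \(\alpha\le\beta\). Proposition~\ref{prop:ged_ds_dataset} gives \(a_{\mathcal G}\le b_{\mathcal G}\), and Assumption~\ref{ass:graph_bilip} gives \(0<c_G\le C_G\). Hence
\[
\frac{a_{\mathcal G}}{C_G}\le \frac{a_{\mathcal G}}{c_G}\le \frac{b_{\mathcal G}}{c_G},
\]
using \(a_{\mathcal G}>0\), so \(\beta-\alpha\ge0\). Strictly this check is not needed, since the sandwich itself forces \(\alpha x\le\beta x\) whenever \(x>0\).

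Next comes the two-sided estimate. Subtracting \(\gamma^\star x\) from \(\alpha x\le y\le\beta x\) gives
\[
\Bigl(\alpha-\tfrac{\alpha+\beta}{2}\Bigr)x\;\le\; y-\gamma^\star x\;\le\;\Bigl(\beta-\tfrac{\alpha+\beta}{2}\Bigr)x,
\]
that is,
\[
-\tfrac{\beta-\alpha}{2}\,x\;\le\; y-\gamma^\star x\;\le\; \tfrac{\beta-\alpha}{2}\,x.
\]
This is precisely \(|\gamma^\star x - y|\le \tfrac{\beta-\alpha}{2}x\). The degenerate case \(G=H\), where \(x=0\), is consistent because then \(y=0\) by the sandwich.

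There is no real obstacle here; the argument is a one-line consequence of the graph-level bounds. The only point needing care is the word ``best'' in the title. Should a justification be wanted, I would add a remark that \(\gamma^\star\) minimizes \(\max_{y\in[\alpha x,\beta x]}|\gamma x-y| = \max(|\gamma-\alpha|,|\beta-\gamma|)\,x\) over \(\gamma\). This is a Chebyshev-center argument: the maximum of the two linear terms is minimized exactly where they are equal, at \(\gamma=(\alpha+\beta)/2\). The remark explains the optimality of the choice but is not required for the stated inequality.
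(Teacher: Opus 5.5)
Your proposal is correct and follows essentially the same route as the paper: both use Proposition~\ref{prop:graph_sandwich} to place $d_{\mathrm{GED}}(G,H)$ in $[\alpha x,\beta x]$, then observe that $\gamma^\star x$ is the midpoint and $\frac{\beta-\alpha}{2}x$ the radius of that interval. Your extra checks, namely $\alpha\le\beta$ and the Chebyshev-center justification of ``best'', are correct additions that the paper omits.
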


Through Theorem~\ref{thm:calibration}, we formalize the simplest graph-level prediction mechanism: a scalar multiple of embedding distance, and show that even such a minimal head admits a uniform error band, whose width is controlled by the same geometric constants as in Proposition~\ref{prop:graph_sandwich}. Thus, improved encoder geometry directly narrows the uncertainty band for norm-based GED prediction, rather than being merely a qualitative desideratum.

\paragraph{Ranking interpretation for graph-level predictors.}
Theorem~\ref{thm:calibration} controls absolute approximation error for norm-based GED surrogates. For rank-based evaluation, however, the relevant question is whether the ordering of graph pairs induced by the predictor agrees with the ordering induced by the true GED. In the next corollary, we identify a regime in which this agreement is guaranteed.

\begin{corollary}[Ranking preservation for monotone norm-head predictors]
\label{cor:ranking_graph}
Let \(\widehat d_\psi(G,H):=\psi\!\left(\|\Phi(G)-\Phi(H)\|_2\right)\), where \(\psi:\mathbb{R}_{\ge 0}\to\mathbb{R}\) is strictly increasing. Define
\eq{
\kappa_G := \frac{\beta}{\alpha}
= \frac{b_{\mathcal G} C_G}{a_{\mathcal G} c_G}.
}
Then for any two graph pairs \((G_1,H_1)\) and \((G_2,H_2)\) (and similarly with $<$ and $\kappa_G^{-1}$),
\eq{
d_{\mathrm{GED}}(G_1,H_1)
>
\kappa_G\, d_{\mathrm{GED}}(G_2,H_2)
\quad\Longrightarrow\quad
\widehat d_\psi(G_1,H_1)>\widehat d_\psi(G_2,H_2).
}
\end{corollary}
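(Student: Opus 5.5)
The plan is to reduce the claim to a comparison of embedding distances and then apply the monotonicity of \(\psi\). Write \(e_i := \|\Phi(G_i)-\Phi(H_i)\|_2\) for \(i=1,2\). Proposition~\ref{prop:graph_sandwich} gives, for each pair,
\begin{align}
\alpha\, e_i \;\le\; d_{\mathrm{GED}}(G_i,H_i) \;\le\; \beta\, e_i,
\end{align}
with \(\alpha = a_{\mathcal G}/C_G > 0\) and \(\beta = b_{\mathcal G}/c_G<\infty\). Positivity of \(\alpha\) follows from Proposition~\ref{prop:ged_ds_dataset} and Assumption~\ref{ass:graph_bilip}. So it suffices to show \(e_1 > e_2\): strict monotonicity of \(\psi\) then gives \(\widehat d_\psi(G_1,H_1) > \widehat d_\psi(G_2,H_2)\) at once.

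To get \(e_1>e_2\), I will chain the inequalities. From the lower bound on pair \(2\) and the upper bound on pair \(1\),
\begin{align}
\beta\, e_1 \;\ge\; d_{\mathrm{GED}}(G_1,H_1) \;>\; \kappa_G\, d_{\mathrm{GED}}(G_2,H_2) \;\ge\; \kappa_G\,\alpha\, e_2 \;=\; \beta\, e_2 .
\end{align}
The last equality uses \(\kappa_G = \beta/\alpha\). Dividing by \(\beta>0\) gives \(e_1>e_2\).

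The one point needing care is degenerate pairs. If \(G_2=H_2\), then \(d_{\mathrm{GED}}(G_2,H_2)=0\) and \(e_2=0\), since \(d_{\mathrm{DS}}\) vanishes and the upper Lipschitz bound applies. The hypothesis then forces \(d_{\mathrm{GED}}(G_1,H_1)>0\), so \(e_1>0=e_2\). The chain above already covers this case, because the sandwich bounds hold trivially with zeros. The displayed strict inequality only needs \(\beta>0\), which holds because \(b_{\mathcal G}\ge a_{\mathcal G}>0\).

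For the symmetric statement, assume \(d_{\mathrm{GED}}(G_1,H_1) < \kappa_G^{-1} d_{\mathrm{GED}}(G_2,H_2)\). This is the same as \(d_{\mathrm{GED}}(G_2,H_2) > \kappa_G\, d_{\mathrm{GED}}(G_1,H_1)\), so the argument with the two pairs swapped gives \(\widehat d_\psi(G_1,H_1)<\widehat d_\psi(G_2,H_2)\).

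I expect no real obstacle. The only delicate part is keeping the inequality strict through the chain and checking that the constants are positive and finite, which Propositions~\ref{prop:ged_ds_dataset} and~\ref{prop:graph_sandwich} guarantee.
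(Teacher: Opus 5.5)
Your proof is correct and follows essentially the same route as the paper: both use the sandwich bounds from Proposition~\ref{prop:graph_sandwich} on each pair to show that the hypothesis forces \(\|\Phi(G_1)-\Phi(H_1)\|_2>\|\Phi(G_2)-\Phi(H_2)\|_2\), and then apply strict monotonicity of \(\psi\). The paper writes the chain as \(x_1\ge d_1/\beta>d_2/\alpha\ge x_2\) instead of multiplying through by \(\beta\), and it likewise handles the reversed inequality by swapping the two pairs.
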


Corollary~\ref{cor:ranking_graph} provides a pairwise order-preservation guarantee for graph-level GED predictors. It shows that any pairwise comparison, whose true GED values differ by a factor larger than \(\kappa_G\), must be ordered correctly by any strictly increasing norm-head predictor. The smaller the distortion ratio \(\kappa_G\), the larger the set of pairwise comparisons for which correct ranking is certified. This directly supports rank-based evaluation metrics: it increases the set of pairwise comparisons that are necessarily concordant, thereby improving the conditioning of Kendall's \(\tau\) and, more broadly, Spearman's \(\rho\) and retrieval metrics such as \(P@10\) and \(P@20\). By contrast, absolute calibration metrics such as MAE are governed by Theorem~\ref{thm:calibration}, which controls the numerical error of the predicted score rather than its induced ordering. Accordingly, improved graph-level geometry should be expected to benefit ranking metrics even when residual
calibration error remains. This gives a concrete graph-level mechanism for improvements in retrieval and rank-correlation measures.

Many graph similarity predictors use an interaction module, MLP, or neural tensor style head rather than a single Euclidean norm~\citep{bai2020simgnn,li2019graph,ranjan2022greed,zheng2025grasp}. The next proposition shows that once the encoder is bi-Lipschitz with respect to the surrogate geometry, any downstream head that is Lipschitz on embedding pairs inherits a corresponding stability guarantee.

\begin{proposition}[Stability of Lipschitz interaction heads]
\label{prop:lipschitz_head}
Let \(h:\mathbb{R}^m\times\mathbb{R}^m\to\mathbb{R}\) be \(L_h\)-Lipschitz with respect to \(\|(u,v)\|_{\oplus} := \|u\|_2+\|v\|_2\). Define \(s_h(G,H) := h(\Phi(G),\Phi(H))\). Then, for every \(G,H,G',H' \in \mathcal G\),
\eq{
|s_h(G,H)-s_h(G',H')|
\le
L_h C_G\,
\bigl(
d_{\mathrm{DS}}(G,G')+d_{\mathrm{DS}}(H,H')
\bigr).
}
\end{proposition}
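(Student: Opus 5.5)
The argument chains the Lipschitz property of the head with the upper half of Assumption~\ref{ass:graph_bilip}. Fix \(G,H,G',H'\in\mathcal G\). Since \(s_h(G,H)=h(\Phi(G),\Phi(H))\) and \(s_h(G',H')=h(\Phi(G'),\Phi(H'))\), the \(L_h\)-Lipschitz property of \(h\) with respect to \(\|\cdot\|_{\oplus}\) gives
\begin{align}
|s_h(G,H)-s_h(G',H')|
&\le L_h\,\bigl\|(\Phi(G)-\Phi(G'),\,\Phi(H)-\Phi(H'))\bigr\|_{\oplus}\\
&= L_h\bigl(\|\Phi(G)-\Phi(G')\|_2+\|\Phi(H)-\Phi(H')\|_2\bigr).
\end{align}
Here we use that \(\|\cdot\|_{\oplus}\) is a norm on \(\mathbb{R}^m\times\mathbb{R}^m\), so the Lipschitz condition is applied to the difference of the two argument pairs, taken coordinatewise.

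Next we bound each summand separately using the upper inequality in Assumption~\ref{ass:graph_bilip}. That inequality holds for all pairs in \(\mathcal G\), so it applies to \((G,G')\) and \((H,H')\), giving \(\|\Phi(G)-\Phi(G')\|_2\le C_G\, d_{\mathrm{DS}}(G,G')\) and likewise for \(H,H'\). Substituting these bounds and factoring out \(L_hC_G\) yields the claimed inequality.

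There is no real obstacle here. The only points to verify are that the Lipschitz condition on \(h\) is taken with respect to the sum norm, so the two coordinate errors add without a \(\sqrt{2}\) factor, and that only the upper Lipschitz constant \(C_G\) is needed. The lower constant \(c_G\) plays no role, which is why the bound concerns stability rather than faithfulness. The degenerate cases \(G=G'\) or \(H=H'\) are covered automatically, since \(d_{\mathrm{DS}}\) vanishes there.
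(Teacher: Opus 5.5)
Your proof is correct and essentially the same as the paper's. You apply the $L_h$-Lipschitz property of $h$ in the sum norm to obtain $L_h\bigl(\|\Phi(G)-\Phi(G')\|_2+\|\Phi(H)-\Phi(H')\|_2\bigr)$, then bound each term with the upper inequality of Assumption~\ref{ass:graph_bilip}.
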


Proposition~\ref{prop:lipschitz_head} extends the graph-level analysis beyond pure distance heads. It provides a class-level stability statement for graph similarity predictors such as SimGNN~\citep{bai2020simgnn}, GMN~\citep{li2019graph}, GREED~\citep{ranjan2022greed}, and GraSP-style~\citep{zheng2025grasp} models: once the encoder preserves the surrogate geometry with controlled distortion, any Lipschitz interaction head inherits a corresponding control with respect to perturbations measured in \(d_{\mathrm{DS}}\).

\subsection{Matching-based Estimators}

We now turn to neural GED estimators that build a cross-graph alignment object from node embeddings and then score that object. This includes OT-based, assignment-based, and edit-cost-based architectures in which the encoder influences GED estimation through the node-level costs supplied to the matching module, e.g., GEDGNN~\citep{piao2023computing}, GraphEDX~\citep{jain2024graph}, FGWAlign~\citep{tang2025fused}. In this setting, the relevant geometric question is not only whether whole-graph embeddings are well conditioned, but also whether cross-graph node discrepancies are faithfully represented.

Following standard assignment- and OT-based GED formulations, we fix once and for all an equalization rule that maps each pair \((G,H)\in\mathcal G\times\mathcal G\) to an equalized pair \((\bar G,\bar H)\), obtained for instance by dummy-node or virtual-node
augmentation, such that \(|V(\bar G)| = |V(\bar H)|\). This is also the setting in which FGW-based GED formulations can be stated most cleanly. For a fixed pair \((G,H)\), let us write \(n_{\bar G,\bar H} := |V(\bar G)| = |V(\bar H)|\), and define the set of doubly-stochastic alignment plans
\eq{
\mathcal{D}_{\bar G,\bar H}
:=
\left\{
P\in\mathbb{R}_{\ge 0}^{n_{\bar G,\bar H}\times n_{\bar G,\bar H}}
:\;
P\mathbf{1}=\mathbf{1},
\;
P^\top\mathbf{1}=\mathbf{1}
\right\}.
}
We work with unnormalized doubly-stochastic plans. Relative to the OT convention with uniform marginals \(1/n_{\bar G,\bar H}\), this amounts only to a constant rescaling and does not affect the distortion arguments below.

Let \(\delta_{\bar G,\bar H}:V(\bar G)\times V(\bar H)\to\mathbb{R}_{\ge 0}\) be a reference node-discrepancy function, and let \(\phi\) be a node encoder producing
embeddings, \(\phi_{\bar G}(i),\phi_{\bar H}(j)\in\mathbb{R}^p\).

\begin{assumption}[Node-level bi-Lipschitz encoder]
\label{ass:node_bilip}
There exist constants \(0 < c_V \le C_V < \infty\) such that for every pair
\((G,H)\in\mathcal G\times\mathcal G\), every equalized pair \((\bar G,\bar H)\)
induced by the fixed equalization rule, and all \(i\in V(\bar G)\), \(j\in V(\bar H)\),
\eq{
c_V\, \delta_{\bar G,\bar H}(i,j)
\;\le\;
\|\phi_{\bar G}(i)-\phi_{\bar H}(j)\|_2
\;\le\;
C_V\, \delta_{\bar G,\bar H}(i,j).
}
\end{assumption}

Assumption~\ref{ass:node_bilip} is the node-level analogue of
Assumption~\ref{ass:graph_bilip}. It requires the encoder to preserve a reference cross-graph node geometry up to controlled multiplicative distortion. This is the relevant hypothesis for matching-based estimators because these architectures do not
predict GED from a single graph-level distance alone; instead, they construct an alignment score from node-level comparison costs. The reference discrepancy \(\delta_{\bar G,\bar H}\) should be understood as a task-dependent surrogate cross-graph node cost induced by the graph family, the equalization scheme, and the matching architecture. As with Assumption~\ref{ass:graph_bilip}, this assumption is also consistent with recent geometric GNN theory, where FSW-GNN is shown to admit bi-Lipschitz guarantees with respect to WL-equivalent graph metrics and, under additional conditions, tree-based node metrics that quantify local message-passing geometry~\citep{sverdlov2025fsw, chuang}.

Now, let us define the node-cost matrices, \(C^\delta_{\bar G,\bar H}(i,j):=\delta_{\bar G,\bar H}(i,j)\), and \(C^\phi_{\bar G,\bar H}(i,j):=\|\phi_{\bar G}(i)-\phi_{\bar H}(j)\|_2\). For \(P\in\mathcal{D}_{\bar G,\bar H}\), let
\eq{
L_\delta^{\bar G,\bar H}(P)
:=
\langle P, C^\delta_{\bar G,\bar H}\rangle
=
\sum_{i\in V(\bar G)}\sum_{j\in V(\bar H)}
P_{ij}\,\delta_{\bar G,\bar H}(i,j),
}
\eq{
L_\phi^{\bar G,\bar H}(P)
:=
\langle P, C^\phi_{\bar G,\bar H}\rangle
=
\sum_{i\in V(\bar G)}\sum_{j\in V(\bar H)}
P_{ij}\,\|\phi_{\bar G}(i)-\phi_{\bar H}(j)\|_2.
}

\begin{lemma}[Plan-wise distortion of alignment costs]
\label{lem:planwise}
Under Assumption~\ref{ass:node_bilip}, for every pair \((G,H)\in\mathcal G\times\mathcal G\)
with equalized versions \((\bar G,\bar H)\), and every
\(P\in\mathcal{D}_{\bar G,\bar H}\),
\eq{
c_V\, L_\delta^{\bar G,\bar H}(P)
\;\le\;
L_\phi^{\bar G,\bar H}(P)
\;\le\;
C_V\, L_\delta^{\bar G,\bar H}(P).
}
\end{lemma}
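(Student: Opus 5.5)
The plan is a direct entrywise argument: the bi-Lipschitz inequality of Assumption~\ref{ass:node_bilip} holds for each pair of nodes, and a nonnegative linear combination preserves inequalities.

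Fix a pair \((G,H)\in\mathcal G\times\mathcal G\), its equalized version \((\bar G,\bar H)\) under the fixed equalization rule, and a plan \(P\in\mathcal{D}_{\bar G,\bar H}\). Assumption~\ref{ass:node_bilip} gives, for every \(i\in V(\bar G)\) and \(j\in V(\bar H)\), the entrywise bounds
\begin{align}
c_V\, C^\delta_{\bar G,\bar H}(i,j) \le C^\phi_{\bar G,\bar H}(i,j) \le C_V\, C^\delta_{\bar G,\bar H}(i,j).
\end{align}

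Next, I multiply each of these inequalities by \(P_{ij}\). Since \(P\) lies in \(\mathcal{D}_{\bar G,\bar H}\subseteq\mathbb{R}^{n\times n}_{\ge 0}\), every \(P_{ij}\ge 0\), so the direction of the inequalities is preserved. Summing over all \(i,j\) and using linearity of \(\langle P,\cdot\rangle\) then yields
\begin{align}
c_V\langle P,C^\delta_{\bar G,\bar H}\rangle \le \langle P,C^\phi_{\bar G,\bar H}\rangle \le C_V\langle P,C^\delta_{\bar G,\bar H}\rangle,
\end{align}
which is exactly \(c_V L_\delta^{\bar G,\bar H}(P)\le L_\phi^{\bar G,\bar H}(P)\le C_V L_\delta^{\bar G,\bar H}(P)\).

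There is no substantive obstacle; the argument is a single line. Two points need care. First, the assumption must hold for \emph{all} node pairs of the equalized graphs, including any dummy or virtual nodes, since \(P\) may place mass on them. This is guaranteed because Assumption~\ref{ass:node_bilip} is stated over \(V(\bar G)\times V(\bar H)\). Second, the row- and column-sum constraints on \(P\) are never used; only nonnegativity matters. The bound therefore extends verbatim to any nonnegative plan, including OT plans with uniform marginals \(1/n_{\bar G,\bar H}\), which matches the paper's remark that rescaling does not affect the distortion arguments.
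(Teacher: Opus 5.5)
Your proposal is correct and matches the paper's proof essentially line for line: you apply the entrywise bounds of Assumption~\ref{ass:node_bilip}, multiply by $P_{ij}\ge 0$, and sum over all node pairs. Your side remark that only nonnegativity of $P$ is used, and not the marginal constraints, is also accurate; it is implicit in the paper's argument.
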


Lemma~\ref{lem:planwise} tells us that the distortion incurred by the encoder at the level of individual node pairs lifts directly to every admissible alignment plan. Thus, before any optimization is performed, the encoder already constrains the full family of relaxed matching costs seen by the downstream solver.

\begin{theorem}[Distortion bound for alignment objectives]
\label{thm:alignment}
For each pair \((G,H)\in\mathcal G\times\mathcal G\), let \(\mathcal{R}_{\bar G,\bar H}:\mathcal{D}_{\bar G,\bar H}\to\mathbb{R}\) be any structural term independent of the encoder, and let \(\lambda\ge 0\). Define
\eq{
J_\delta^{\bar G,\bar H}(P)
:=
\mathcal{R}_{\bar G,\bar H}(P)+\lambda L_\delta^{\bar G,\bar H}(P), \quad
J_\phi^{\bar G,\bar H}(P)
:=
\mathcal{R}_{\bar G,\bar H}(P)+\lambda L_\phi^{\bar G,\bar H}(P).
}
Then, for every pair \((G,H)\in\mathcal G\times\mathcal G\),
\[
\min_{P\in\mathcal{D}_{\bar G,\bar H}}
\Bigl(
\mathcal{R}_{\bar G,\bar H}(P)+\lambda c_V L_\delta^{\bar G,\bar H}(P)
\Bigr)
\;\le\;
\min_{P\in\mathcal{D}_{\bar G,\bar H}} J_\phi^{\bar G,\bar H}(P)
\;\le\;
\min_{P\in\mathcal{D}_{\bar G,\bar H}}
\Bigl(
\mathcal{R}_{\bar G,\bar H}(P)+\lambda C_V L_\delta^{\bar G,\bar H}(P)
\Bigr).
\]
\end{theorem}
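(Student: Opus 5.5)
The argument is a pointwise comparison of objectives followed by the monotonicity of the minimum over the fixed feasible set \(\mathcal{D}_{\bar G,\bar H}\). Fix a pair \((G,H)\) and its equalized version \((\bar G,\bar H)\). Since \(\lambda\ge 0\), multiplying the sandwich of Lemma~\ref{lem:planwise} by \(\lambda\) preserves the inequalities. Adding the encoder-independent term \(\mathcal{R}_{\bar G,\bar H}(P)\), which is the same in all three expressions, gives for every \(P\in\mathcal{D}_{\bar G,\bar H}\):
\[
\mathcal{R}_{\bar G,\bar H}(P)+\lambda c_V L_\delta^{\bar G,\bar H}(P)
\;\le\;
J_\phi^{\bar G,\bar H}(P)
\;\le\;
\mathcal{R}_{\bar G,\bar H}(P)+\lambda C_V L_\delta^{\bar G,\bar H}(P).
\]

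Next, pass to minima. If \(f\le g\) pointwise on a common set, then \(\inf f\le\inf g\): for any \(P\), \(\inf f\le f(P)\le g(P)\), and we take the infimum over \(P\). Applying this to the left inequality and then to the right one yields the claimed chain.

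The only step requiring care is that the minima are actually attained, since the statement writes \(\min\) rather than \(\inf\). The set \(\mathcal{D}_{\bar G,\bar H}\) is the Birkhoff polytope, which is nonempty and compact. The maps \(L_\delta\) and \(L_\phi\) are linear in \(P\), hence continuous. If \(\mathcal{R}_{\bar G,\bar H}\) is continuous, as in FGW-type quadratic structural terms, the Weierstrass theorem gives attainment for all three objectives. For fully arbitrary \(\mathcal{R}\), the same argument goes through verbatim with infima, provided they are finite; this holds, for instance, if \(\mathcal{R}\) is bounded below on the polytope. I would state this caveat explicitly and read \(\min\) as \(\inf\) in that generality.

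Beyond this point the proof is routine. The main obstacle is simply ensuring that the sign of \(\lambda\) is used, and that the same feasible set appears on all sides, which holds because the equalization rule is fixed.
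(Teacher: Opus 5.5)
Your proposal is correct and follows the paper's own proof essentially step for step: the plan-wise sandwich from Lemma~\ref{lem:planwise}, scaled by \(\lambda\ge 0\), shifted by the common term \(\mathcal{R}_{\bar G,\bar H}(P)\), and then monotonicity of the minimum over the same feasible set \(\mathcal{D}_{\bar G,\bar H}\). Your remark on attainment is a sensible refinement that the paper leaves implicit: the minima exist when \(\mathcal{R}_{\bar G,\bar H}\) is continuous, and otherwise the argument still holds with \(\min\) read as \(\inf\).
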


For alignment-based neural GED estimators, Theorem~\ref{thm:alignment} shows that encoder distortion propagates all the way to the \emph{optimized} alignment score, not merely to individual pairwise costs. We have kept the theorem general in the structural term \(\mathcal{R}_{\bar G,\bar H}\), so as to cover architectures in which node costs are combined with adjacency consistency, transport regularization, or other structural penalties. In brief, it implies that improved node geometry improves the faithfulness of the optimization problem solved by the matching module.

\begin{corollary}[Comparability transfer for alignment surrogates]
\label{cor:alignment_transfer}
Assume, in addition, that, for the fixed equalization rule and reference discrepancy above, there exist constants \(0<\eta_{1,\mathcal G}\le \eta_{2,\mathcal G}<\infty\), such that, for every pair \((G,H)\in\mathcal G\times\mathcal G\),
\eq{
\eta_{1,\mathcal G}\, d_{\mathrm{GED}}(G,H)
\;\le\;
\min_{P\in\mathcal{D}_{\bar G,\bar H}} J_\delta^{\bar G,\bar H}(P)
\;\le\;
\eta_{2,\mathcal G}\, d_{\mathrm{GED}}(G,H).
}
Moreover, assume that \(\mathcal{R}_{\bar G,\bar H}(P)\ge 0\) for all
\((G,H)\in\mathcal G\times\mathcal G\) and all \(P\in\mathcal{D}_{\bar G,\bar H}\).
Let \(\underline c := \min\{1,c_V\}, \quad \overline C := \max\{1,C_V\}\). Then, for every pair \((G,H)\in\mathcal G\times\mathcal G\),
\eq{
\underline c\,\eta_{1,\mathcal G}\, d_{\mathrm{GED}}(G,H)
\;\le\;
\min_{P\in\mathcal{D}_{\bar G,\bar H}} J_\phi^{\bar G,\bar H}(P)
\;\le\;
\overline C\,\eta_{2,\mathcal G}\, d_{\mathrm{GED}}(G,H).
}
\end{corollary}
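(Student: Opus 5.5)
The plan is to sandwich \(J_\phi\) between \(\underline c\,J_\delta\) and \(\overline C\,J_\delta\) plan-wise, take minima over \(\mathcal{D}_{\bar G,\bar H}\), and then chain with the assumed comparability of \(\min_P J_\delta\) to \(d_{\mathrm{GED}}\). Fix a pair \((G,H)\) and an arbitrary \(P\in\mathcal{D}_{\bar G,\bar H}\).

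The first step is to invoke Lemma~\ref{lem:planwise}, which gives \(c_V L_\delta(P)\le L_\phi(P)\le C_V L_\delta(P)\). The two terms of \(J_\phi\) are then handled separately. For the structural term, nonnegativity of \(\mathcal{R}_{\bar G,\bar H}(P)\) gives
\[
\underline c\,\mathcal{R}(P)\le \mathcal{R}(P)\le \overline C\,\mathcal{R}(P).
\]
For the cost term, since \(L_\delta(P)\ge 0\) (entries of \(P\) and \(\delta\) are nonnegative) and \(\lambda\ge 0\),
\[
\underline c\,\lambda L_\delta(P)\le c_V\lambda L_\delta(P)\le \lambda L_\phi(P)\le C_V\lambda L_\delta(P)\le \overline C\,\lambda L_\delta(P).
\]
Adding these yields the plan-wise sandwich
\[
\underline c\, J_\delta(P)\le J_\phi(P)\le \overline C\, J_\delta(P).
\]

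Next, minimize over the compact set \(\mathcal{D}_{\bar G,\bar H}\). Positive scalars commute with \(\min\), and a plan-wise inequality between functions passes to their minima: take the minimizer of the larger function and evaluate the smaller one there. This gives
\[
\underline c\min_P J_\delta\le \min_P J_\phi\le \overline C\min_P J_\delta.
\]
Attainment of the minima is where I expect the only subtlety. If \(\mathcal{R}\) is not assumed continuous, I would phrase everything with infima; the argument is unchanged, and the statement's \(\min\) is then read as \(\inf\). Finally, substitute the assumed bounds \(\eta_{1,\mathcal G}d_{\mathrm{GED}}\le \min_P J_\delta\le \eta_{2,\mathcal G}d_{\mathrm{GED}}\), multiplying by the positive constants \(\underline c\) and \(\overline C\), to obtain the claim.

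The main obstacle is conceptual rather than technical. Theorem~\ref{thm:alignment} alone scales only the cost term, so it cannot be compared directly with \(J_\delta\). This is why the nonnegativity of \(\mathcal{R}\) and the clipped constants \(\min\{1,c_V\}\) and \(\max\{1,C_V\}\) are needed: they let the unscaled structural term be absorbed into a common multiplicative factor.
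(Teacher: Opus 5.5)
Your proof is correct and is essentially the paper's argument. Like the paper, you use nonnegativity of $\mathcal R_{\bar G,\bar H}$ and $L_\delta^{\bar G,\bar H}$, together with $\underline c\le 1$, $\underline c\le c_V$, $\overline C\ge 1$ and $\overline C\ge C_V$, to absorb the unscaled structural term. The only difference is the order of steps: the paper first applies Theorem~\ref{thm:alignment} and then compares the intermediate objectives $\mathcal R+\lambda c_V L_\delta$ and $\mathcal R+\lambda C_V L_\delta$ plan-wise with $\underline c\,J_\delta$ and $\overline C\,J_\delta$, whereas you build the full plan-wise sandwich directly from Lemma~\ref{lem:planwise} and minimize once.
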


Corollary~\ref{cor:alignment_transfer} makes explicit how node-level geometry connects back to GED once the reference alignment objective is itself a valid surrogate for edit distance on the graph family under study. The statement isolates two
ingredients: the quality of the alignment surrogate \(J_\delta^{\bar G,\bar H}\), through \(\eta_{1,\mathcal G}\) and
\(\eta_{2,\mathcal G}\), and the quality of the encoder geometry, through \(c_V\) and \(C_V\), enabling us to cleanly analyze neural GED models that infer edit distance through relaxed matching objectives.

The graph-level and alignment-level results are complementary. For graph similarity predictors, encoder geometry determines how graph-pair embeddings reflect a continuous surrogate for GED. For matching-based estimators, it determines node costs and the alignment objective used to estimate GED. In both cases, we have shown that metric conditioning can improve performance, though the intervention differs across architectures. In Appendix~\ref{app:connect}, we show how the graph-level results apply to predictors such as GREED~\citep{ranjan2022greed} and GraSP~\citep{zheng2025grasp}, while the alignment-level results apply to estimators such as GEDGNN~\citep{piao2023computing} and GraphEDX~\citep{jain2024graph}. Together, these results provide a unified geometric framework explaining why better-conditioned encoder geometry can improve neural GED estimation across architectures without changing downstream prediction or matching mechanisms, as shown empirically in the next section.

%%%%%%%%%%%%%%%%%%%%%%%%%%%%%%%%%%%%
%% Experiments & Discussion
%%%%%%%%%%%%%%%%%%%%%%%%%%%%%%%%%%%%

\section{Experiments}
\label{sec:exp}

We evaluate whether improving encoder geometry can systematically improve neural GED estimation across both graph-level similarity predictors and matching/alignment-based estimators. To do so, we replace the original message-passing encoder and aggregation layers in each baseline with FSW-GNN~\citep{sverdlov2025fsw} layers while preserving the downstream prediction or matching module as closely as possible. This design isolates encoder geometry from the rest of the pipeline and aligns the empirical study with the distinction in Section~\ref{sec:theory} between graph-level surrogates and alignment-based estimators.

\paragraph{Datasets.}
All experiments follow the leak-free evaluation protocol introduced in Gelato~\citep{pellizzoni2025gelato} and the benchmark guidance of \citet{roy2025position} to prevent train--test leakage through isomorphic or structurally duplicated graphs. In the main text, we report primary results on four representative datasets: AIDS, IMDB-16, \textsc{molhiv}-16, and \textsc{code2}-22, covering molecular, social-network, and code graphs. Results on LINUX and ZINC-16, together with fuller metric breakdowns and additional implementation details, are deferred to Appendix~\ref{app:add-exp}. Unless otherwise stated, we use node features when available and omit edge features, since several of the compared baselines do not support them consistently.

\paragraph{Metrics.}
In the results here, we emphasize mean absolute error (MAE) and Kendall's rank correlation $\tau$. MAE measures numerical fidelity to the target GED, while $\tau$ captures the quality of pairwise ordering, which is especially relevant for retrieval and nearest-neighbor use cases. Appendix~\ref{app:add-exp} additionally reports Spearman's $\rho$, P@10, and P@20, which confirm the same trends more broadly.

\paragraph{Baselines and BL variants.}
Our baselines span the two architectural families analyzed in Section~\ref{sec:theory}. For graph-similarity predictors, we consider SimGNN~\citep{bai2020simgnn}, GMN~\citep{li2019graph}, EGSC~\citep{egsc}, ERIC~\citep{eric}, GREED~\citep{ranjan2022greed}, and GraSP~\citep{zheng2025grasp}. For matching/alignment-based estimators, we consider GEDGNN~\citep{piao2023computing} and GraphEdX~\citep{jain2024graph}. For each method, the suffix ``BL'' denotes the geometry-aware variant obtained by replacing the original encoder and aggregation layers with FSW-GNN layers while leaving the downstream head and training objective unchanged whenever possible.

\paragraph{Training details.}
We use author-recommended hyperparameters whenever possible and intentionally avoid aggressive retuning after encoder replacement, so that the observed gains can be attributed primarily to the change in encoder geometry rather than to a larger model-specific search budget. Original and BL variants are matched as closely as possible in hidden dimension, optimizer, training budget, early stopping, and data splits; unless a method requires a documented exception, BL models use hidden dimension $64$. All results are reported as mean $\pm$ standard deviation over $10$ runs. More implementation details are given in Appendix~\ref{app:add-exp}.

\subsection{Main results}

Table~\ref{tab:main-results} reports the primary comparison between each baseline and its BL counterpart on AIDS, IMDB-16, \textsc{molhiv}-16, and \textsc{code2}-22. The central empirical question is whether improving encoder geometry yields consistent gains across model families and evaluation criteria. The answer is affirmative: across all eight baselines and all four datasets, replacing the encoder with its BL counterpart improves both MAE and $\tau$. This trend holds for both architectural families studied in our theory. Among graph-level predictors, GraSP-BL attains the best MAE on all four main-text datasets, improving over GraSP. At the same time, EGSC-BL and ERIC-BL remain especially strong on ranking metrics, indicating that once the encoder geometry is improved, different downstream heads can still induce different trade-offs between regression fidelity and ranking fidelity. For the matching-based methods, the picture is similar but more moderate: GraphEdX-BL and GEDGNN-BL improve over their non-BL counterparts on every reported dataset and metric, showing that the encoder replacement helps not only pooled graph predictors but also models that estimate GED through learned alignment. Our results and ablations in Appendix~\ref{app:add-exp} further affirm this conclusion.

\begin{table}[t]
\caption{
GED prediction performance across datasets. Due to space limits, results on LINUX, ZINC-16, and additional metrics ($\rho$, P@10, P@20) are deferred to Appendix~\ref{app:add-exp}. Lower is better for MAE and higher is better for $\tau$.
}
\label{tab:main-results}
\centering
\small

\begin{adjustbox}{width=\textwidth}
\begin{tabular}{lcccccccc}
\toprule

\multirow{2.4}{*}{Method}
& \multicolumn{2}{c}{AIDS}
& \multicolumn{2}{c}{IMDB-16}
& \multicolumn{2}{c}{\textsc{molhiv}-16}
& \multicolumn{2}{c}{\textsc{code2}-22} \\

\cmidrule(lr){2-3}
\cmidrule(lr){4-5}
\cmidrule(lr){6-7}
\cmidrule(lr){8-9}

& MAE$\downarrow$ & $\tau\uparrow$
& MAE$\downarrow$ & $\tau\uparrow$
& MAE$\downarrow$ & $\tau\uparrow$
& MAE$\downarrow$ & $\tau\uparrow$ \\

\midrule

SimGNN
& $0.691\scriptstyle{\pm0.018}$ & $0.721\scriptstyle{\pm0.011}$
& $0.912\scriptstyle{\pm0.051}$ & $0.748\scriptstyle{\pm0.014}$
& $0.792\scriptstyle{\pm0.031}$ & $0.662\scriptstyle{\pm0.011}$
& $0.781\scriptstyle{\pm0.022}$ & $0.694\scriptstyle{\pm0.013}$ \\

SimGNN-BL
& $0.298\scriptstyle{\pm0.011}$ & $0.748\scriptstyle{\pm0.010}$
& $0.296\scriptstyle{\pm0.013}$ & $0.781\scriptstyle{\pm0.012}$
& $0.362\scriptstyle{\pm0.018}$ & $0.689\scriptstyle{\pm0.010}$
& $0.221\scriptstyle{\pm0.011}$ & $0.731\scriptstyle{\pm0.011}$ \\

GMN
& $0.676\scriptstyle{\pm0.019}$ & $0.729\scriptstyle{\pm0.011}$
& $0.894\scriptstyle{\pm0.054}$ & $0.756\scriptstyle{\pm0.013}$
& $0.775\scriptstyle{\pm0.029}$ & $0.671\scriptstyle{\pm0.010}$
& $0.763\scriptstyle{\pm0.021}$ & $0.702\scriptstyle{\pm0.012}$ \\

GMN-BL
& $0.317\scriptstyle{\pm0.012}$ & $0.754\scriptstyle{\pm0.009}$
& $0.388\scriptstyle{\pm0.012}$ & $0.786\scriptstyle{\pm0.011}$
& $0.391\scriptstyle{\pm0.017}$ & $0.697\scriptstyle{\pm0.009}$
& $0.238\scriptstyle{\pm0.010}$ & $0.738\scriptstyle{\pm0.011}$ \\

EGSC
& $0.284\scriptstyle{\pm0.010}$ & $0.792\scriptstyle{\pm0.008}$
& $0.451\scriptstyle{\pm0.029}$ & $0.829\scriptstyle{\pm0.009}$
& $0.271\scriptstyle{\pm0.016}$ & $0.756\scriptstyle{\pm0.008}$
& $0.189\scriptstyle{\pm0.009}$ & $0.801\scriptstyle{\pm0.009}$ \\

EGSC-BL
& $0.132\scriptstyle{\pm0.006}$ & $0.809\scriptstyle{\pm0.007}$
& $0.058\scriptstyle{\pm0.007}$ & $0.848\scriptstyle{\pm0.008}$
& $0.139\scriptstyle{\pm0.008}$ & $0.772\scriptstyle{\pm0.007}$
& $0.071\scriptstyle{\pm0.004}$ & $0.818\scriptstyle{\pm0.008}$ \\

ERIC
& $0.312\scriptstyle{\pm0.010}$ & $0.781\scriptstyle{\pm0.008}$
& $0.492\scriptstyle{\pm0.031}$ & $0.818\scriptstyle{\pm0.010}$
& $0.294\scriptstyle{\pm0.017}$ & $0.742\scriptstyle{\pm0.009}$
& $0.201\scriptstyle{\pm0.010}$ & $0.789\scriptstyle{\pm0.010}$ \\

ERIC-BL
& $0.148\scriptstyle{\pm0.006}$ & $0.801\scriptstyle{\pm0.007}$
& $0.063\scriptstyle{\pm0.008}$ & $0.839\scriptstyle{\pm0.008}$
& $0.151\scriptstyle{\pm0.008}$ & $0.763\scriptstyle{\pm0.008}$
& $0.079\scriptstyle{\pm0.005}$ & $0.811\scriptstyle{\pm0.009}$ \\

GREED
& $0.794\scriptstyle{\pm0.021}$ & $0.677\scriptstyle{\pm0.014}$
& $1.054\scriptstyle{\pm0.066}$ & $0.721\scriptstyle{\pm0.017}$
& $0.861\scriptstyle{\pm0.036}$ & $0.618\scriptstyle{\pm0.014}$
& $0.998\scriptstyle{\pm0.028}$ & $0.651\scriptstyle{\pm0.016}$ \\

GREED-BL
& $0.612\scriptstyle{\pm0.018}$ & $0.702\scriptstyle{\pm0.012}$
& $0.684\scriptstyle{\pm0.047}$ & $0.749\scriptstyle{\pm0.015}$
& $0.694\scriptstyle{\pm0.029}$ & $0.646\scriptstyle{\pm0.012}$
& $0.721\scriptstyle{\pm0.022}$ & $0.689\scriptstyle{\pm0.014}$ \\

GraSP
& $\mathbf{0.061\scriptstyle{\pm0.004}}$ & $0.654\scriptstyle{\pm0.012}$
& $1.205\scriptstyle{\pm0.071}$ & $0.538\scriptstyle{\pm0.018}$
& $\mathbf{0.139\scriptstyle{\pm0.008}}$ & $0.724\scriptstyle{\pm0.016}$
& $\mathbf{0.104\scriptstyle{\pm0.006}}$ & $0.571\scriptstyle{\pm0.017}$ \\

GraSP-BL
& $\mathbf{0.028\scriptstyle{\pm0.002}}$ & $0.718\scriptstyle{\pm0.010}$
& $0.114\scriptstyle{\pm0.009}$ & $0.792\scriptstyle{\pm0.010}$
& $\mathbf{0.069\scriptstyle{\pm0.004}}$ & $0.754\scriptstyle{\pm0.014}$
& $\mathbf{0.032\scriptstyle{\pm0.002}}$ & $0.681\scriptstyle{\pm0.012}$ \\

\midrule

GraphEDX
& $0.572\scriptstyle{\pm0.013}$ & $0.744\scriptstyle{\pm0.010}$
& $3.180\scriptstyle{\pm0.185}$ & $0.771\scriptstyle{\pm0.012}$
& $0.638\scriptstyle{\pm0.023}$ & $0.698\scriptstyle{\pm0.010}$
& $0.429\scriptstyle{\pm0.015}$ & $0.732\scriptstyle{\pm0.011}$ \\

GraphEDX-BL
& $0.537\scriptstyle{\pm0.012}$ & $0.768\scriptstyle{\pm0.009}$
& $2.842\scriptstyle{\pm0.162}$ & $0.803\scriptstyle{\pm0.010}$
& $0.600\scriptstyle{\pm0.021}$ & $0.726\scriptstyle{\pm0.009}$
& $0.394\scriptstyle{\pm0.013}$ & $0.761\scriptstyle{\pm0.010}$ \\

GEDGNN
& $1.782\scriptstyle{\pm0.044}$ & $0.432\scriptstyle{\pm0.021}$
& $0.348\scriptstyle{\pm0.069}$ & $0.281\scriptstyle{\pm0.029}$
& $4.914\scriptstyle{\pm0.030}$ & $0.298\scriptstyle{\pm0.022}$
& $2.273\scriptstyle{\pm0.080}$ & $0.317\scriptstyle{\pm0.025}$ \\

GEDGNN-BL
& $1.421\scriptstyle{\pm0.037}$ & $0.471\scriptstyle{\pm0.018}$
& $0.221\scriptstyle{\pm0.042}$ & $0.352\scriptstyle{\pm0.025}$
& $3.982\scriptstyle{\pm0.028}$ & $0.334\scriptstyle{\pm0.020}$
& $1.624\scriptstyle{\pm0.061}$ & $0.372\scriptstyle{\pm0.022}$ \\

\bottomrule
\end{tabular}
\end{adjustbox}
\end{table}

\subsection{Faithfulness Case Study: Assessing impact without model training}

Figure~\ref{fig:faithfulness} helps explain the downstream gains at the representation level. Even without any task-specific training, replacing the original GraSP encoder with its BL counterpart reduces the latent-distance MAE on every dataset: by $53.7\%$ on AIDS, $25.6\%$ on LINUX, $90.5\%$ on IMDB-16, $50.1\%$ on \textsc{molhiv}-16, $69.6\%$ on \textsc{code2}-22, and $33.4\%$ on ZINC-16. At the same time, the Spearman correlation between latent distances and graph similarity increases on all six datasets, by $5.3\%$, $5.0\%$, $23.9\%$, $4.8\%$, $9.4\%$, and $2.2\%$, respectively. Thus, before any end-to-end optimization, the BL encoder already produces a latent geometry that is both more quantitatively faithful to GED and more reliable for ranking graph pairs. This outcome matches the benchmark results in Table~\ref{tab:main-results}. GraSP-BL improves over GraSP on every dataset and on every metric reported in the main paper and appendix, with especially large gains on IMDB-16 and \textsc{code2}-22, precisely where the faithfulness gap is largest. The empirical picture is therefore coherent: when the embedding geometry better preserves structural dissimilarity, the induced GED surrogate becomes easier to calibrate and more reliable for retrieval and ranking.

\begin{wrapfigure}{r}{0.5\textwidth}
    \centering
    \vspace{-10pt}
    \includegraphics[width=0.48\textwidth]{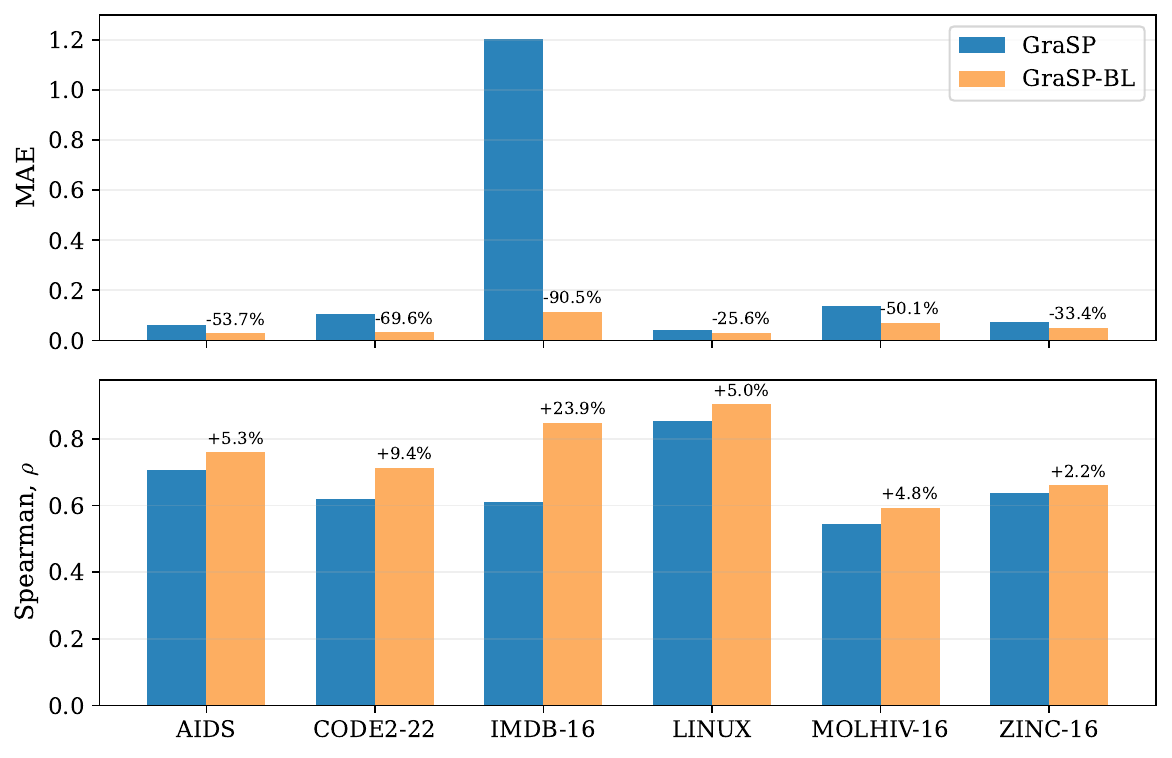}
    \vspace{-10pt}
    \caption{
    Faithfulness analysis of untrained GraSP variants. Across all datasets, the Bi-Lipschitz GNN better aligns embedding geometry with graph dissimilarity than the standard GNN (RGGC~\citep{rggc}).
    }
    \label{fig:faithfulness}
    \vspace{-10pt}
\end{wrapfigure}

\subsection{Connecting to theory}

Our empirical findings align closely with the theory in Section~\ref{sec:theory}. For graph-level predictors, the theory states that a bi-Lipschitz encoder yields a tighter and more stable surrogate for GED; Figure~\ref{fig:faithfulness} measures this effect directly by showing that latent distances track graph dissimilarity more faithfully even before training. For matching-based estimators, the theory predicts that improved node-level geometry should propagate to encoder-induced alignment costs and hence to the optimized matching objective. Empirically, both effects are visible: graph-level methods show especially strong improvements, while alignment-based methods also improve consistently once the encoder is replaced. Additional results and discussion in Appendix~\ref{app:add-exp} further validate this.

%%%%%%%%%%%%%%%%%%%%%%%%%%%%%%%%%%%%
%% Conclusion
%%%%%%%%%%%%%%%%%%%%%%%%%%%%%%%%%%%%

\section{Conclusion}
\label{sec:concl}

We developed a geometric framework for neural GED estimation showing that, on the finite 1-WL regime considered in this work, improved encoder geometry yields better-conditioned surrogates for both graph-similarity predictors and matching/alignment-based estimators. Instantiating this perspective with FSW-GNN~\citep{sverdlov2025fsw} as a drop-in bi-Lipschitz encoder replacement, we observed consistent gains across existing neural GED architectures in prediction, ranking, retrieval, and transfer, while the faithfulness analysis and ablations support the interpretation that these gains arise from stronger representation geometry rather than only downstream optimization. Overall, these results position geometry-aware encoders as a principled design choice for neural graph matching pipelines, while also making clear that they are not a complete solution: the present theory is restricted to the 1-WL setting and dataset-dependent surrogate comparability, and the current experiments do not cover edge labels, richer edit-cost settings, or the full efficiency--expressivity design space. We discuss these limitations and related future directions in Appendix~\ref{app:limitations}, and broader-impact considerations in Appendix~\ref{app:broader-impact}.

% \begin{ack}
% Use unnumbered first level headings for the acknowledgments. All acknowledgments
% go at the end of the paper before the list of references. Moreover, you are required to declare
% funding (financial activities supporting the submitted work) and competing interests (related financial activities outside the submitted work).
% More information about this disclosure can be found at: \url{https://neurips.cc/Conferences/2026/PaperInformation/FundingDisclosure}.

% Do {\bf not} include this section in the anonymized submission, only in the final paper. You can use the \texttt{ack} environment provided in the style file to automatically hide this section in the anonymized submission.
% \end{ack}

% \section*{References}
\bibliography{neurips_2026.bib}
\bibliographystyle{unsrtnat} % plainnat

%%%%%%%%%%%%%%%%%%%%%%%%%%%%%%%%%%%%%%%%%%%%%%%%%%%%%%%%%%%%

\appendix

% \section{Technical appendices and supplementary material}
% Technical appendices with additional results, figures, graphs, and proofs may be submitted with the paper submission before the full submission deadline (see above). You can upload a ZIP file for videos or code, but do not upload a separate PDF file for the appendix. There is no page limit for the technical appendices. 

% Note: Think of the appendix as ``optional reading'' for reviewers. The paper must be able to stand alone without the appendix; for example, adding critical experiments that support the main claims to an appendix is inappropriate. 

\section{Broader Impact}
\label{app:broader-impact}

This work is primarily foundational. It studies how encoder geometry affects the quality of neural graph edit distance estimation and related graph matching surrogates, rather than proposing a directly deployed application system. The most immediate positive impact is methodological: a clearer geometric understanding of when neural graph matching pipelines should be expected to produce more faithful distances, rankings, and alignments. In application areas where graph matching is already used, such as molecular retrieval, program analysis, and structured search, improved metric faithfulness may translate into more reliable candidate ranking and more interpretable matching behavior.

At the same time, stronger graph matching can have negative downstream uses. Better graph similarity estimation may make it easier to link or compare sensitive relational data, to match proprietary or copyrighted structured artifacts such as code, or to support surveillance-style analysis of networked data if such systems are used without appropriate governance. These risks are indirect in the present paper, since we do not introduce a sensitive new dataset or a high-risk released model, but they are worth acknowledging as graph matching systems become more accurate.

There is also a resource trade-off. Geometry-aware encoders such as the FSW-based models~\citep{sverdlov2025fsw} studied here can incur additional computational cost relative to standard message-passing baselines. From a societal perspective, this means that improved faithfulness should be weighed against efficiency, accessibility, and environmental cost. One practical implication is that future work should aim not only for stronger guarantees, but also for more efficient bi-Lipschitz architectures and transparent reporting of compute-performance trade-offs.

\section{Limitations}
\label{app:limitations}
Our claims are intentionally calibrated to the regime analyzed in Section~\ref{sec:theory}. The theory is developed on finite 1-WL-separable graph collections and uses the doubly-stochastic surrogate geometry as an intermediate bridge to GED. As a result, the guarantees explain when improved encoder geometry should improve conditioning of neural GED estimators, but they do not imply exact GED recovery, universal constants, or uniform guarantees outside the considered graph family and benchmark setting.

A second limitation concerns the scope of the target similarity notion. The present analysis is centered on the GED setting studied in the paper and on the surrogate geometries that naturally connect to current encoder constructions. Extending the framework to variable edit-cost settings appears plausible in principle, but requires a more careful comparison between the target edit geometry and the surrogate quantities used by the downstream estimator. Related objectives such as maximum common subgraph (MCS), subgraph edit distance, or other partial-matching notions would likewise require task-specific extensions of the theory.

A third limitation is architectural. Our framework directly covers graph-similarity predictors and matching/alignment-based estimators whose key quantities can be expressed in terms of graph-level or node-level embeddings. This includes a broad range of current neural GED pipelines, but it does not yet cleanly capture autoregressive neural-combinatorial approaches such as Gelato~\citep{pellizzoni2025gelato}, where predictions are produced through sequential decision-making rather than a single static surrogate. A useful future direction is to connect encoder geometry to per-step guarantees for such decoders or to partial-alignment objectives arising along the decoding trajectory.

The experiments have complementary limitations. To maintain comparability across a broad baseline suite, we use node attributes when available but do not use edge labels, since several compared baselines do not support them consistently. In addition, we intentionally limit hyperparameter retuning after encoder replacement in order to isolate the effect of encoder geometry. This strengthens the causal interpretation of the comparison, but it may understate the best performance achievable by some geometry-aware variants after full method-specific tuning.

There are also efficiency limitations. FSW-GNN~\citep{sverdlov2025fsw} provides an explicit and analyzable realization of the bi-Lipschitz perspective, but its aggregation mechanism is computationally more expensive than standard sum-based message passing. Accordingly, the present paper should be read as establishing a geometry-performance trade-off, not as claiming that the current FSW instantiation is always the most efficient choice. Future work should therefore expand the design space of bi-Lipschitz graph encoders, improve their computational efficiency, and determine which geometric guarantees are most important empirically.

Finally, the current theory remains tied to the 1-WL expressive regime. This already covers many of the benchmarks and encoder families used in contemporary neural GED work, but it is not the end of the story. Extending the analysis to higher-WL or otherwise more expressive graph representation regimes is an important open direction for understanding when stronger geometric guarantees can continue to improve neural graph matching beyond the current setting~\citep{moscatelli20253wl}.

\section{Related Neural GED Methods}
\label{app:more_related_work}

Neural GED methods can be broadly grouped into \emph{graph-level prediction pipelines}, \emph{matching-based estimators}, and a complementary line of work focused on \emph{edit-cost learning}. Across these families, prior work has introduced strong architectural innovations, but most existing theory remains tied to specific components -- such as prediction heads, matching routines, or surrogate objectives -- rather than the geometry of the encoder itself, as we discuss in more detail below.

\subsection{Graph-level prediction pipelines}
Possibly the largest class of neural graph matching methods treat GED estimation primarily as a graph-pair prediction problem. Early methods such as SimGNN~\citep{bai2020simgnn} showed that accurate GED approximation is possible without explicit combinatorial search by combining graph-level embeddings with histogram features and a neural tensor network~\citep{ntn}. This design was influential because it showed that one can learn accurate GED surrogates without explicit combinatorial search, but its justification is largely architectural and empirical -- it did not provide a theory relating the learned graph representation to a graph metric or to GED through explicit distortion bounds. Subsequent models such as GraphSim~\citep{bai2020learning}, EGSC~\citep{egsc}, ERIC~\citep{eric}, TaGSim~\citep{tagsim}, and H2MN~\citep{h2mn} introduced stronger fusion modules, CNN-based similarity learning, alignment-aware regularization, edit-type decomposition, and higher-order interaction modeling. These methods often improve empirical performance through better architectural design, but their analyses are also largely empirical or specific to individual modules.

Graph Matching Networks (GMN)~\citep{li2019graph} moved further toward pair-dependent reasoning by introducing cross-graph attention during message passing. Architecturally, this is more expressive than independent graph embeddings because the representation of each graph depends on the paired graph itself. However, the corresponding theory is again tied to the matching architecture rather than to a quantitative notion of encoder faithfulness to edit geometry. In particular, GMN gives a mechanism for improved pairwise comparison, but not a graph-level or node-level distortion theory explaining when Euclidean or matching costs computed from learned embeddings should serve as well-conditioned GED surrogates.

GREED~\citep{ranjan2022greed} is the closest prior theoretical work among graph-level methods. It uses a pair-independent Siamese encoder with a constrained prediction head that ensures metric-like GED outputs and triangle inequality guarantees for SED. This is a meaningful output-space guarantee, but it operates at the prediction layer and does not analyze whether the encoder itself preserves graph geometry. Our perspective is complementary: we study how encoder geometry influences the quality of graph-level GED surrogates before any prediction head is applied.

GraSP~\citep{zheng2025grasp} occupies an important intermediate point in this family. It avoids expensive cross-graph interactions while using positional encodings, residual graph backbones, and multi-scale pooling to produce graph embeddings for GED and MCS prediction. Its main theoretical contribution is an expressiveness guarantee showing that the architecture can surpass the \(1\)-WL test. While important, this addresses graph distinguishability rather than whether embedding distances quantitatively preserve edit-relevant geometry.

\subsection{Matching and alignment based estimators}
A second class of methods estimates GED through explicit alignment mechanisms that more closely resemble the edit process itself. Methods such as ISONET~\citep{isonet} formulate graph comparison through differentiable matching procedures, often using assignment relaxations or Sinkhorn-style alignment mechanisms to produce interpretable correspondences. Recent fused optimal transport formulations such as FGWAlign~\citep{tang2025fused} similarly combine structural and feature discrepancies in a unified transport objective.

GEDGNN~\citep{piao2023computing} is representative of this family. It learns node embeddings for the graph pair and then builds two cross-matrix modules: one predicts a matching matrix and the other predicts a cost matrix. The final GED estimate is obtained by weighting costs using the learned matching scores, followed by post-processing with \(k\)-best matching to recover an edit path. This is a strong method-design contribution and improves interpretability relative to pure regression. Its theoretical analysis is correspondingly tied to the matching-matrix extraction procedure and the post-processing stage. However, what it does not provide is an account of how the geometry of the node encoder propagates into the relaxed matching costs and hence into the optimized alignment objective. Our matching-based theory is designed to fill precisely this gap.

GraphEDX~\citep{jain2024graph} takes a more optimization-centric route. It pads graphs to equal size, uses contextual node embeddings, learns a soft node permutation through a differentiable Gumbel-Sinkhorn alignment generator, derives node-pair alignments, and then replaces the original quadratic assignment objective by differentiable neural set-divergence surrogates. This is one of the most principled formulations in recent neural GED work, particularly for general-cost settings. Still, its theory is centered on the surrogate objective and the alignment construction, not on the quantitative geometry of the encoder. In the language of our theory, GraphEDX specifies a sophisticated matching objective; our contribution is to analyze how replacing the encoder by one with stronger geometric guarantees can improve the faithfulness of objectives of exactly this kind.

Recent optimal transport and graph-matching formulations, including fused Gromov--Wasserstein approaches such as FGWAlign~\citep{tang2025fused} or GWD~\citep{xugromov}, also move the field in a more principled direction. They incorporate structural and feature discrepancies into a unified alignment objective and often operate in the equalized-node setting used in practice for GED approximation. However, even in these formulations, the representation map supplying node or graph features is typically not analyzed through explicit graph-level or node-level distortion guarantees. Our theory is intended to be compatible with such methods rather than to replace them: it provides a geometric lens through which one can reason about why some encoders may make these relaxed alignment objectives better behaved.

\subsection{Learning edit costs and alternative paradigms}
Another active direction modifies the edit-cost model itself. TaGSim~\citep{tagsim} and related works emphasize fine-grained or type-aware cost design, often reducing the burden of exact GED supervision by generating synthetic training pairs or decomposing GED into edit-type-aware targets. These approaches can substantially improve practical accuracy, especially when labels and edit types matter, but the guiding principle is typically cost engineering rather than a metric theory of representation quality.

More recent work such as GEDAN~\citep{leonardi2026gedan} continues this direction by learning edit costs more explicitly. This is orthogonal to our contribution. Learning better costs may improve a fixed neural GED pipeline, while our question is different: given any downstream cost model or alignment mechanism, when does the encoder preserve the relevant graph or node geometry well enough that the learned quantities remain faithful surrogates for GED?

At the opposite end of the design spectrum, GRAIL~\citep{verma2025grail} departs from neural embedding pipelines altogether and uses large language models to synthesize programs for GED approximation and node alignment. This offers a compelling alternative centered on interpretability and cross-domain generalization. Because it is not an encoder-based neural surrogate in the usual sense, it lies outside the scope of the present theory. Its inclusion is nevertheless useful for perspective: not all progress in GED approximation comes from neural representation learning, which makes it even more important to articulate clearly what a representation-theoretic contribution such as ours does and does not claim.

\subsection{Scope of our contribution}

Taken together, prior work provides several forms of partial theory, among which we highlight: the expressiveness via position encodings in GraSP~\citep{zheng2025grasp}, output-space metric guarantees in GREED~\citep{ranjan2022greed}, matching-specific analyses in GEDGNN~\citep{piao2023computing}, and optimization-based surrogate guarantees in GraphEDX~\citep{jain2024graph} and transport-based methods~\citep{tang2025fused, xugromov}. These are all meaningful advances, but they operate at different stages of the pipeline. Our contribution addresses a different, more structural question: \emph{how does encoder geometry affect neural GED estimation across these architectural families?} We provide an architecture-class-level theory for both\textit{ graph-level predictors} and \textit{matching-based estimators} by explicitly characterizing how graph-level and node-level distortion influence the quality of GED surrogates. This makes our framework modular and complementary to existing architectures rather than tied to any single method. By analyzing the geometry of the encoder itself, we provide insights that can inform the design of future architectures across both families.

Moreover, recent surveys and evaluation papers have argued that the field needs clearer conceptual organization and more reliable empirical assessment~\citep{liu2024graph,xu2025graph,roy2025position}. That broader perspective further motivates a theory that is modular, architecture-aware, and explicit about its assumptions. In that sense, our work aims to contribute not only a geometric explanation for empirical gains, but also a cleaner theoretical vocabulary for discussing what different neural GED methods actually learn and where their inductive biases enter.

\section{FSW-GNN, the DS pseudometric, and implications for neural GED}
\label{app:fsw_ds_ged}

\subsection{DS pseudometric, WL equivalence, and soft alignment}
\label{app:fsw_ds}

Let \(G=(A,X)\) and \(H=(B,Y)\) be attributed graphs with the same number \(n\) of nodes, where
\(A,B \in \mathbb{R}^{n\times n}\) are adjacency matrices and
\(X=(x_1,\dots,x_n)^\top\), \(Y=(y_1,\dots,y_n)^\top\) are node-feature matrices with
\(x_i,y_j \in \Omega \subset \mathbb{R}^d\).

The graph edit distance (GED) between \(G\) and \(H\), denoted \(d_{\mathrm{GED}}(G,H)\), is the minimum total cost of an edit path that transforms \(G\) into \(H\) using admissible node and edge insertions, deletions, and substitutions. In this paper, we work with the standard uniform-cost setting, so \(d_{\mathrm{GED}}(G,H)=0\) if and only if \(G\) and \(H\) are isomorphic, and larger values correspond to greater structural dissimilarity. The quantity \(d_{\mathrm{GED}}\) is the discrete target metric of interest, whereas the DS pseudometric introduced below serves as a continuous soft-alignment surrogate that is more amenable to geometric analysis.

Let \(\mathcal{D}_n\) denote the Birkhoff polytope of doubly-stochastic matrices.
In the equal-size setting, the attributed DS pseudometric used in the FSW-GNN analysis can be written as
\[
d_{\mathrm{DS}}(G,H)
:=
\min_{S\in\mathcal{D}_n}
\Bigl(
\|AS-SB\|_1 + \langle S,C_{X,Y}\rangle
\Bigr),
\qquad
(C_{X,Y})_{ij}:=\|x_i-y_j\|_1.
\]
The first term measures failure of equivariance of the adjacency operators under a soft correspondence \(S\), while the second term measures feature mismatch under the same correspondence. The metric is therefore defined on the same alignment space that underlies doubly-stochastic relaxations of graph matching.

For featureless equal-size graphs, the structural part of \(d_{\mathrm{DS}}\) reduces to minimizing the residual of the relaxed commutation relation
\[
AS=SB,
\qquad
S\in\mathcal{D}_n.
\]
This is the doubly-stochastic relaxation of exact permutation alignment, obtained by replacing permutation matrices by their convex hull. In particular, \(d_{\mathrm{DS}}\) belongs to the same relaxation family as fractional graph matching and fractional isomorphism, which is why it is closely connected to \(1\)-WL equivalence.

\begin{lemma}[Zero set of \(d_{\mathrm{DS}}\)~\citep{sverdlov2025fsw}]
\label{lem:ds_zero_set}
For graphs in the FSW-GNN setting, \(d_{\mathrm{DS}}(G,H)=0\) if and only if \(G\) and \(H\) are \(1\)-WL equivalent.
\end{lemma}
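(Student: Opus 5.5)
The plan is to prove both directions through the classical characterization of $1$-WL equivalence by fractional isomorphism (Tinhofer's theorem), extended to node attributes. Throughout we work with the equal-size formula for $d_{\mathrm{DS}}$ displayed above. The variable-size case reduces to it through the padding/normalization convention of~\citep{sverdlov2025fsw}, which we take as given.

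The first step is to note that the minimum defining $d_{\mathrm{DS}}$ is attained. The objective $S\mapsto \|AS-SB\|_1+\langle S,C_{X,Y}\rangle$ is continuous and nonnegative on the compact Birkhoff polytope $\mathcal{D}_n$. Hence $d_{\mathrm{DS}}(G,H)=0$ holds if and only if there is some $S\in\mathcal{D}_n$ with $AS=SB$ and $S_{ij}=0$ whenever $x_i\neq y_j$. We call such an $S$ an \emph{attributed fractional isomorphism}. The lemma thus reduces to the statement that an attributed fractional isomorphism exists if and only if $G$ and $H$ are $1$-WL equivalent.

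For ($\Leftarrow$), suppose $1$-WL assigns the same multiset of stable colors to $G$ and $H$, with the initial coloring given by node features. The stable partition is equitable: nodes of the same color $c$ have the same number of neighbors of each color $c'$, and this count is the same in both graphs. Let $V_c$ and $W_c$ be the color classes in $G$ and $H$, which have equal size $n_c$. Define $S_{ij}=1/n_c$ when $i\in V_c$ and $j\in W_c$, and $S_{ij}=0$ otherwise. Row and column sums equal $1$, so $S\in\mathcal{D}_n$. The feature term vanishes because stable colors refine the initial feature colors. For $i\in V_c$ and $j\in W_{c'}$ we have $(AS)_{ij}=|N(i)\cap V_{c'}|/n_{c'}$ and $(SB)_{ij}=|N(j)\cap W_c|/n_c$. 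Equality of these follows by double counting edges between $V_c$ and $V_{c'}$, namely $n_c\,d_{cc'}=n_{c'}\,d_{c'c}$, using equitability and the equal counts across the two graphs.

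For ($\Rightarrow$), take an attributed fractional isomorphism $S$ and run the standard Ramana--Scheinerman--Ullman argument. Decompose $S$, after a simultaneous permutation, into irreducible blocks. Show that the block partition induces equitable partitions of $G$ and $H$ with matched class sizes and matched neighbor counts. The feature constraint forces each block to be feature-homogeneous. Finally, an induction on WL rounds shows that $S_{ij}>0$ implies $i$ and $j$ receive the same color at every round. For the inductive step, $AS=SB$ together with double stochasticity implies that the color histograms of the neighborhoods of $i$ and $j$ agree whenever $S_{ij}>0$.

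The main obstacle is this inductive step. Showing that $AS=SB$ transports color-class indicator vectors, i.e. $S\,\mathbf{1}_{W_c}=\mathbf{1}_{V_c}$ at each round, needs care. I would first try to prove this invariance directly, using the fact that a doubly stochastic $S$ maps a $0/1$ vector to a $0/1$ vector exactly when it respects the corresponding partition. Given that the lemma is attributed to~\citep{sverdlov2025fsw}, the fallback is to cite their proof and the classical fractional isomorphism theorem directly.
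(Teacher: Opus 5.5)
Your first step is the paper's argument. The objective is continuous and nonnegative on the compact polytope \(\mathcal D_n\), so \(d_{\mathrm{DS}}(G,H)=0\) if and only if some \(S\in\mathcal D_n\) satisfies \(AS=SB\) and \(S_{ij}=0\) whenever \(x_i\neq y_j\). The paper stops there and takes the equivalence of this condition with \(1\)-WL equivalence from \citet{sverdlov2025fsw}, so everything after that step in your proposal goes beyond what the paper proves. Your (\(\Leftarrow\)) construction is correct and complete, provided the stable coloring is computed on the disjoint union and you use symmetry of \(A\) in the double count \(n_c d_{cc'}=n_{c'}d_{c'c}\). For the variable-size reduction you take as given, the padding must make dummy nodes distinguishable from genuine ones. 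Otherwise a graph and the same graph plus one isolated dummy-like node would be at \(d_{\mathrm{DS}}\)-distance zero despite having different color multisets.

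The gap is in (\(\Rightarrow\)), which is also the direction the paper relies on later, for positivity of \(d_{\mathrm{DS}}\) on WL-separable pairs in Proposition~\ref{prop:ged_ds_dataset}. You assert the inductive step without deriving it, and the route you suggest is circular. The fact that a doubly stochastic matrix maps a \(0/1\) vector to a \(0/1\) vector exactly when it respects the partition is of no use until you know that \(S\mathbf{1}_{W_c}\) is a \(0/1\) vector for the refined colors, which is the claim itself. Nor does the relation \(x=Sy\) that \(AS=SB\) yields directly suffice on its own: \(S=\tfrac12\mathbf{1}\mathbf{1}^\top\in\mathcal D_2\) and \(y=(0,2)^\top\) give \(x=(1,1)^\top\), with \(S_{11}>0\) but \(x_1\neq y_1\).

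The missing ingredients are the transposed relation \(S^\top A=BS^\top\), available because \(A\) and \(B\) are symmetric, and a rigidity identity. Suppose inductively that \(S_{ij}>0\) implies \(\chi^t(i)=\chi^t(j)\). By double stochasticity this is equivalent to \(S\mathbf{1}_{W_c}=\mathbf{1}_{V_c}\) and \(S^\top\mathbf{1}_{V_c}=\mathbf{1}_{W_c}\) for every round-\(t\) color \(c\). Set \(x:=A\mathbf{1}_{V_c}\) and \(y:=B\mathbf{1}_{W_c}\). Then \(x=AS\mathbf{1}_{W_c}=SB\mathbf{1}_{W_c}=Sy\) and \(y=BS^\top\mathbf{1}_{V_c}=S^\top A\mathbf{1}_{V_c}=S^\top x\), so \(x^\top Sy=\|x\|_2^2=\|y\|_2^2\). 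Unit row and column sums then give
\[
\sum_{i,j}S_{ij}(x_i-y_j)^2=\|x\|_2^2+\|y\|_2^2-2\,x^\top Sy=0 .
\]
Hence \(S_{ij}>0\) forces equal numbers of \(c\)-colored neighbors for every \(c\), which closes the induction; your feature condition is the base case. Finally, \(|V_c|=\mathbf{1}^\top S\mathbf{1}_{W_c}=|W_c|\) yields equal color multisets at every round.

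This makes the irreducible-block decomposition unnecessary. If you prefer the Ramana--Scheinerman--Ullman route, the blocks come from independent row and column permutations, not a simultaneous one. The lemma you then need is the maximum principle for \(x=S_k y\), \(y=S_k^\top x\) on a block \(S_k\) with connected support.
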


Concretely, the objective defining \(d_{\mathrm{DS}}\) is nonnegative, so \(d_{\mathrm{DS}}(G,H)=0\) holds precisely when there exists a feasible doubly-stochastic alignment with zero structural and feature mismatch. The FSW-GNN analysis shows that this vanishing condition coincides with \(1\)-WL equivalence on the relevant graph class.

The previous lemma is the key structural reason we use \(d_{\mathrm{DS}}\) as the intermediary geometry in this work. The metric is weaker than graph isomorphism, as any WL-type metric must be, but it is already strong enough to separate graphs throughout the \(1\)-WL-separable regime in which the present analysis is carried out.

A second useful observation is that \(d_{\mathrm{DS}}\) is a genuine convex relaxation of a permutation-restricted alignment cost.

\begin{proposition}[Permutation relaxation~\citep{}]
\label{prop:ds_perm_relax}
Let \(\Pi_n\) denote the set of \(n\times n\) permutation matrices, and define
\[
d_{\Pi}(G,H)
:=
\min_{P\in\Pi_n}
\Bigl(
\|AP-PB\|_1 + \langle P,C_{X,Y}\rangle
\Bigr).
\]
Then
\[
d_{\mathrm{DS}}(G,H) \le d_{\Pi}(G,H)
\qquad
\text{for all equal-size attributed graphs } G,H.
\]
\end{proposition}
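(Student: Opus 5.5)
The plan is to show that the feasible set of \(d_{\Pi}\) sits inside the feasible set of \(d_{\mathrm{DS}}\), and that both minimize the same objective. Write
\[
F(S) := \|AS-SB\|_1 + \langle S, C_{X,Y}\rangle ,
\]
so that \(d_{\mathrm{DS}}(G,H)=\min_{S\in\mathcal D_n}F(S)\) and \(d_{\Pi}(G,H)=\min_{P\in\Pi_n}F(P)\).

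The first step is to check that every permutation matrix is doubly stochastic. A permutation matrix \(P\) has entries in \(\{0,1\}\), and it has exactly one \(1\) in each row and in each column. Hence \(P\ge 0\), \(P\mathbf 1=\mathbf 1\) and \(P^\top\mathbf 1=\mathbf 1\), which gives \(\Pi_n\subseteq\mathcal D_n\).

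The second step is to confirm that both minima are attained. \(\Pi_n\) is finite and nonempty, so its minimum exists. \(\mathcal D_n\) is compact, and \(F\) is continuous: it is the sum of a norm of a linear map and a linear functional. So the minimum over \(\mathcal D_n\) exists as well.

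The third step is to compare the two minima. Let \(P^\star\in\Pi_n\) attain \(d_{\Pi}(G,H)\). Since \(P^\star\in\mathcal D_n\), it is a feasible point for the DS problem, and therefore
\[
d_{\mathrm{DS}}(G,H)\le F(P^\star)=d_{\Pi}(G,H).
\]

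There is no real obstacle here. The only point needing care is that both quantities use the identical objective \(F\), with the same entrywise \(\ell_1\) norm and the same cost matrix \(C_{X,Y}\). If \(d_{\mathrm{DS}}\) were instead defined in the generalized unequal-size form of~\citep{sverdlov2025fsw}, I would first check that it reduces to \(F\) in the equal-size setting. Once that holds, the inequality follows from monotonicity of the infimum under enlarging the feasible set.

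Optionally, one could add that \(\mathcal D_n=\mathrm{conv}(\Pi_n)\) by Birkhoff--von Neumann, which justifies calling this a convex relaxation. That fact is not needed for the inequality itself.
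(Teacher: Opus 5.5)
Your proof is correct and follows the paper's argument: the paper also observes that \(\Pi_n \subseteq \mathcal{D}_n\) and concludes by minimizing the same objective over the larger feasible set \(\mathcal{D}_n\). Your checks that both minima are attained, and that the generalized \(d_{\mathrm{DS}}\) reduces to the same objective in the equal-size case, are harmless additions that the paper leaves implicit.
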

Proposition~\ref{prop:ds_perm_relax} is trivially true, because every permutation matrix is doubly-stochastic, \(\Pi_n \subseteq \mathcal{D}_n\), and the claim follows immediately by minimizing the same objective over the larger feasible set \(\mathcal{D}_n\). This proposition isolates the role of \(d_{\mathrm{DS}}\): it is not a discrete edit metric, but it is the soft-alignment relaxation of a permutation-based structural discrepancy. This is precisely the level at which many neural graph matching and neural GED methods operate before producing a final score or projecting to a hard alignment.

\subsection{Mechanism behind the FSW--DS theorem}
\label{app:fsw_ds_mechanism}

The graph-level FSW-GNN theorem uses a stronger geometric mechanism. The first ingredient is the FSW multiset embedding, which provides a permutation-invariant representation with quantitative metric control at the multiset level~\citep{sverdlov2025fsw}. The second ingredient is that, for fixed graph size and bounded feature domain, the message-passing, update, and readout maps of FSW-GNN are piecewise linear on the cells determined by the relevant sorting patterns~\citep{sverdlov2025fsw}. The third ingredient is that the DS objective is itself piecewise linear in the same bounded regime when formulated with the \(\ell_1\)-norm~\citep{sverdlov2025fsw}.

These ingredients imply that both
\[
(G,H)\mapsto \|\Phi_{\mathrm{FSW}}(G)-\Phi_{\mathrm{FSW}}(H)\|_2
\qquad\text{and}\qquad
(G,H)\mapsto d_{\mathrm{DS}}(G,H)
\]
are nonnegative piecewise-linear functions on a compact domain, and Lemma~\ref{lem:ds_zero_set} identifies their common zero set through the WL-equivalence characterization in\ \citet{sverdlov2025fsw}. The bi-Lipschitz bound then follows by comparing two such functions on each cell and taking global extrema over the finitely many cells. In other words, the proof is fundamentally a quantitative comparison of two compatible surrogate geometries. A WL-equivalent encoder may separate the same graph classes as another encoder while still distorting distances far more severely. The FSW-GNN theorem rules out that failure mode with respect to \(d_{\mathrm{DS}}\), whereas the available theory for standard sum-based MPNNs is qualitatively weaker and provides only lower-H\"older guarantees in expectation~\citep{davidsonHolderStabilityMultiset2024}. For neural GED, where the downstream model consumes distances, costs, or alignments rather than just class labels, that distinction proves substantial.

The recent sorting-based follow-up work~\citep{dym2026quantitative} further supports this viewpoint by studying quantitative distortion bounds for sorting-based permutation-invariant embeddings, which is exactly the multiset-embedding mechanism underlying the FSW line. This makes the FSW perspective especially suitable when one wants metric control rather than expressivity alone.

\subsection{Additional remarks on the DS--GED connection}
\label{app:ds_ged}

After node-count equalization, both neural GED objectives and \(d_{\mathrm{DS}}\) are formulated over cross-graph correspondences. The distinction is that GED remains a discrete optimization over edit paths or node permutations, whereas \(d_{\mathrm{DS}}\) passes to a fractional correspondence space. This is why \(d_{\mathrm{DS}}\) should be viewed as a continuous surrogate geometry for edit distance rather than as an alternative definition of GED itself. The equalized-node regime is especially natural here. Practical neural GED systems commonly pad or augment graph pairs so that both graphs have the same number of nodes before constructing matching matrices, Sinkhorn plans, or transport objectives. In that regime, the feasible set \(\mathcal{D}_n\) used by \(d_{\mathrm{DS}}\) is not an artificial choice imposed only for analysis, but rather the same soft alignment class that already appears, explicitly or implicitly, in assignment-based, Sinkhorn-based, and transport-based neural GED architectures~\citep{roy2025position,liu2024graph,piao2023computing,jain2024graph,tang2025fused}.

The DS bridge is also technically preferable to the Tree Mover's Distance~\citep{chuang, sverdlov2025fsw} bridge in the present setting. The TMD-side bi-Lipschitz theorem for FSW-GNN requires the feature domain to avoid the origin, while equalization by dummy nodes often introduces neutral or zero features. By contrast, the DS formulation is compatible with exactly that padded-node regime, so it matches the implementation conventions of current neural GED pipelines more closely.

A second reason is that the comparison between \(d_{\mathrm{DS}}\) and \(d_{\mathrm{GED}}\) is naturally graph-dataset-dependent. On a fixed dataset or graph family, the constants relating the two metrics absorb graph size range, feature alphabet, or feature geometry, and the chosen equalization rule. They are therefore not expected to be universal. This is also the sense in which our theory isolates the encoder-side contribution. The dataset-dependent constants governing the comparison between \(d_{\mathrm{DS}}\) and \(d_{\mathrm{GED}}\) are fixed by the graph family and equalization scheme, not by the encoder. What FSW-style encoders control instead is the additional distortion incurred when this surrogate geometry is mapped into Euclidean graph or node representations, through the bi-Lipschitz constants in our graph-level and node-level bounds.

This also clarifies why improvements from FSW-style encoders should be expected to be systematic but not perfect. Even when \(\Phi_{\mathrm{FSW}}\) is bi-Lipschitz with respect to \(d_{\mathrm{DS}}\), the final GED estimate still depends on the residual gap between \(d_{\mathrm{DS}}\) and \(d_{\mathrm{GED}}\), on the approximation quality of the downstream prediction head or alignment module, and on optimization error in learning. Thus the expected gain is better conditioning of the surrogate geometry used by the method, not exact recovery of GED in all cases.

Taken together, these observations place \(d_{\mathrm{DS}}\) in the right intermediate position for the present analysis. It is close enough to the alignment objectives used by neural GED architectures to be operationally relevant, yet regular enough to admit a graph-level bi-Lipschitz embedding theorem for FSW-GNN. This combination is what makes the FSW--DS connection useful for analyzing geometry-informed improvements in neural GED.

\section{Proofs for Section~\ref{sec:theory}}
\label{app:proofs}

\setcounter{proposition}{0}
\setcounter{theorem}{0}
\setcounter{corollary}{0}
\setcounter{lemma}{0}
\renewcommand{\theproposition}{E.\arabic{proposition}}
\renewcommand{\thetheorem}{E.\arabic{theorem}}
\renewcommand{\thecorollary}{E.\arabic{corollary}}
\renewcommand{\thelemma}{E.\arabic{lemma}}

For convenience, we restate the results \textit{with appendix numbering}.

\subsection{Proofs for Graph Similarity Predictors}

\begin{proposition}[GED--DS comparability]
\label{prop:app_ged_ds_dataset}
Define
\[
a_{\mathcal G}
:=
\min_{\substack{G,H\in\mathcal G\\G\neq H}}
\frac{d_{\mathrm{GED}}(G,H)}{d_{\mathrm{DS}}(G,H)},
\qquad
b_{\mathcal G}
:=
\max_{\substack{G,H\in\mathcal G\\G\neq H}}
\frac{d_{\mathrm{GED}}(G,H)}{d_{\mathrm{DS}}(G,H)}.
\]
Then
\[
0<a_{\mathcal G}\le b_{\mathcal G}<\infty,
\]
and, for all \(G,H\in\mathcal G\),
\[
a_{\mathcal G}\, d_{\mathrm{DS}}(G,H)
\;\le\;
d_{\mathrm{GED}}(G,H)
\;\le\;
b_{\mathcal G}\, d_{\mathrm{DS}}(G,H).
\]
\end{proposition}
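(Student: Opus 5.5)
The plan is a finiteness argument. The collection \(\mathcal G\) is finite; it consists of the graphs of the datasets taken modulo isomorphism. Hence the set of ordered pairs \(\{(G,H)\in\mathcal G\times\mathcal G: G\neq H\}\) is finite. If this set is empty the statement is vacuous, and by convention we may set \(a_{\mathcal G}=b_{\mathcal G}=1\). Otherwise the minimum and maximum defining \(a_{\mathcal G}\) and \(b_{\mathcal G}\) are taken over finitely many real numbers. They are therefore attained, and \(a_{\mathcal G}\le b_{\mathcal G}\) is automatic, provided every ratio is a well-defined positive finite number. The whole proof thus reduces to showing
\[
0<d_{\mathrm{GED}}(G,H)<\infty
\quad\text{and}\quad
0<d_{\mathrm{DS}}(G,H)<\infty
\qquad\text{for all } G\neq H \text{ in } \mathcal G.
\]

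For \(d_{\mathrm{GED}}\), I would use the facts recalled in Appendix~\ref{app:ds_ged}. Under uniform costs, GED is finite: deleting all of \(G\) and inserting all of \(H\) costs at most \(2N+N(N-1)\), using \(|V|\le N\). It vanishes only on isomorphic pairs, and distinct elements of \(\mathcal G\) are non-isomorphic because we quotient by isomorphism. So \(d_{\mathrm{GED}}(G,H)>0\).

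For \(d_{\mathrm{DS}}\), finiteness holds because it is a minimum of a continuous objective over a nonempty compact polytope, and the features lie in the compact set \(\Omega\). Positivity is the step I expect to need the most care. By Lemma~\ref{lem:ds_zero_set}, \(d_{\mathrm{DS}}(G,H)=0\) iff \(G\) and \(H\) are \(1\)-WL equivalent. Since \(\mathcal G\subseteq\mathcal G^{\mathrm{WL}}_{\le N}\) is pairwise \(1\)-WL separable, distinct elements are not WL-equivalent, so \(d_{\mathrm{DS}}(G,H)>0\). The subtle point is that Lemma~\ref{lem:ds_zero_set} must be invoked for the generalized, variable-size \(d_{\mathrm{DS}}\) of \citet{sverdlov2025fsw}, not only for the equal-size formula displayed above. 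I would state this explicitly as the cited result.

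With all ratios in \((0,\infty)\), we get \(a_{\mathcal G}>0\) as the minimum of finitely many positive numbers, and \(b_{\mathcal G}<\infty\) as the maximum of finitely many finite numbers. Then the sandwich follows. For \(G\neq H\), multiply \(a_{\mathcal G}\le d_{\mathrm{GED}}(G,H)/d_{\mathrm{DS}}(G,H)\le b_{\mathcal G}\) by \(d_{\mathrm{DS}}(G,H)>0\). For \(G=H\), both metrics vanish, so all three sides are \(0\).
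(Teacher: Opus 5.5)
Your proposal is correct and follows essentially the same argument as the paper. Both proofs use the finiteness of the set of distinct pairs, the positivity of $d_{\mathrm{GED}}$ (distinct elements are non-isomorphic and costs are uniform), the positivity of $d_{\mathrm{DS}}$ (pairwise $1$-WL separability plus the $1$-WL-equivalence of $d_{\mathrm{DS}}$), then multiply through, with $G=H$ handled separately; your extra checks (finiteness of both metrics, the empty-pair convention, and noting that the zero-set characterization is needed for the generalized variable-size $d_{\mathrm{DS}}$) are refinements the paper leaves implicit, not a different route.
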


\begin{proof}
Because \(\mathcal G\) is finite modulo graph isomorphism, the set of distinct pairs
\[
\{(G,H)\in\mathcal G\times\mathcal G : G\neq H\}
\]
is finite. Since \(\mathcal G\subseteq \mathcal{G}^{\mathrm{WL}}_{\le N}\) is pairwise
\(1\)-WL-separable and \(d_{\mathrm{DS}}\) is \(1\)-WL-equivalent, we have
\[
d_{\mathrm{DS}}(G,H)>0
\qquad
\forall\, G\neq H \in \mathcal G.
\]
Likewise, because we work modulo graph isomorphism and use uniform nonnegative edit
costs, we also have
\[
d_{\mathrm{GED}}(G,H)>0
\qquad
\forall\, G\neq H \in \mathcal G.
\]
Hence the ratio
\[
\frac{d_{\mathrm{GED}}(G,H)}{d_{\mathrm{DS}}(G,H)}
\]
is positive and finite on a finite set, so its minimum and maximum are well-defined and
satisfy
\[
0<a_{\mathcal G}\le b_{\mathcal G}<\infty.
\]
The displayed inequality then follows immediately for all distinct pairs, while for
\(G=H\) both sides equal zero.
\end{proof}

\begin{proposition}[Graph-level bounds]
\label{prop:app_graph_sandwich}
For every \(G,H \in \mathcal G\),
\[
\frac{a_{\mathcal G}}{C_G}\,\|\Phi(G)-\Phi(H)\|_2
\;\le\;
d_{\mathrm{GED}}(G,H)
\;\le\;
\frac{b_{\mathcal G}}{c_G}\,\|\Phi(G)-\Phi(H)\|_2.
\]
\end{proposition}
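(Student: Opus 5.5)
The plan is to chain the two comparability statements already available: Proposition~\ref{prop:app_ged_ds_dataset}, which sandwiches \(d_{\mathrm{GED}}\) between multiples of \(d_{\mathrm{DS}}\), and Assumption~\ref{ass:graph_bilip}, which sandwiches \(\|\Phi(G)-\Phi(H)\|_2\) between multiples of \(d_{\mathrm{DS}}\). Inverting the encoder bounds expresses \(d_{\mathrm{DS}}\) in terms of the embedding distance. Substituting into the GED--DS bounds then gives the claim. No optimization or structural argument is needed; everything is a composition of multiplicative inequalities with positive constants.

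Concretely, I will fix \(G,H\in\mathcal G\). Since \(C_G>0\), the upper half of Assumption~\ref{ass:graph_bilip} gives
\[
d_{\mathrm{DS}}(G,H)\ge \tfrac{1}{C_G}\|\Phi(G)-\Phi(H)\|_2 .
\]
Multiplying by \(a_{\mathcal G}>0\) and combining with the lower inequality of Proposition~\ref{prop:app_ged_ds_dataset} yields
\[
d_{\mathrm{GED}}(G,H)\ge a_{\mathcal G}\,d_{\mathrm{DS}}(G,H)\ge \tfrac{a_{\mathcal G}}{C_G}\|\Phi(G)-\Phi(H)\|_2 .
\]

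For the other side, since \(c_G>0\), the lower half of Assumption~\ref{ass:graph_bilip} gives
\[
d_{\mathrm{DS}}(G,H)\le \tfrac{1}{c_G}\|\Phi(G)-\Phi(H)\|_2 .
\]
Multiplying by \(b_{\mathcal G}<\infty\) and using the upper GED--DS inequality gives
\[
d_{\mathrm{GED}}(G,H)\le b_{\mathcal G}\,d_{\mathrm{DS}}(G,H)\le \tfrac{b_{\mathcal G}}{c_G}\|\Phi(G)-\Phi(H)\|_2 .
\]

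The only point requiring care is the degenerate case and the signs of the constants. For \(G=H\), every quantity vanishes, and the inequality holds trivially. For \(G\neq H\), the division steps are legitimate because \(0<c_G\le C_G<\infty\) and \(0<a_{\mathcal G}\le b_{\mathcal G}<\infty\); the latter is guaranteed by Proposition~\ref{prop:app_ged_ds_dataset}. Multiplying inequalities by positive constants preserves their direction, so I do not expect any genuine obstacle beyond recording these positivity facts explicitly.
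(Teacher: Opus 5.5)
Your proposal is correct and follows essentially the same route as the paper's proof: you invert the upper encoder bound to get \(d_{\mathrm{DS}}(G,H)\ge \|\Phi(G)-\Phi(H)\|_2/C_G\) and chain it with the lower GED--DS inequality, then invert the lower encoder bound and chain it with the upper GED--DS inequality. Your extra remarks on the case \(G=H\) and on the positivity of the constants are harmless additions that the paper leaves implicit.
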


\begin{proof}
By the upper inequality in Assumption~\ref{ass:graph_bilip},
\[
\|\Phi(G)-\Phi(H)\|_2 \le C_G\, d_{\mathrm{DS}}(G,H),
\]
hence
\[
d_{\mathrm{DS}}(G,H) \ge \frac{1}{C_G}\,\|\Phi(G)-\Phi(H)\|_2.
\]
Combining this with the lower inequality in
Proposition~\ref{prop:app_ged_ds_dataset} yields
\[
d_{\mathrm{GED}}(G,H)
\ge
a_{\mathcal G}\, d_{\mathrm{DS}}(G,H)
\ge
\frac{a_{\mathcal G}}{C_G}\,\|\Phi(G)-\Phi(H)\|_2.
\]

Similarly, by the lower inequality in Assumption~\ref{ass:graph_bilip},
\[
\|\Phi(G)-\Phi(H)\|_2 \ge c_G\, d_{\mathrm{DS}}(G,H),
\]
so
\[
d_{\mathrm{DS}}(G,H) \le \frac{1}{c_G}\,\|\Phi(G)-\Phi(H)\|_2.
\]
Combining this with the upper inequality in
Proposition~\ref{prop:app_ged_ds_dataset} gives
\[
d_{\mathrm{GED}}(G,H)
\le
b_{\mathcal G}\, d_{\mathrm{DS}}(G,H)
\le
\frac{b_{\mathcal G}}{c_G}\,\|\Phi(G)-\Phi(H)\|_2.
\]
This proves the claim.
\end{proof}

\begin{theorem}[Best scalar calibration for norm-based predictors]
\label{thm:app_calibration}
Let
\[
\alpha := \frac{a_{\mathcal G}}{C_G},
\qquad
\beta := \frac{b_{\mathcal G}}{c_G},
\qquad
\gamma^\star := \frac{\alpha+\beta}{2}.
\]
Then, for every \(G,H \in \mathcal G\),
\[
\left|
\gamma^\star \|\Phi(G)-\Phi(H)\|_2 - d_{\mathrm{GED}}(G,H)
\right|
\le
\frac{\beta-\alpha}{2}\,\|\Phi(G)-\Phi(H)\|_2.
\]
\end{theorem}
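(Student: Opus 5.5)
The plan is to reduce the statement to the sandwich bound of Proposition~\ref{prop:app_graph_sandwich}, followed by an elementary interval argument. Fix \(G,H\in\mathcal G\) and write \(t:=\|\Phi(G)-\Phi(H)\|_2\ge 0\) and \(g:=d_{\mathrm{GED}}(G,H)\). Proposition~\ref{prop:app_graph_sandwich}, which rests on Proposition~\ref{prop:app_ged_ds_dataset} and Assumption~\ref{ass:graph_bilip}, states exactly that \(\alpha t\le g\le \beta t\). So \(g\) lies in the closed interval \([\alpha t,\beta t]\). First I will record that \(\alpha\le\beta\): this follows from \(a_{\mathcal G}\le b_{\mathcal G}\) and \(c_G\le C_G\), which give \(a_{\mathcal G}c_G\le b_{\mathcal G}C_G\). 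Hence the interval is well defined and \(\beta-\alpha\ge 0\).

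The key step is that \(\gamma^\star t=\tfrac{\alpha+\beta}{2}t\) is the midpoint of \([\alpha t,\beta t]\). Any point of an interval is within half its length of the midpoint. Concretely, subtracting \(\gamma^\star t\) from \(\alpha t\le g\le\beta t\) gives
\[
-\tfrac{\beta-\alpha}{2}\,t\;\le\; g-\gamma^\star t\;\le\;\tfrac{\beta-\alpha}{2}\,t ,
\]
which is the claimed bound \(|\gamma^\star t-g|\le\tfrac{\beta-\alpha}{2}t\).

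Edge cases need only a remark. For \(G=H\) both sides vanish, since \(t=0\) and \(g=0\). For \(G\neq H\), Assumption~\ref{ass:graph_bilip} together with \(d_{\mathrm{DS}}>0\) gives \(t>0\), although positivity of \(t\) is not actually needed.

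There is no real obstacle here; the only thing to check is that the constants are ordered so the interval is nonempty, which was done above. I could optionally add a remark justifying the name ``best'': among all \(\gamma\), the midpoint minimizes \(\max\{|\gamma-\alpha|,|\gamma-\beta|\}\), so it minimizes the worst-case relative error over the interval. The theorem as stated only asserts the bound, so this remark is not required.
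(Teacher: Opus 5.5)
Your proposal is correct and matches the paper's proof: both use Proposition~\ref{prop:app_graph_sandwich} to place $d_{\mathrm{GED}}(G,H)$ in $[\alpha x,\beta x]$, then observe that $\gamma^\star x$ is the midpoint of this interval with radius $\tfrac{\beta-\alpha}{2}x$. Your extra checks ($\alpha\le\beta$ and the case $G=H$) are harmless additions that the paper leaves implicit.
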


\begin{proof}
By Proposition~\ref{prop:app_graph_sandwich},
\[
d_{\mathrm{GED}}(G,H)
\in
\left[
\alpha \|\Phi(G)-\Phi(H)\|_2,\,
\beta \|\Phi(G)-\Phi(H)\|_2
\right].
\]
Let \(x := \|\Phi(G)-\Phi(H)\|_2\). Then
\[
d_{\mathrm{GED}}(G,H)\in[\alpha x,\beta x].
\]
The midpoint of this interval is \(\gamma^\star x\), and the radius is
\[
\frac{\beta-\alpha}{2}x.
\]
Therefore
\[
\left| \gamma^\star x - d_{\mathrm{GED}}(G,H) \right|
\le
\frac{\beta-\alpha}{2}x.
\]
Substituting back \(x=\|\Phi(G)-\Phi(H)\|_2\) gives the result.
\end{proof}

\begin{corollary}[Ranking preservation for monotone norm-head predictors]
\label{cor:app_ranking_graph}
Let \(\widehat d_\psi(G,H):=\psi\!\left(\|\Phi(G)-\Phi(H)\|_2\right)\), where \(\psi:\mathbb{R}_{\ge 0}\to\mathbb{R}\) is strictly increasing. Define
\[
\kappa_G := \frac{\beta}{\alpha}
= \frac{b_{\mathcal G} C_G}{a_{\mathcal G} c_G}.
\]
Then for any two graph pairs \((G_1,H_1)\) and \((G_2,H_2)\) (and similarly with $<$ and $\kappa_G^{-1}$),
\[
d_{\mathrm{GED}}(G_1,H_1)
>
\kappa_G\, d_{\mathrm{GED}}(G_2,H_2)
\quad\Longrightarrow\quad
\widehat d_\psi(G_1,H_1)>\widehat d_\psi(G_2,H_2).
\]
\end{corollary}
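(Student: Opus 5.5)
The plan is to deduce the ordering of embedding distances from the sandwich in Proposition~\ref{prop:app_graph_sandwich}, and then pass through the strictly increasing map \(\psi\). Write \(x_k := \|\Phi(G_k)-\Phi(H_k)\|_2\) for \(k=1,2\). Then \(\widehat d_\psi(G_k,H_k)=\psi(x_k)\). Since \(\psi\) is strictly increasing, it suffices to show \(x_1 > x_2\); the head plays no further role.

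\emph{Step 1: chain the two sandwich bounds.}
Apply Proposition~\ref{prop:app_graph_sandwich} to the first pair on its upper side, \(d_{\mathrm{GED}}(G_1,H_1)\le \beta x_1\). Apply it to the second pair on its lower side, \(\alpha x_2 \le d_{\mathrm{GED}}(G_2,H_2)\). Combining these with the hypothesis gives
\[
\beta x_1 \;\ge\; d_{\mathrm{GED}}(G_1,H_1) \;>\; \kappa_G\, d_{\mathrm{GED}}(G_2,H_2) \;\ge\; \kappa_G\,\alpha x_2 \;=\; \beta x_2 .
\]
Since \(\beta = b_{\mathcal G}/c_G>0\) by Proposition~\ref{prop:app_ged_ds_dataset} and Assumption~\ref{ass:graph_bilip}, dividing by \(\beta\) yields \(x_1>x_2\). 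Applying \(\psi\) then finishes the implication. The strict inequality in the hypothesis is exactly what carries the strictness through the chain. No case split on whether a pair is degenerate (\(G_k=H_k\)) is needed, because the sandwich holds trivially with both sides zero.

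\emph{Step 2: the reversed statement.}
For the variant with \(<\) and \(\kappa_G^{-1}\), I read it as
\[
d_{\mathrm{GED}}(G_1,H_1) < \kappa_G^{-1} d_{\mathrm{GED}}(G_2,H_2) \;\Longrightarrow\; \widehat d_\psi(G_1,H_1)<\widehat d_\psi(G_2,H_2).
\]
This is the same statement with the roles of the two pairs swapped. Multiplying the hypothesis by \(\kappa_G\) gives \(d_{\mathrm{GED}}(G_2,H_2) > \kappa_G\, d_{\mathrm{GED}}(G_1,H_1)\), and Step 1 applied to the swapped pairs concludes.

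\emph{Main obstacle.}
There is no real analytic difficulty here. The only care needed is bookkeeping: pairing the correct side of each sandwich (the upper bound for the pair with the larger GED, the lower bound for the other), and checking that \(\alpha,\beta\) are strictly positive so that dividing preserves strict inequality.
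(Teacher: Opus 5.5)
Your proof is correct and follows the paper's argument. The paper likewise pairs the upper sandwich bound for the first pair with the lower bound for the second to get \(x_1 \ge d_1/\beta > d_2/\alpha \ge x_2\), applies the strictly increasing \(\psi\), and obtains the reversed implication by swapping the two pairs; your chain \(\beta x_1 \ge d_1 > \kappa_G d_2 \ge \beta x_2\) followed by division by \(\beta>0\) is the same computation rearranged.
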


\begin{proof}
Let
\[
x_i := \|\Phi(G_i)-\Phi(H_i)\|_2,
\qquad
d_i := d_{\mathrm{GED}}(G_i,H_i),
\qquad
i\in\{1,2\}.
\]
By Proposition~\ref{prop:app_graph_sandwich},
\[
\alpha x_i \le d_i \le \beta x_i,
\qquad i\in\{1,2\}.
\]
Hence
\[
x_1 \ge \frac{d_1}{\beta},
\qquad
x_2 \le \frac{d_2}{\alpha}.
\]
If \(d_1>\frac{\beta}{\alpha}d_2\), then
\[
x_1 \ge \frac{d_1}{\beta}
>
\frac{d_2}{\alpha}
\ge x_2.
\]
Since \(\psi\) is strictly increasing, it follows that
\[
\widehat d_\psi(G_1,H_1)=\psi(x_1)>\psi(x_2)=\widehat d_\psi(G_2,H_2).
\]
The second implication can be obtained by swapping the roles of the two pairs.
\end{proof}

\begin{proposition}[Stability of Lipschitz interaction heads]
\label{prop:app_lipschitz_head}
Let \(h:\mathbb{R}^m\times\mathbb{R}^m\to\mathbb{R}\) be \(L_h\)-Lipschitz with respect to \(\|(u,v)\|_{\oplus} := \|u\|_2+\|v\|_2\). Define \(s_h(G,H) := h(\Phi(G),\Phi(H))\). Then, for every \(G,H,G',H' \in \mathcal G\),
\[
|s_h(G,H)-s_h(G',H')|
\le
L_h C_G\,
\bigl(
d_{\mathrm{DS}}(G,G')+d_{\mathrm{DS}}(H,H')
\bigr).
\]
\end{proposition}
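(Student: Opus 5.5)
The plan is a direct composition of the Lipschitz property of the head $h$ with the upper half of Assumption~\ref{ass:graph_bilip}. First, fix $G,H,G',H'\in\mathcal G$ and set $u=\Phi(G)$, $v=\Phi(H)$, $u'=\Phi(G')$, $v'=\Phi(H')$. By definition $s_h(G,H)=h(u,v)$ and $s_h(G',H')=h(u',v')$. The $L_h$-Lipschitz hypothesis with respect to $\|\cdot\|_{\oplus}$, applied to the difference $(u,v)-(u',v')=(u-u',v-v')$, gives
\[
|s_h(G,H)-s_h(G',H')|
\le
L_h\bigl(\|\Phi(G)-\Phi(G')\|_2+\|\Phi(H)-\Phi(H')\|_2\bigr).
\]

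Second, apply the upper inequality of Assumption~\ref{ass:graph_bilip} separately to the pairs $(G,G')$ and $(H,H')$. This yields $\|\Phi(G)-\Phi(G')\|_2\le C_G\,d_{\mathrm{DS}}(G,G')$ and $\|\Phi(H)-\Phi(H')\|_2\le C_G\,d_{\mathrm{DS}}(H,H')$. Substituting these into the display and factoring out $L_hC_G$ gives the claimed bound.

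There is essentially no obstacle here. The only point to state explicitly is that the Lipschitz condition is taken with respect to the sum norm $\|(u,v)\|_{\oplus}$. This is what makes the two per-graph displacements add linearly, rather than combining through, say, a Euclidean product norm. Even under that alternative the bound would still hold, since the product norm is at most the sum norm.

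Note also that only the upper constant $C_G$ is used. The lower constant $c_G$ and the GED--DS comparability of Proposition~\ref{prop:app_ged_ds_dataset} play no role. I would remark on this in the text, since it means the stability statement needs only Lipschitz continuity of the encoder, not the full bi-Lipschitz property.
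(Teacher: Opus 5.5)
Your proposal is correct and follows the paper's proof essentially verbatim: apply the $L_h$-Lipschitz property with respect to $\|\cdot\|_{\oplus}$, then bound each of $\|\Phi(G)-\Phi(G')\|_2$ and $\|\Phi(H)-\Phi(H')\|_2$ by the upper inequality of Assumption~\ref{ass:graph_bilip}. Your side remarks are also sound: only $C_G$ is used, and Lipschitzness with respect to the Euclidean product norm implies it with respect to the sum norm with the same constant.
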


\begin{proof}
By the Lipschitz property of \(h\),
\[
|s_h(G,H)-s_h(G',H')|
\le
L_h\Bigl(
\|\Phi(G)-\Phi(G')\|_2+\|\Phi(H)-\Phi(H')\|_2
\Bigr).
\]
Applying the upper inequality in Assumption~\ref{ass:graph_bilip} to each term gives
\[
\|\Phi(G)-\Phi(G')\|_2 \le C_G\, d_{\mathrm{DS}}(G,G'),
\qquad
\|\Phi(H)-\Phi(H')\|_2 \le C_G\, d_{\mathrm{DS}}(H,H').
\]
Substituting these inequalities yields
\[
|s_h(G,H)-s_h(G',H')|
\le
L_h C_G\,
\bigl(
d_{\mathrm{DS}}(G,G')+d_{\mathrm{DS}}(H,H')
\bigr),
\]
as claimed.
\end{proof}

\subsection{Proofs for Matching-based Estimators}

Recall that, we have fixed an equalization rule that maps each pair
\((G,H)\in\mathcal G\times\mathcal G\) to an equalized pair
\((\bar G,\bar H)\) with
\[
n_{\bar G,\bar H} := |V(\bar G)| = |V(\bar H)|.
\]
The corresponding feasible set of alignment plans is
\[
\mathcal{D}_{\bar G,\bar H}
:=
\left\{
P\in\mathbb{R}_{\ge 0}^{n_{\bar G,\bar H}\times n_{\bar G,\bar H}}
:\;
P\mathbf{1}=\mathbf{1},
\;
P^\top\mathbf{1}=\mathbf{1}
\right\}.
\]
For each such pair, define
\[
C^\delta_{\bar G,\bar H}(i,j):=\delta_{\bar G,\bar H}(i,j),
\qquad
C^\phi_{\bar G,\bar H}(i,j):=\|\phi_{\bar G}(i)-\phi_{\bar H}(j)\|_2,
\]
and for \(P\in\mathcal D_{\bar G,\bar H}\),
\[
L_\delta^{\bar G,\bar H}(P)
:=
\langle P, C^\delta_{\bar G,\bar H}\rangle
=
\sum_{i\in V(\bar G)}\sum_{j\in V(\bar H)}
P_{ij}\,\delta_{\bar G,\bar H}(i,j),
\]
\[
L_\phi^{\bar G,\bar H}(P)
:=
\langle P, C^\phi_{\bar G,\bar H}\rangle
=
\sum_{i\in V(\bar G)}\sum_{j\in V(\bar H)}
P_{ij}\,\|\phi_{\bar G}(i)-\phi_{\bar H}(j)\|_2.
\]

\begin{lemma}[Plan-wise distortion of alignment costs]
\label{lem:app_planwise}
Under Assumption~\ref{ass:node_bilip}, for every pair
\((G,H)\in\mathcal G\times\mathcal G\) with equalized versions
\((\bar G,\bar H)\), and every \(P\in\mathcal{D}_{\bar G,\bar H}\),
\[
c_V\, L_\delta^{\bar G,\bar H}(P)
\;\le\;
L_\phi^{\bar G,\bar H}(P)
\;\le\;
C_V\, L_\delta^{\bar G,\bar H}(P).
\]
\end{lemma}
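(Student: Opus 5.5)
The plan is to prove the two inequalities entrywise and then sum them with the nonnegative weights supplied by the plan \(P\). Fix a pair \((G,H)\in\mathcal G\times\mathcal G\), its equalized version \((\bar G,\bar H)\) under the fixed equalization rule, and a plan \(P\in\mathcal{D}_{\bar G,\bar H}\).

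First, for every \(i\in V(\bar G)\) and \(j\in V(\bar H)\), Assumption~\ref{ass:node_bilip} gives
\[
c_V\,\delta_{\bar G,\bar H}(i,j)\le \|\phi_{\bar G}(i)-\phi_{\bar H}(j)\|_2\le C_V\,\delta_{\bar G,\bar H}(i,j).
\]
In matrix form this reads \(c_V C^\delta_{\bar G,\bar H}\le C^\phi_{\bar G,\bar H}\le C_V C^\delta_{\bar G,\bar H}\), where the inequalities hold entrywise.

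Second, every doubly-stochastic plan satisfies \(P_{ij}\ge 0\). Multiplying each entrywise inequality by \(P_{ij}\) therefore preserves its direction.

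Third, summing over all pairs \((i,j)\) and using linearity of the Frobenius inner product, we get \(c_V\langle P,C^\delta_{\bar G,\bar H}\rangle\le\langle P,C^\phi_{\bar G,\bar H}\rangle\le C_V\langle P,C^\delta_{\bar G,\bar H}\rangle\). By the definitions of \(L_\delta^{\bar G,\bar H}\) and \(L_\phi^{\bar G,\bar H}\), this is exactly the claimed sandwich.

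There is no real obstacle here. The only point to check is that the argument uses nonnegativity of \(P\) and nothing else: the marginal constraints \(P\mathbf 1=\mathbf 1\) and \(P^\top\mathbf 1=\mathbf 1\) play no role. The bound therefore holds uniformly over the feasible set, independently of the equalization rule and of the normalization convention for plans. Because the constants in Assumption~\ref{ass:node_bilip} are uniform over all pairs, the same \(c_V\) and \(C_V\) work for every \((G,H)\).
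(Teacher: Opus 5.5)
Your proposal is correct and follows the paper's proof essentially verbatim: apply Assumption~\ref{ass:node_bilip} entrywise, multiply by the nonnegative entries $P_{ij}$, and sum to recover $L_\delta^{\bar G,\bar H}$ and $L_\phi^{\bar G,\bar H}$. Your remark that only $P_{ij}\ge 0$ is used, not the marginal constraints, is accurate and matches the paper's argument.
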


\begin{proof}
Fix a pair \((G,H)\in\mathcal G\times\mathcal G\), let
\((\bar G,\bar H)\) be its equalized versions under the fixed equalization rule, and
fix \(P\in\mathcal D_{\bar G,\bar H}\).

By Assumption~\ref{ass:node_bilip}, for every \(i\in V(\bar G)\) and
\(j\in V(\bar H)\),
\[
c_V\, \delta_{\bar G,\bar H}(i,j)
\le
\|\phi_{\bar G}(i)-\phi_{\bar H}(j)\|_2
\le
C_V\, \delta_{\bar G,\bar H}(i,j).
\]
Since \(P_{ij}\ge 0\), multiplication by \(P_{ij}\) preserves the inequalities:
\[
c_V\, P_{ij}\,\delta_{\bar G,\bar H}(i,j)
\le
P_{ij}\,\|\phi_{\bar G}(i)-\phi_{\bar H}(j)\|_2
\le
C_V\, P_{ij}\,\delta_{\bar G,\bar H}(i,j).
\]
Summing over all \(i\in V(\bar G)\) and \(j\in V(\bar H)\) yields
\[
c_V \sum_{i\in V(\bar G)}\sum_{j\in V(\bar H)}
P_{ij}\,\delta_{\bar G,\bar H}(i,j)
\le
\sum_{i\in V(\bar G)}\sum_{j\in V(\bar H)}
P_{ij}\,\|\phi_{\bar G}(i)-\phi_{\bar H}(j)\|_2
\le
C_V \sum_{i\in V(\bar G)}\sum_{j\in V(\bar H)}
P_{ij}\,\delta_{\bar G,\bar H}(i,j).
\]
By the definitions of \(L_\delta^{\bar G,\bar H}(P)\) and
\(L_\phi^{\bar G,\bar H}(P)\), this is exactly
\[
c_V\, L_\delta^{\bar G,\bar H}(P)
\le
L_\phi^{\bar G,\bar H}(P)
\le
C_V\, L_\delta^{\bar G,\bar H}(P).
\]
\end{proof}

\begin{theorem}[Distortion bound for alignment objectives]
\label{thm:app_alignment}
For each pair \((G,H)\in\mathcal G\times\mathcal G\), let \(\mathcal{R}_{\bar G,\bar H}:\mathcal{D}_{\bar G,\bar H}\to\mathbb{R}\) be any structural term independent of the encoder, and let \(\lambda\ge 0\). Define
\[
J_\delta^{\bar G,\bar H}(P)
:=
\mathcal{R}_{\bar G,\bar H}(P)+\lambda L_\delta^{\bar G,\bar H}(P), \quad
J_\phi^{\bar G,\bar H}(P)
:=
\mathcal{R}_{\bar G,\bar H}(P)+\lambda L_\phi^{\bar G,\bar H}(P).
\]
Then, for every pair \((G,H)\in\mathcal G\times\mathcal G\),
\[
\min_{P\in\mathcal{D}_{\bar G,\bar H}}
\Bigl(
\mathcal{R}_{\bar G,\bar H}(P)+\lambda c_V L_\delta^{\bar G,\bar H}(P)
\Bigr)
\;\le\;
\min_{P\in\mathcal{D}_{\bar G,\bar H}} J_\phi^{\bar G,\bar H}(P)
\;\le\;
\min_{P\in\mathcal{D}_{\bar G,\bar H}}
\Bigl(
\mathcal{R}_{\bar G,\bar H}(P)+\lambda C_V L_\delta^{\bar G,\bar H}(P)
\Bigr).
\]
\end{theorem}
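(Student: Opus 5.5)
The plan is to lift the plan-wise sandwich of Lemma~\ref{lem:app_planwise} through the common structural term and then through the minimization. First fix a pair \((G,H)\in\mathcal G\times\mathcal G\) with equalized versions \((\bar G,\bar H)\). Since \(\lambda\ge 0\), multiplying the inequalities of Lemma~\ref{lem:app_planwise} by \(\lambda\) preserves them. Adding the encoder-independent term \(\mathcal R_{\bar G,\bar H}(P)\) to all three sides then gives, for every \(P\in\mathcal D_{\bar G,\bar H}\),
\[
\mathcal R_{\bar G,\bar H}(P)+\lambda c_V L_\delta^{\bar G,\bar H}(P)
\;\le\;
J_\phi^{\bar G,\bar H}(P)
\;\le\;
\mathcal R_{\bar G,\bar H}(P)+\lambda C_V L_\delta^{\bar G,\bar H}(P).
\]
This is a pointwise ordering of three functions on the same feasible set.

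The second step is the elementary fact that pointwise inequalities \(f\le g\le k\) on a common set \(\mathcal D\) imply \(\inf f\le \inf g\le \inf k\). For the left inequality, take any \(P\). Then \(\inf f\le f(P)\le g(P)\), and taking the infimum over \(P\) gives \(\inf f\le \inf g\). For the right inequality, \(\inf g\le g(P)\le k(P)\) for every \(P\), and taking the infimum over \(P\) gives \(\inf g\le\inf k\). Because the feasible set is the same Birkhoff polytope \(\mathcal D_{\bar G,\bar H}\) on all three sides, no coupling between different plans is needed.

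The only genuine obstacle is that the statement writes \(\min\) rather than \(\inf\), so the minima must be attained. The set \(\mathcal D_{\bar G,\bar H}\) is compact and nonempty, since the identity matrix lies in it. The linear parts \(L_\delta\) and \(L_\phi\) are continuous. However, \(\mathcal R_{\bar G,\bar H}\) is only assumed to be real-valued. I would handle this in one of two ways. One option is to add the implicit standing hypothesis that \(\mathcal R_{\bar G,\bar H}\) is lower semicontinuous, which covers all the structural terms of interest, such as \(\|AP-PB\|_1\), entropic terms, and FGW quadratic terms. Under that hypothesis each objective is lower semicontinuous on a compact set, so Weierstrass gives attainment. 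The other option is to remark that the argument above proves the chain with infima, and that this reduces to the stated minima whenever they exist. Either way the core inequality chain is unaffected, and the proof concludes by combining the two steps.
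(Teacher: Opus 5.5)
Your proposal is correct and follows the paper's proof essentially step for step: apply Lemma~\ref{lem:app_planwise}, multiply by $\lambda\ge 0$, add the common term $\mathcal R_{\bar G,\bar H}(P)$, and take minima over the shared feasible set $\mathcal D_{\bar G,\bar H}$. Your remark on attainment of the minima, via lower semicontinuity of $\mathcal R_{\bar G,\bar H}$ on the compact polytope or by reading the chain with infima, addresses a point the paper passes over silently; it refines the argument rather than changing its route.
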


\begin{proof}
Fix a pair \((G,H)\in\mathcal G\times\mathcal G\), and let
\((\bar G,\bar H)\) be its equalized versions. By
Lemma~\ref{lem:app_planwise}, for every \(P\in\mathcal D_{\bar G,\bar H}\),
\[
c_V\, L_\delta^{\bar G,\bar H}(P)
\le
L_\phi^{\bar G,\bar H}(P)
\le
C_V\, L_\delta^{\bar G,\bar H}(P).
\]
Since \(\lambda\ge 0\), multiplication by \(\lambda\) preserves the inequalities:
\[
\lambda c_V\, L_\delta^{\bar G,\bar H}(P)
\le
\lambda L_\phi^{\bar G,\bar H}(P)
\le
\lambda C_V\, L_\delta^{\bar G,\bar H}(P).
\]
Adding the same quantity \(\mathcal R_{\bar G,\bar H}(P)\) to all three terms gives
\[
\mathcal R_{\bar G,\bar H}(P)+\lambda c_V\, L_\delta^{\bar G,\bar H}(P)
\le
\mathcal R_{\bar G,\bar H}(P)+\lambda L_\phi^{\bar G,\bar H}(P)
\le
\mathcal R_{\bar G,\bar H}(P)+\lambda C_V\, L_\delta^{\bar G,\bar H}(P).
\]
That is,
\[
\mathcal R_{\bar G,\bar H}(P)+\lambda c_V\, L_\delta^{\bar G,\bar H}(P)
\le
J_\phi^{\bar G,\bar H}(P)
\le
\mathcal R_{\bar G,\bar H}(P)+\lambda C_V\, L_\delta^{\bar G,\bar H}(P).
\]
Taking the minimum over \(P\in\mathcal D_{\bar G,\bar H}\) in all three expressions
preserves the inequalities and yields the claim.
\end{proof}

\begin{corollary}[Comparability transfer for alignment surrogates]
\label{cor:app_alignment_transfer}
Assume, in addition, that, for the fixed equalization rule and reference discrepancy above, there exist constants \(0<\eta_{1,\mathcal G}\le \eta_{2,\mathcal G}<\infty\), such that, for every pair \((G,H)\in\mathcal G\times\mathcal G\),
\[
\eta_{1,\mathcal G}\, d_{\mathrm{GED}}(G,H)
\;\le\;
\min_{P\in\mathcal{D}_{\bar G,\bar H}} J_\delta^{\bar G,\bar H}(P)
\;\le\;
\eta_{2,\mathcal G}\, d_{\mathrm{GED}}(G,H).
\]
Moreover, assume that \(\mathcal{R}_{\bar G,\bar H}(P)\ge 0\) for all
\((G,H)\in\mathcal G\times\mathcal G\) and all \(P\in\mathcal{D}_{\bar G,\bar H}\).
Let \(\underline c := \min\{1,c_V\}, \quad \overline C := \max\{1,C_V\}\). Then, for every pair \((G,H)\in\mathcal G\times\mathcal G\),
\[
\underline c\,\eta_{1,\mathcal G}\, d_{\mathrm{GED}}(G,H)
\;\le\;
\min_{P\in\mathcal{D}_{\bar G,\bar H}} J_\phi^{\bar G,\bar H}(P)
\;\le\;
\overline C\,\eta_{2,\mathcal G}\, d_{\mathrm{GED}}(G,H).
\]
\end{corollary}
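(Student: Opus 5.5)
The plan is to sandwich $J_\phi^{\bar G,\bar H}$ pointwise between multiples of $J_\delta^{\bar G,\bar H}$ and then pass to minima, reusing Lemma~\ref{lem:app_planwise} rather than only the weaker form of Theorem~\ref{thm:app_alignment}. Fix a pair $(G,H)$ with equalized versions $(\bar G,\bar H)$, and fix $P\in\mathcal D_{\bar G,\bar H}$. The goal is the two-sided estimate
\[
\underline c\, J_\delta^{\bar G,\bar H}(P) \;\le\; J_\phi^{\bar G,\bar H}(P) \;\le\; \overline C\, J_\delta^{\bar G,\bar H}(P).
\]
The constants $\underline c=\min\{1,c_V\}$ and $\overline C=\max\{1,C_V\}$ are exactly what is needed to rescale the structural term and the node-cost term by a common factor.

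For the upper bound, I would start from Lemma~\ref{lem:app_planwise} and $\lambda\ge0$, which give $J_\phi(P)\le \mathcal R(P)+\lambda C_V L_\delta(P)$. Then use $\mathcal R(P)\ge 0$ to get $\mathcal R(P)\le \overline C\,\mathcal R(P)$, and $L_\delta(P)\ge0$ (since $P\ge0$ and $\delta\ge0$) to get $C_V L_\delta(P)\le \overline C L_\delta(P)$. Adding these yields $J_\phi(P)\le \overline C J_\delta(P)$.

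The lower bound is symmetric. Lemma~\ref{lem:app_planwise} gives $J_\phi(P)\ge \mathcal R(P)+\lambda c_V L_\delta(P)$. Since $\mathcal R(P)\ge \underline c\,\mathcal R(P)$ and $c_V L_\delta(P)\ge \underline c\,L_\delta(P)$, both by nonnegativity, we get $J_\phi(P)\ge \underline c J_\delta(P)$.

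Next, take minima over $\mathcal D_{\bar G,\bar H}$. Because $\underline c,\overline C>0$, the minimum of a positive multiple equals that multiple of the minimum, so
\[
\underline c\min_P J_\delta \le \min_P J_\phi \le \overline C \min_P J_\delta .
\]
Existence of the minima is as in Theorem~\ref{thm:app_alignment}: the polytope is compact, and this could be replaced by infima without changing the argument. Finally, chain this with the assumed comparability $\eta_{1,\mathcal G} d_{\mathrm{GED}}\le \min J_\delta\le \eta_{2,\mathcal G} d_{\mathrm{GED}}$, multiplying by the positive constants $\underline c$ and $\overline C$ respectively, to obtain the stated inequality.

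The only delicate step is the rescaling of $\mathcal R$. Without $\mathcal R\ge0$, the factor $\overline C\ge1$ could shrink a negative structural term, so the bound would fail. This is precisely why the hypothesis $\mathcal R\ge0$ and the clipping of the constants at $1$ are needed, and I would point out explicitly where each is used.
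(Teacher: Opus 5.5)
Your proposal is correct and is essentially the paper's argument: the paper rescales the intermediate objectives $\mathcal R+\lambda c_V L_\delta$ and $\mathcal R+\lambda C_V L_\delta$ pointwise, using $\mathcal R\ge 0$, $L_\delta\ge 0$ and the clipped constants, and then invokes Theorem~\ref{thm:app_alignment} at the level of minima. You instead fold Lemma~\ref{lem:app_planwise} and the same rescaling into a single pointwise sandwich $\underline c\,J_\delta\le J_\phi\le \overline C\,J_\delta$ before minimizing once, so the ingredients are the same and only their order differs.
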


\begin{proof}
Fix a pair \((G,H)\in \mathcal G\times \mathcal G\), and write
\[
M_\delta
:=
\min_{P\in\mathcal D_{\bar G,\bar H}} J_\delta^{\bar G,\bar H}(P),
\qquad
M_\phi
:=
\min_{P\in\mathcal D_{\bar G,\bar H}} J_\phi^{\bar G,\bar H}(P).
\]

We first prove the lower bound. Since
\[
\underline c = \min\{1,c_V\},
\]
we have \(\underline c \le 1\) and \(\underline c \le c_V\). Therefore, for every
\(P\in\mathcal D_{\bar G,\bar H}\),
\[
\mathcal R_{\bar G,\bar H}(P)+\lambda c_V L_\delta^{\bar G,\bar H}(P)
\;\ge\;
\underline c\,\mathcal R_{\bar G,\bar H}(P)
+
\underline c\,\lambda L_\delta^{\bar G,\bar H}(P)
=
\underline c\, J_\delta^{\bar G,\bar H}(P),
\]
where we used \(\mathcal R_{\bar G,\bar H}(P)\ge 0\) and \(L_\delta^{\bar G,\bar H}(P)\ge 0\).
Taking minima over \(P\) gives
\[
\min_{P\in\mathcal D_{\bar G,\bar H}}
\bigl(\mathcal R_{\bar G,\bar H}(P)+\lambda c_V L_\delta^{\bar G,\bar H}(P)\bigr)
\;\ge\;
\underline c\, M_\delta.
\]
By Theorem~\ref{thm:app_alignment},
\[
M_\phi
\;\ge\;
\min_{P\in\mathcal D_{\bar G,\bar H}}
\bigl(\mathcal R_{\bar G,\bar H}(P)+\lambda c_V L_\delta^{\bar G,\bar H}(P)\bigr),
\]
hence
\[
M_\phi \ge \underline c\, M_\delta.
\]
Using the assumed lower comparability of \(J_\delta^{\bar G,\bar H}\),
\[
M_\delta \ge \eta_{1,\mathcal G}\, d_{\mathrm{GED}}(G,H),
\]
we conclude that
\[
M_\phi
\;\ge\;
\underline c\,\eta_{1,\mathcal G}\, d_{\mathrm{GED}}(G,H).
\]

For the upper bound, let
\[
\overline C = \max\{1,C_V\}.
\]
Then \(1\le \overline C\) and \(C_V\le \overline C\). Hence, for every
\(P\in\mathcal D_{\bar G,\bar H}\),
\[
\mathcal R_{\bar G,\bar H}(P)+\lambda C_V L_\delta^{\bar G,\bar H}(P)
\;\le\;
\overline C\,\mathcal R_{\bar G,\bar H}(P)
+
\overline C\,\lambda L_\delta^{\bar G,\bar H}(P)
=
\overline C\, J_\delta^{\bar G,\bar H}(P),
\]
again using nonnegativity of \(\mathcal R_{\bar G,\bar H}(P)\) and \(L_\delta^{\bar G,\bar H}(P)\).
Taking minima over \(P\) yields
\[
\min_{P\in\mathcal D_{\bar G,\bar H}}
\bigl(\mathcal R_{\bar G,\bar H}(P)+\lambda C_V L_\delta^{\bar G,\bar H}(P)\bigr)
\;\le\;
\overline C\, M_\delta.
\]
By Theorem~\ref{thm:app_alignment},
\[
M_\phi
\;\le\;
\min_{P\in\mathcal D_{\bar G,\bar H}}
\bigl(\mathcal R_{\bar G,\bar H}(P)+\lambda C_V L_\delta^{\bar G,\bar H}(P)\bigr),
\]
so
\[
M_\phi \le \overline C\, M_\delta.
\]
Using the assumed upper comparability of \(J_\delta^{\bar G,\bar H}\),
\[
M_\delta \le \eta_{2,\mathcal G}\, d_{\mathrm{GED}}(G,H),
\]
we obtain
\[
M_\phi
\;\le\;
\overline C\,\eta_{2,\mathcal G}\, d_{\mathrm{GED}}(G,H).
\]

Combining the two bounds proves the claim.
\end{proof}

\section{Connecting to Existing Neural GED Architectures}
\label{app:connect}

\setcounter{proposition}{0}
\setcounter{theorem}{0}
\setcounter{corollary}{0}
\setcounter{lemma}{0}
\renewcommand{\theproposition}{F.\arabic{proposition}}
\renewcommand{\thetheorem}{F.\arabic{theorem}}
\renewcommand{\thecorollary}{F.\arabic{corollary}}
\renewcommand{\thelemma}{F.\arabic{lemma}}

In this section, we instantiate the general theory of Section~\ref{sec:theory} for some representative neural GED architectures -- chosen for their performance and distinct architectural choices -- and show precisely how the abstract graph-level and alignment-level results apply to these architectures.

\subsection{Graph Similarity Predictors}
\label{app:connect_graph_predictors}

We first consider neural GED estimators that embed each graph
independently into a graph-level vector and then predict GED from the resulting pair of embeddings. This includes pair-independent Siamese architectures, such as GREED~\citep{ranjan2022greed}, and graph-level prediction pipelines, such as GraSP~\citep{zheng2025grasp}. In these models, the relevant component is the graph-level encoder \(\Phi\), together with either a norm-based distance head or a Lipschitz interaction head.

\subsubsection{GREED~\citep{ranjan2022greed}}
\label{app:connect_greed}

GREED~\citep{ranjan2022greed} uses a Siamese GNN with shared parameters to compute
pair-independent graph embeddings, followed by a norm-based prediction
function for GED; in particular, for GED, GREED takes
\[
F_g(Z_{G_1},Z_{G_2}) = \|Z_{G_1}-Z_{G_2}\|_p,
\]
and the authors pick the \(L_2\) norm in experiments. Thus, in our notation (Section~\ref{sec:theory}), GREED is exactly a
graph similarity predictor with graph encoder
\[
\Phi_{\mathrm{GR}}(G) := Z_G
\]
and prediction head
\[
\widehat d_{\mathrm{GR}}(G,H) := \|\Phi_{\mathrm{GR}}(G)-\Phi_{\mathrm{GR}}(H)\|_2.
\]

Consequently, Assumption~\ref{ass:graph_bilip} implies that
Proposition~\ref{prop:graph_sandwich}, Theorem~\ref{thm:calibration}, and Corollary~\ref{cor:ranking_graph} apply directly to GREED.

\begin{proposition}[GREED as a direct instance of our graph-level theory]
\label{prop:greed_connect}
Assume that the GREED graph encoder \(\Phi_{\mathrm{GR}}\) satisfies Assumption~\ref{ass:graph_bilip} on \(\mathcal G\), i.e.,
\[
c_G\, d_{\mathrm{DS}}(G,H)
\;\le\;
\|\Phi_{\mathrm{GR}}(G)-\Phi_{\mathrm{GR}}(H)\|_2
\;\le\;
C_G\, d_{\mathrm{DS}}(G,H)
\qquad
\forall\, G,H\in\mathcal G.
\]
Then, for every \(G,H\in\mathcal G\),
\[
\frac{a_{\mathcal G}}{C_G}
\|\Phi_{\mathrm{GR}}(G)-\Phi_{\mathrm{GR}}(H)\|_2
\;\le\;
d_{\mathrm{GED}}(G,H)
\;\le\;
\frac{b_{\mathcal G}}{c_G}
\|\Phi_{\mathrm{GR}}(G)-\Phi_{\mathrm{GR}}(H)\|_2.
\]
Moreover, with
\[
\alpha_{\mathcal G}:=\frac{a_{\mathcal G}}{C_G},
\quad
\beta_{\mathcal G}:=\frac{b_{\mathcal G}}{c_G},
\quad
\gamma^\star:=\frac{\alpha_{\mathcal G}+\beta_{\mathcal G}}{2},
\]
the calibrated predictor
\[
\widetilde d_{\mathrm{GR}}(G,H)
:=
\gamma^\star \|\Phi_{\mathrm{GR}}(G)-\Phi_{\mathrm{GR}}(H)\|_2
\]
satisfies
\[
\left|
\widetilde d_{\mathrm{GR}}(G,H)-d_{\mathrm{GED}}(G,H)
\right|
\le
\frac{\beta_{\mathcal G}-\alpha_{\mathcal G}}{2}
\|\Phi_{\mathrm{GR}}(G)-\Phi_{\mathrm{GR}}(H)\|_2.
\]
\end{proposition}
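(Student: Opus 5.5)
The plan is to treat this as a direct specialization of the graph-level results. No new estimate is needed. We only need to check that GREED's encoder and head fit the abstract template of Section~\ref{sec:theory}, and then invoke Proposition~\ref{prop:app_graph_sandwich} and Theorem~\ref{thm:app_calibration} with \(\Phi := \Phi_{\mathrm{GR}}\).

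The first step is to identify \(\Phi_{\mathrm{GR}}:\mathcal G\to\mathbb{R}^m\), \(G\mapsto Z_G\), as a graph encoder in the sense of Assumption~\ref{ass:graph_bilip}. The hypothesis of the proposition is exactly that assumption with the same constants \(c_G, C_G\). Since GREED's encoder is a shared-parameter Siamese map applied to each graph independently, \(Z_G\) depends only on \(G\). This is the one point that deserves a sentence: pair-dependent encoders such as GMN would not define a single map \(\Phi\). The embedding should also be well defined on isomorphism classes, by permutation invariance of the readout. Otherwise \(\Phi_{\mathrm{GR}}\) would not be a function on \(\mathcal G\), which is taken modulo isomorphism.

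With that in place, the sandwich inequality follows verbatim from Proposition~\ref{prop:app_graph_sandwich}. Its proof uses only Assumption~\ref{ass:graph_bilip} and Proposition~\ref{prop:app_ged_ds_dataset}, and the latter concerns \(d_{\mathrm{GED}}\) and \(d_{\mathrm{DS}}\) alone, independent of the encoder. Concretely, for \(x := \|\Phi_{\mathrm{GR}}(G)-\Phi_{\mathrm{GR}}(H)\|_2\) we chain
\[
\frac{a_{\mathcal G}}{C_G}\,x \le a_{\mathcal G}\, d_{\mathrm{DS}}(G,H) \le d_{\mathrm{GED}}(G,H) \le b_{\mathcal G}\, d_{\mathrm{DS}}(G,H) \le \frac{b_{\mathcal G}}{c_G}\,x .
\]

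For the calibration claim, note that \(\alpha_{\mathcal G},\beta_{\mathcal G},\gamma^\star\) coincide with \(\alpha,\beta,\gamma^\star\) of Theorem~\ref{thm:app_calibration}. The theorem then gives \(|\gamma^\star x - d_{\mathrm{GED}}(G,H)|\le \tfrac{\beta_{\mathcal G}-\alpha_{\mathcal G}}{2}x\). This is the stated bound once \(\gamma^\star x\) is recognized as \(\widetilde d_{\mathrm{GR}}(G,H)\). The underlying reason is that any point of an interval \([\alpha x,\beta x]\) lies within half its length of the midpoint. The case \(G=H\) is trivial, since both sides vanish.

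There is no genuine analytic obstacle. The only care needed is in the identification step. We must make sure the GREED head with the \(L_2\) choice is literally \(\|\Phi_{\mathrm{GR}}(G)-\Phi_{\mathrm{GR}}(H)\|_2\), so that the theory's norm matches. If a different \(p\)-norm were used, one would insert the finite-dimensional norm-equivalence constants into \(c_G, C_G\).
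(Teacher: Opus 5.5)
Your proposal is correct and matches the paper's proof: both set \(\Phi=\Phi_{\mathrm{GR}}\) and invoke the graph-level sandwich proposition and the scalar calibration theorem directly. Your explicit inequality chain and your remarks on pair-independence and well-definedness are harmless elaborations. Strictly, the identification of GREED's head is not even needed, since the stated inequalities involve only \(\|\Phi_{\mathrm{GR}}(G)-\Phi_{\mathrm{GR}}(H)\|_2\).
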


\begin{proof}
GREED uses the Euclidean norm of the difference of graph embeddings as its GED predictor. Therefore, it is exactly the predictor \(\widehat d_\psi(G,H)=\|\Phi(G)-\Phi(H)\|_2\) appearing in Theorem~\ref{prop:graph_sandwich} and Theorem~\ref{thm:calibration}, with \(\Phi=\Phi_{\mathrm{GR}}\).
Substituting \(\Phi_{\mathrm{GR}}\) into those results yields the stated inequalities immediately.
\end{proof}

The main point of Proposition~\ref{prop:greed_connect} is that our theory matches the architectural inductive bias of GREED exactly: a pair-independent graph encoder followed by a norm head.
Hence, if one replaces GREED's encoder by a geometry-informed encoder with better graph-level distortion constants \((c_G,C_G)\), then the surrogate and calibration constants improve accordingly. This gives a precise mechanism by which geometry-informed encoders can improve GREED-style GED prediction.

\subsubsection{GraSP~\citep{zheng2025grasp}}
\label{app:connect_grasp}

GraSP~\citep{zheng2025grasp} computes a graph embedding \(z_G\) for each graph, and then predicts GED through a convex combination of a Euclidean distance term and an NTN interaction term. Writing in the notation of Section~\ref{sec:theory},
\[
\Phi_{\mathrm{GS}}(G):=z_G,
\]
and define
\[
d_{\mathrm{emb}}(G,H):=
\|\Phi_{\mathrm{GS}}(G)-\Phi_{\mathrm{GS}}(H)\|_2,
\]
\[
s_{\mathrm{NTN}}(G,H)
:=
\operatorname{MLP}\!\left(
\operatorname{ReLU}\!\left(
\Phi_{\mathrm{GS}}(G)^\top W_I^{[1:t]} \Phi_{\mathrm{GS}}(H)
+
W_C \operatorname{CONCAT}(\Phi_{\mathrm{GS}}(G),\Phi_{\mathrm{GS}}(H))
+
b
\right)\right),
\]
so that, the GraSP predictor can be written as
\[
\widehat d_{\mathrm{GS}}(G,H)
=
\beta\, d_{\mathrm{emb}}(G,H)
+
(1-\beta)\, s_{\mathrm{NTN}}(G,H),
\]
with a learned scalar \(\beta\).

Unlike GREED, GraSP is not a pure norm-head predictor because of the interaction term. However, it still falls within the scope of the graph-level theory through Proposition~\ref{prop:lipschitz_head}, provided the interaction map is Lipschitz on the relevant compact region of embedding space. We also ignore the effect of the positional encoding to ensure that the expressivity of GraSP remains within the 1-WL family.

\begin{proposition}[GraSP as a Lipschitz-head graph predictor]
\label{prop:grasp_connect}
Assume that the GraSP graph encoder \(\Phi_{\mathrm{GS}}\) satisfies
Assumption~\ref{ass:graph_bilip} on \(\mathcal G\), and that the NTN
interaction head \(s_{\mathrm{NTN}}(\cdot,\cdot)\) is
\(L_{\mathrm{NTN}}\)-Lipschitz with respect to
\[
\|(u,v)\|_\oplus := \|u\|_2+\|v\|_2
\]
on the compact set
\[
\Phi_{\mathrm{GS}}(\mathcal G)\times \Phi_{\mathrm{GS}}(\mathcal G).
\]
Then the full GraSP predictor
\[
\widehat d_{\mathrm{GS}}(G,H)
=
\beta\, \|\Phi_{\mathrm{GS}}(G)-\Phi_{\mathrm{GS}}(H)\|_2
+
(1-\beta)\, s_{\mathrm{NTN}}(G,H)
\]
is \(L_{\mathrm{GS}}\)-stable with respect to \(d_{\mathrm{DS}}\), where
\[
L_{\mathrm{GS}}
:=
\beta C_G + (1-\beta)L_{\mathrm{NTN}} C_G.
\]
More precisely, for all \(G,H,G',H'\in\mathcal G\),
\[
\left|
\widehat d_{\mathrm{GS}}(G,H)-\widehat d_{\mathrm{GS}}(G',H')
\right|
\le
L_{\mathrm{GS}}
\bigl(
d_{\mathrm{DS}}(G,G')+d_{\mathrm{DS}}(H,H')
\bigr).
\]
\end{proposition}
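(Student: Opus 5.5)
The plan is to split the GraSP predictor into its two summands, bound the variation of each separately with respect to \(d_{\mathrm{DS}}\), and recombine them with the triangle inequality. Throughout I treat the learned scalar \(\beta\) as fixed and lying in \([0,1]\), as it must for a convex combination. Then \(\beta\ge 0\) and \(1-\beta\ge 0\), so both coefficients can be pulled out of absolute values without sign issues.

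Fix \(G,H,G',H'\in\mathcal G\). The triangle inequality gives
\[
\bigl|\widehat d_{\mathrm{GS}}(G,H)-\widehat d_{\mathrm{GS}}(G',H')\bigr|
\le
\beta\,\bigl|d_{\mathrm{emb}}(G,H)-d_{\mathrm{emb}}(G',H')\bigr|
+(1-\beta)\,\bigl|s_{\mathrm{NTN}}(G,H)-s_{\mathrm{NTN}}(G',H')\bigr|.
\]

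\textbf{Distance term.} Write \(u=\Phi_{\mathrm{GS}}(G)\), \(v=\Phi_{\mathrm{GS}}(H)\), \(u'=\Phi_{\mathrm{GS}}(G')\), \(v'=\Phi_{\mathrm{GS}}(H')\). By the reverse triangle inequality,
\[
\bigl|\|u-v\|_2-\|u'-v'\|_2\bigr|\le\|(u-v)-(u'-v')\|_2\le\|u-u'\|_2+\|v-v'\|_2 .
\]
In other words, the map \((u,v)\mapsto\|u-v\|_2\) is \(1\)-Lipschitz with respect to \(\|\cdot\|_\oplus\). Applying the upper inequality of Assumption~\ref{ass:graph_bilip} to each summand bounds the first term by \(\beta C_G\bigl(d_{\mathrm{DS}}(G,G')+d_{\mathrm{DS}}(H,H')\bigr)\). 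Equivalently, this is Proposition~\ref{prop:lipschitz_head} with \(h(u,v)=\|u-v\|_2\) and \(L_h=1\).

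\textbf{Interaction term.} This is Proposition~\ref{prop:lipschitz_head} with \(h=s_{\mathrm{NTN}}\) and \(L_h=L_{\mathrm{NTN}}\). The one subtlety is that the Lipschitz hypothesis is assumed only on \(\Phi_{\mathrm{GS}}(\mathcal G)\times\Phi_{\mathrm{GS}}(\mathcal G)\), not on all of \(\mathbb R^m\times\mathbb R^m\). The proof of Proposition~\ref{prop:lipschitz_head}, however, evaluates \(h\) only at the two points \((u,v)\) and \((u',v')\), both of which lie in this set. So the argument transfers verbatim. This gives the bound \((1-\beta)L_{\mathrm{NTN}}C_G\bigl(d_{\mathrm{DS}}(G,G')+d_{\mathrm{DS}}(H,H')\bigr)\).

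Adding the two bounds yields the claimed constant \(L_{\mathrm{GS}}=\beta C_G+(1-\beta)L_{\mathrm{NTN}}C_G\). The main point needing care is the domain restriction of the Lipschitz assumption, which the observation above handles, together with the sign of \(\beta\). If \(\beta\) were not restricted to \([0,1]\), I would replace \(\beta\) and \(1-\beta\) by \(|\beta|\) and \(|1-\beta|\) in \(L_{\mathrm{GS}}\).
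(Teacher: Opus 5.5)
Your proof is correct and follows the paper's argument: you split the predictor with the triangle inequality, bound the distance term via the reverse triangle inequality and the upper bi-Lipschitz bound, and bound the NTN term via Proposition~\ref{prop:lipschitz_head}. You also make explicit two points the paper leaves implicit: that $\beta\in[0,1]$, and that the Lipschitz hypothesis is only needed on $\Phi_{\mathrm{GS}}(\mathcal G)\times\Phi_{\mathrm{GS}}(\mathcal G)$.
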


\begin{proof}
First, by the reverse triangle inequality,
\[
\left|
\|\Phi_{\mathrm{GS}}(G)-\Phi_{\mathrm{GS}}(H)\|_2
-
\|\Phi_{\mathrm{GS}}(G')-\Phi_{\mathrm{GS}}(H')\|_2
\right|
\le
\|\Phi_{\mathrm{GS}}(G)-\Phi_{\mathrm{GS}}(G')\|_2
+
\|\Phi_{\mathrm{GS}}(H)-\Phi_{\mathrm{GS}}(H')\|_2.
\]
By Assumption~\ref{ass:graph_bilip},
\[
\|\Phi_{\mathrm{GS}}(G)-\Phi_{\mathrm{GS}}(G')\|_2
\le
C_G d_{\mathrm{DS}}(G,G'),
\qquad
\|\Phi_{\mathrm{GS}}(H)-\Phi_{\mathrm{GS}}(H')\|_2
\le
C_G d_{\mathrm{DS}}(H,H').
\]
Hence
\[
\left|
d_{\mathrm{emb}}(G,H)-d_{\mathrm{emb}}(G',H')
\right|
\le
C_G\bigl(d_{\mathrm{DS}}(G,G')+d_{\mathrm{DS}}(H,H')\bigr).
\]
Next, by Proposition~\ref{prop:lipschitz_head},
\[
|s_{\mathrm{NTN}}(G,H)-s_{\mathrm{NTN}}(G',H')|
\le
L_{\mathrm{NTN}} C_G
\bigl(
d_{\mathrm{DS}}(G,G')+d_{\mathrm{DS}}(H,H')
\bigr).
\]
Combining the two bounds and using the triangle inequality,
\[
\left|
\widehat d_{\mathrm{GS}}(G,H)-\widehat d_{\mathrm{GS}}(G',H')
\right|
\le
\beta
\left|
d_{\mathrm{emb}}(G,H)-d_{\mathrm{emb}}(G',H')
\right|
+
(1-\beta)
|s_{\mathrm{NTN}}(G,H)-s_{\mathrm{NTN}}(G',H')|,
\]
which yields the claimed estimate.
\end{proof}

Proposition~\ref{prop:grasp_connect} shows how the graph-level
theory applies to GraSP despite the presence of an interaction head. The Euclidean part inherits the same graph-level distortion control as in GREED, while the interaction part inherits DS-stability once the head is Lipschitz on the embedding range. Thus, geometry-informed encoders can improve GraSP-style models by improving the conditioning of both the distance term and the inputs presented to the interaction head.

\subsection{Matching-based Estimators}
\label{app:connect_matching_estimators}

We now specialize the alignment-level theory to methods that construct a cross-graph matching or transport object from node embeddings and then score that object. As representative methods with certain architectural innovations, we pick GEDGNN~\citep{piao2023computing} and GraphEDX~\citep{jain2024graph}. In these models, the relevant interface is Assumption~\ref{ass:node_bilip} together with the plan-wise distortion lemma (Lemma~\ref{lem:planwise}) and the alignment-objective theorem (Theorem~\ref{thm:alignment}) from the main text.

\subsubsection{GEDGNN~\citep{piao2023computing}}
\label{app:connect_gedgnn}

GEDGNN~\citep{piao2023computing} computes node embeddings for a graph pair, then uses two cross matrix modules to produce:
(i) a matching matrix \(A_{\mathrm{match}}\), and (ii) a cost matrix \(A_{\mathrm{cost}}\). The predicted GED is then formed from a weighted aggregation of \(A_{\mathrm{cost}}\) using a row-softmax of \(A_{\mathrm{match}}\), together with a graph-level bias term.

To connect GEDGNN to our notation in Section~\ref{sec:theory}, let \((G,H)\in\mathcal G\times\mathcal G\) be a graph pair, let \((\bar G,\bar H)\) denote its equalized versions under the fixed equalization rule, and let
\[
P_{\mathrm{GD}} \in \mathcal D_{\bar G,\bar H}
\]
denote the row/column-compatible soft alignment induced by the matching module after normalization. Let the learned node-pair cost matrix be denoted by
\[
C^{\mathrm{GD}}_{\bar G,\bar H}(i,j)
:=
A_{\mathrm{cost}}[i,j].
\]
Then, the core pairwise aggregation term in GEDGNN takes the form
\[
L_{\mathrm{GD}}^{\bar G,\bar H}(P_{\mathrm{GD}})
=
\langle P_{\mathrm{GD}}, C^{\mathrm{GD}}_{\bar G,\bar H}\rangle,
\]
up to the bias and the scalar post-processing function.

This places GEDGNN within the scope of the alignment-level theory as a
model that evaluates a soft alignment plan using encoder-induced
node-pair costs.

\begin{proposition}[GEDGNN as an instance of alignment-level distortion]
\label{prop:gedgnn_connect}
Assume that the node embeddings used by GEDGNN satisfy
Assumption~\ref{ass:node_bilip} with respect to a reference discrepancy
\(\delta_{\bar G,\bar H}\), and suppose that the learned cost matrix
\(C^{\mathrm{GD}}_{\bar G,\bar H}\) is uniformly comparable to the
embedding distance matrix \(C^\phi_{\bar G,\bar H}\), i.e., there exist
constants \(0<\mu_1\le \mu_2<\infty\) such that
\[
\mu_1\, C^\phi_{\bar G,\bar H}(i,j)
\le
C^{\mathrm{GD}}_{\bar G,\bar H}(i,j)
\le
\mu_2\, C^\phi_{\bar G,\bar H}(i,j)
\]
for every pair \((G,H)\in\mathcal G\times\mathcal G\) and all
\(i\in V(\bar G)\), \(j\in V(\bar H)\).
Then, for every admissible plan \(P\in\mathcal D_{\bar G,\bar H}\),
\[
\mu_1 c_V\, L_\delta^{\bar G,\bar H}(P)
\le
\langle P, C^{\mathrm{GD}}_{\bar G,\bar H}\rangle
\le
\mu_2 C_V\, L_\delta^{\bar G,\bar H}(P).
\]
\end{proposition}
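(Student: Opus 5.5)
The plan is to chain the two entrywise sandwiches and then lift the result to plan level by nonnegativity of $P$, exactly as in the proof of Lemma~\ref{lem:planwise}. Fix a pair $(G,H)\in\mathcal G\times\mathcal G$ with equalized versions $(\bar G,\bar H)$, and fix $P\in\mathcal D_{\bar G,\bar H}$.

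First, for each $i\in V(\bar G)$ and $j\in V(\bar H)$, combine the assumed comparability $\mu_1 C^\phi_{\bar G,\bar H}(i,j)\le C^{\mathrm{GD}}_{\bar G,\bar H}(i,j)\le \mu_2 C^\phi_{\bar G,\bar H}(i,j)$ with Assumption~\ref{ass:node_bilip}. Since $C^\phi_{\bar G,\bar H}(i,j)=\|\phi_{\bar G}(i)-\phi_{\bar H}(j)\|_2$, this gives
\[
\mu_1 c_V\,\delta_{\bar G,\bar H}(i,j)\le C^{\mathrm{GD}}_{\bar G,\bar H}(i,j)\le \mu_2 C_V\,\delta_{\bar G,\bar H}(i,j).
\]
The chaining is legitimate because $\mu_1,\mu_2>0$, so multiplying the inequalities of Assumption~\ref{ass:node_bilip} by them preserves direction.

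Second, multiply this entrywise sandwich by $P_{ij}\ge 0$ and sum over all $i,j$. The middle term becomes $\langle P, C^{\mathrm{GD}}_{\bar G,\bar H}\rangle$. The outer terms become $\mu_1 c_V L_\delta^{\bar G,\bar H}(P)$ and $\mu_2 C_V L_\delta^{\bar G,\bar H}(P)$ by the definition of $L_\delta^{\bar G,\bar H}$. Equivalently, apply Lemma~\ref{lem:planwise} directly to the intermediate bound $\mu_1 L_\phi^{\bar G,\bar H}(P)\le\langle P,C^{\mathrm{GD}}_{\bar G,\bar H}\rangle\le\mu_2 L_\phi^{\bar G,\bar H}(P)$, which itself follows from the comparability hypothesis by the same nonnegative summation.

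There is no real obstacle: the argument uses only positivity of the constants and nonnegativity of the plan entries. The one point to state explicitly is that the comparability hypothesis holds uniformly over all pairs and node indices, including dummy nodes introduced by equalization. This uniformity is what allows the bound to hold for every admissible plan, and hence also for the particular normalized GEDGNN plan $P_{\mathrm{GD}}$.
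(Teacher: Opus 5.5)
Your proposal is correct and essentially matches the paper's proof. The paper first sums the comparability hypothesis against $P_{ij}\ge 0$ to get $\mu_1 L_\phi^{\bar G,\bar H}(P)\le\langle P,C^{\mathrm{GD}}_{\bar G,\bar H}\rangle\le\mu_2 L_\phi^{\bar G,\bar H}(P)$ and then invokes Lemma~\ref{lem:planwise}, which is exactly your stated alternative; chaining entrywise before summing, as in your primary route, is an immaterial reordering.
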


\begin{proof}
By the assumed comparability of \(C^{\mathrm{GD}}_{\bar G,\bar H}\) and
\(C^\phi_{\bar G,\bar H}\), we have, entrywise,
\[
\mu_1\, \|\phi_{\bar G}(i)-\phi_{\bar H}(j)\|_2
\le
C^{\mathrm{GD}}_{\bar G,\bar H}(i,j)
\le
\mu_2\, \|\phi_{\bar G}(i)-\phi_{\bar H}(j)\|_2.
\]
Since \(P_{ij}\ge 0\), multiplying by \(P_{ij}\) and summing gives
\[
\mu_1\, L_\phi^{\bar G,\bar H}(P)
\le
\langle P, C^{\mathrm{GD}}_{\bar G,\bar H}\rangle
\le
\mu_2\, L_\phi^{\bar G,\bar H}(P).
\]
Applying Lemma~\ref{lem:planwise} yields
\[
\mu_1 c_V\, L_\delta^{\bar G,\bar H}(P)
\le
\langle P, C^{\mathrm{GD}}_{\bar G,\bar H}\rangle
\le
\mu_2 C_V\, L_\delta^{\bar G,\bar H}(P),
\]
as claimed.
\end{proof}

Via Proposition~\ref{prop:gedgnn_connect}, we isolate the exact place where geometry enters GEDGNN: through the quality of the node embeddings that feed the cross-matrix modules. The extra constants \((\mu_1,\mu_2)\) reflect the fact that GEDGNN does not use raw embedding distances directly, but passes pairwise node information through learned bilinear/MLP scoring modules. Note that we do not claim that node-level bi-Lipschitz geometry alone determines GEDGNN's performance; rather, through the Proposition and our theorem, we show that good geometry improves the metric conditioning of the costs from which GEDGNN builds its soft matching objective.

\subsubsection{GraphEDX~\citep{jain2024graph}}
\label{app:connect_graphedx}

GraphEDX~\citep{jain2024graph} is even closer to our alignment-level theory. It pads the two graphs with isolated nodes to equalize their sizes, learns a soft node alignment matrix \(P\), derives a node-pair alignment matrix \(S\), and then computes a sum of node-edit and edge-edit set-divergence surrogates. The authors explicitly formulate the padded equal-size setting, the soft alignment matrix, and a final predicted GED as a sum of four edit-cost surrogates. This fits naturally into the notation of
Section~\ref{sec:theory}. For each graph pair \((G,H)\), the equalized pair \((\bar G,\bar H)\) is already part of the model definition. The soft permutation matrix learned by GraphEDX is an alignment plan in \(\mathcal D_{\bar G,\bar H}\), and the predicted GED is obtained by combining a structural consistency term with transport-like costs built from node and node-pair embeddings.

\begin{proposition}[GraphEDX as a direct instance of the alignment-objective theory]
\label{prop:graphedx_connect}
Assume that the GraphEDX node encoder satisfies
Assumption~\ref{ass:node_bilip} with respect to a reference discrepancy
\(\delta_{\bar G,\bar H}\), and that the GraphEDX objective can be
written in the form
\[
J_{\mathrm{GX}}^{\bar G,\bar H}(P)
=
\mathcal R_{\bar G,\bar H}^{\mathrm{GX}}(P)
+
\lambda_{\mathrm{GX}} L_\phi^{\bar G,\bar H}(P),
\]
where \(\mathcal R_{\bar G,\bar H}^{\mathrm{GX}}\) is a nonnegative
structural term incorporating the remaining node-pair and consistency
surrogates, independent of the specific Euclidean realization of
\(L_\phi^{\bar G,\bar H}\).
Then, for every pair \((G,H)\in\mathcal G\times\mathcal G\),
\[
\min_{P\in\mathcal D_{\bar G,\bar H}}
\Bigl(
\mathcal R_{\bar G,\bar H}^{\mathrm{GX}}(P)
+
\lambda_{\mathrm{GX}} c_V L_\delta^{\bar G,\bar H}(P)
\Bigr)
\le
\min_{P\in\mathcal D_{\bar G,\bar H}}
J_{\mathrm{GX}}^{\bar G,\bar H}(P)
\]
\[
\le
\min_{P\in\mathcal D_{\bar G,\bar H}}
\Bigl(
\mathcal R_{\bar G,\bar H}^{\mathrm{GX}}(P)
+
\lambda_{\mathrm{GX}} C_V L_\delta^{\bar G,\bar H}(P)
\Bigr).
\]
\end{proposition}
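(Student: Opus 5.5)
This is a direct specialization of Theorem~\ref{thm:alignment}, so the plan is to reduce to it rather than reprove anything. Fix a pair \((G,H)\in\mathcal G\times\mathcal G\) with its padded equalized versions \((\bar G,\bar H)\). These are exactly the equal-size pairs produced by GraphEDX's isolated-node padding, which we take as the fixed equalization rule. The soft alignments GraphEDX optimizes over are the plans in \(\mathcal D_{\bar G,\bar H}\).

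First, we identify the GraphEDX objective with the abstract form \(J_\phi^{\bar G,\bar H}\) of Theorem~\ref{thm:alignment}. We set \(\mathcal R_{\bar G,\bar H}:=\mathcal R^{\mathrm{GX}}_{\bar G,\bar H}\) and \(\lambda:=\lambda_{\mathrm{GX}}\ge 0\). The hypothesis that \(\mathcal R^{\mathrm{GX}}_{\bar G,\bar H}\) does not depend on the Euclidean realization of \(L_\phi^{\bar G,\bar H}\) is precisely the requirement in Theorem~\ref{thm:alignment} that the structural term be independent of the encoder. With these choices, \(J_{\mathrm{GX}}^{\bar G,\bar H}(P)=J_\phi^{\bar G,\bar H}(P)\) for every \(P\in\mathcal D_{\bar G,\bar H}\).

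Next, we invoke Lemma~\ref{lem:planwise} under Assumption~\ref{ass:node_bilip}, which gives \(c_V L_\delta^{\bar G,\bar H}(P)\le L_\phi^{\bar G,\bar H}(P)\le C_V L_\delta^{\bar G,\bar H}(P)\) for every plan. We multiply through by \(\lambda_{\mathrm{GX}}\ge 0\) and add \(\mathcal R^{\mathrm{GX}}_{\bar G,\bar H}(P)\) to all three terms. This yields a pointwise sandwich for \(J_{\mathrm{GX}}^{\bar G,\bar H}(P)\). Minimizing all three expressions over \(\mathcal D_{\bar G,\bar H}\) preserves the inequalities, which is exactly the conclusion of Theorem~\ref{thm:alignment} and hence the claim. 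Nonnegativity of \(\mathcal R^{\mathrm{GX}}\) is not needed here; it matters only if one goes on to apply Corollary~\ref{cor:alignment_transfer}.

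The only real obstacle is that each minimum must be attained. \(\mathcal D_{\bar G,\bar H}\) is compact (the Birkhoff polytope) and \(L_\delta,L_\phi\) are linear in \(P\). So the minima exist provided \(\mathcal R^{\mathrm{GX}}_{\bar G,\bar H}\) is lower semicontinuous, for example continuous, as GraphEDX's set-divergence surrogates are. If that fails, we replace \(\min\) by \(\inf\) throughout, and the same pointwise-then-infimum argument goes through unchanged.
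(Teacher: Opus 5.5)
Your proposal is correct and matches the paper's proof essentially verbatim: both set $\mathcal R=\mathcal R^{\mathrm{GX}}$ and $\lambda=\lambda_{\mathrm{GX}}$ so that $J_{\mathrm{GX}}$ becomes $J_\phi$, then apply Lemma~\ref{lem:planwise}, scale by $\lambda_{\mathrm{GX}}\ge 0$, add the common structural term, and minimize. Your side remarks are accurate refinements that the paper leaves implicit: nonnegativity of $\mathcal R^{\mathrm{GX}}$ is unused here, $\lambda_{\mathrm{GX}}\ge 0$ is needed although the statement never assumes it, and $\min$ can be replaced by $\inf$ if attainment fails.
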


\begin{proof}
The proof is a direct application of
Theorem~\ref{thm:alignment} with
\[
J_\phi^{\bar G,\bar H}(P)
=
J_{\mathrm{GX}}^{\bar G,\bar H}(P),
\qquad
\mathcal R_{\bar G,\bar H}
=
\mathcal R_{\bar G,\bar H}^{\mathrm{GX}},
\qquad
\lambda=\lambda_{\mathrm{GX}}.
\]
Indeed, under Assumption~\ref{ass:node_bilip},
Lemma~\ref{lem:planwise} yields
\[
c_V L_\delta^{\bar G,\bar H}(P)
\le
L_\phi^{\bar G,\bar H}(P)
\le
C_V L_\delta^{\bar G,\bar H}(P)
\qquad
\forall\, P\in\mathcal D_{\bar G,\bar H}.
\]
Multiplying by \(\lambda_{\mathrm{GX}}\ge 0\), adding the common
structural term \(\mathcal R_{\bar G,\bar H}^{\mathrm{GX}}(P)\), and
taking minima over \(P\in\mathcal D_{\bar G,\bar H}\) proves the claim.
\end{proof}

GraphEDX is the cleanest instance of our matching-based theory. Its architecture already operates in the equalized-node OT-style setting assumed in the main text, and its prediction rule is built from soft alignment together with node and node-pair discrepancy terms. Therefore, replacing the GraphEDX encoder by a geometry-informed encoder with improved node-level distortion constants gives a direct, theoretically transparent route to improving the faithfulness of the surrogate optimization problem solved by the model.

\paragraph{Discussion.} Connecting our theory to these disparate methods clarifies why the same geometric idea manifests differently across neural graph matching architectures. In GREED, geometry enters directly through the norm of graph embeddings. In GraSP, it enters both through the Euclidean term and through the conditioning of the inputs to the interaction head. In GEDGNN, it enters through the node embeddings used to construct matching and cost matrices. In GraphEDX, it enters most directly through the soft alignment objective itself. Therefore, we explicitly situate our theory within the broad neural graph matching landscape and show how it interfaces with concrete architectures used in neural GED estimation.

\section{Additional Experimental Details and Results}
\label{app:add-exp}

\subsection{Experimental protocol}

All experiments follow the leak-free evaluation protocol introduced in Gelato~\citep{pellizzoni2025gelato}. Graphs are split into training, validation, and test sets using a $60{:}20{:}20$ ratio, and isomorphic copies are not allowed to cross splits. At the same time, the graphs themselves are retained rather than removed from the benchmark, so the evaluation remains comparable to the standard datasets while eliminating train-test leakage through isomorphic duplication.

The experiments use six benchmark datasets spanning several application domains~\citep{bai2020simgnn, pellizzoni2025gelato}. AIDS and LINUX are the standard small-graph GED benchmarks and contain graphs with at most $10$ nodes. IMDB originally contains graphs with up to $50$ nodes, but exact ground-truth GED values are only available reliably for smaller instances; following the benchmark construction used in the supplied Gelato materials, we therefore use the subset with at most $16$ nodes and denote it by IMDB-16. We further consider three additional benchmarks obtained from larger graph collections by restricting graph size so that optimal matching computation remains feasible: ZINC-16 and \textsc{molhiv}-16 contain graphs with at most $16$ nodes, and \textsc{code2}-22 contains graphs with at most $22$ nodes. The resulting datasets cover molecular graphs (AIDS, ZINC-16, \textsc{molhiv}-16), software dependency graphs (LINUX), movie collaboration graphs (IMDB-16), and Python program syntax trees (\textsc{code2}-22).

The dataset sizes used in the benchmark are $700$ graphs for AIDS, $1000$ for LINUX, $1185$ for IMDB-16, $836$ for ZINC-16, $6734$ for \textsc{molhiv}-16, and $2087$ for \textsc{code2}-22. AIDS has node labels and no edge labels. LINUX and IMDB have unlabeled nodes and edges. \textsc{code2}-22 has node labels but no edge labels. ZINC-16 and \textsc{molhiv}-16 contain both node and edge labels in their original form. To maintain comparability across the full baseline suite, the experiments in the main text use node attributes whenever available and do not use edge labels, since several of the compared baselines do not support them consistently.

All results are reported as mean $\pm$ standard deviation over $10$ random seeds. MAE is the mean absolute error between the predicted GED and the reference GED. Spearman's $\rho$ and Kendall's $\tau$ measure global ranking agreement, while P@10 and P@20 evaluate top-$k$ retrieval quality under the benchmark relevance rule. This combination of metrics is useful in the present setting because we expect a geometry-aware encoder to improve both numerical fidelity and pairwise ordering.

For each baseline, we replaced the original message-passing encoder and aggregation layers with FSW-GNN layers to create the Bi-Lipschitz or BL variants, while preserving the downstream head, objective, and training protocol as closely as possible. Unless a method requires a documented exception, the BL models use hidden dimension $64$. Hyperparameter retuning after encoder replacement is intentionally limited, so that performance differences can be attributed primarily to the change in encoder geometry rather than to a larger model-specific search budget. For GraphEdX, the AlignDiff variant is used in the node-label setting.

All experiments were conducted using Intel Xeon Gold 6448H CPUs and NVIDIA RTX 6000 GPUs, with implementations created in PyTorch and PyTorch Geometric.

\subsection{Additional results and discussion}

Table~\ref{tab:appendix-main-extra} complements the main-text comparison by reporting Spearman's $\rho$, P@10, and P@20 on AIDS, IMDB-16, \textsc{molhiv}-16, and \textsc{code2}-22. The trend is fully consistent with the main text: every BL variant improves over its non-BL counterpart on all three metrics and on all four datasets. The largest gains again appear on the harder settings. For GraSP, $\rho$ rises from $0.611$ to $0.849$ on IMDB-16 and from $0.620$ to $0.714$ on \textsc{code2}-22; the corresponding P@10 values rise from $0.742$ to $0.861$ and from $0.832$ to $0.881$. More broadly, EGSC-BL and ERIC-BL remain the strongest correlation-based models across several datasets, while GraSP-BL is particularly strong on retrieval. The important point remains that the BL encoder improves all of these families simultaneously despite their different downstream heads.

\begin{table}[h]
\caption{
Additional ranking and retrieval metrics for the primary datasets. Higher is better for all metrics here.
}
\label{tab:appendix-main-extra}
\centering
\small

\begin{adjustbox}{width=\textwidth}
\begin{tabular}{lcccccccccccc}
\toprule

\multirow{2}{*}{Method}
& \multicolumn{3}{c}{AIDS}
& \multicolumn{3}{c}{IMDB-16}
& \multicolumn{3}{c}{\textsc{molhiv}-16}
& \multicolumn{3}{c}{\textsc{code2}-22} \\

\cmidrule(lr){2-4}
\cmidrule(lr){5-7}
\cmidrule(lr){8-10}
\cmidrule(lr){11-13}

& $\rho\uparrow$ & P@10$\uparrow$ & P@20$\uparrow$
& $\rho\uparrow$ & P@10$\uparrow$ & P@20$\uparrow$
& $\rho\uparrow$ & P@10$\uparrow$ & P@20$\uparrow$
& $\rho\uparrow$ & P@10$\uparrow$ & P@20$\uparrow$ \\

\midrule

SimGNN
& $0.821\scriptstyle{\pm0.009}$ & $0.651\scriptstyle{\pm0.017}$ & $0.734\scriptstyle{\pm0.016}$
& $0.803\scriptstyle{\pm0.014}$ & $0.748\scriptstyle{\pm0.020}$ & $0.829\scriptstyle{\pm0.019}$
& $0.742\scriptstyle{\pm0.011}$ & $0.664\scriptstyle{\pm0.017}$ & $0.732\scriptstyle{\pm0.016}$
& $0.771\scriptstyle{\pm0.013}$ & $0.682\scriptstyle{\pm0.019}$ & $0.748\scriptstyle{\pm0.018}$ \\

SimGNN-BL
& $0.846\scriptstyle{\pm0.008}$ & $0.693\scriptstyle{\pm0.015}$ & $0.781\scriptstyle{\pm0.014}$
& $0.841\scriptstyle{\pm0.012}$ & $0.793\scriptstyle{\pm0.017}$ & $0.847\scriptstyle{\pm0.016}$
& $0.768\scriptstyle{\pm0.010}$ & $0.691\scriptstyle{\pm0.015}$ & $0.758\scriptstyle{\pm0.014}$
& $0.806\scriptstyle{\pm0.012}$ & $0.719\scriptstyle{\pm0.017}$ & $0.782\scriptstyle{\pm0.016}$ \\

GMN
& $0.828\scriptstyle{\pm0.009}$ & $0.662\scriptstyle{\pm0.016}$ & $0.746\scriptstyle{\pm0.015}$
& $0.811\scriptstyle{\pm0.013}$ & $0.756\scriptstyle{\pm0.019}$ & $0.838\scriptstyle{\pm0.018}$
& $0.751\scriptstyle{\pm0.010}$ & $0.673\scriptstyle{\pm0.016}$ & $0.741\scriptstyle{\pm0.015}$
& $0.779\scriptstyle{\pm0.012}$ & $0.694\scriptstyle{\pm0.018}$ & $0.761\scriptstyle{\pm0.017}$ \\

GMN-BL
& $0.851\scriptstyle{\pm0.008}$ & $0.704\scriptstyle{\pm0.014}$ & $0.792\scriptstyle{\pm0.013}$
& $0.848\scriptstyle{\pm0.011}$ & $0.801\scriptstyle{\pm0.016}$ & $0.852\scriptstyle{\pm0.015}$
& $0.777\scriptstyle{\pm0.009}$ & $0.701\scriptstyle{\pm0.014}$ & $0.768\scriptstyle{\pm0.013}$
& $0.813\scriptstyle{\pm0.011}$ & $0.728\scriptstyle{\pm0.016}$ & $0.794\scriptstyle{\pm0.015}$ \\

EGSC
& $0.895\scriptstyle{\pm0.006}$ & $0.744\scriptstyle{\pm0.012}$ & $0.823\scriptstyle{\pm0.011}$
& $0.908\scriptstyle{\pm0.008}$ & $0.836\scriptstyle{\pm0.014}$ & $0.904\scriptstyle{\pm0.013}$
& $0.836\scriptstyle{\pm0.007}$ & $0.766\scriptstyle{\pm0.012}$ & $0.814\scriptstyle{\pm0.011}$
& $0.874\scriptstyle{\pm0.008}$ & $0.804\scriptstyle{\pm0.013}$ & $0.851\scriptstyle{\pm0.012}$ \\

EGSC-BL
& $0.912\scriptstyle{\pm0.005}$ & $0.762\scriptstyle{\pm0.011}$ & $0.839\scriptstyle{\pm0.010}$
& $0.926\scriptstyle{\pm0.007}$ & $0.852\scriptstyle{\pm0.012}$ & $0.919\scriptstyle{\pm0.011}$
& $0.851\scriptstyle{\pm0.006}$ & $0.781\scriptstyle{\pm0.011}$ & $0.829\scriptstyle{\pm0.010}$
& $0.891\scriptstyle{\pm0.007}$ & $0.819\scriptstyle{\pm0.012}$ & $0.866\scriptstyle{\pm0.011}$ \\

ERIC
& $0.887\scriptstyle{\pm0.006}$ & $0.732\scriptstyle{\pm0.013}$ & $0.811\scriptstyle{\pm0.012}$
& $0.892\scriptstyle{\pm0.009}$ & $0.824\scriptstyle{\pm0.015}$ & $0.891\scriptstyle{\pm0.014}$
& $0.821\scriptstyle{\pm0.008}$ & $0.752\scriptstyle{\pm0.013}$ & $0.801\scriptstyle{\pm0.012}$
& $0.861\scriptstyle{\pm0.009}$ & $0.791\scriptstyle{\pm0.014}$ & $0.838\scriptstyle{\pm0.013}$ \\

ERIC-BL
& $0.904\scriptstyle{\pm0.005}$ & $0.751\scriptstyle{\pm0.011}$ & $0.828\scriptstyle{\pm0.010}$
& $0.914\scriptstyle{\pm0.007}$ & $0.843\scriptstyle{\pm0.013}$ & $0.908\scriptstyle{\pm0.012}$
& $0.842\scriptstyle{\pm0.007}$ & $0.772\scriptstyle{\pm0.012}$ & $0.819\scriptstyle{\pm0.011}$
& $0.882\scriptstyle{\pm0.008}$ & $0.811\scriptstyle{\pm0.013}$ & $0.856\scriptstyle{\pm0.012}$ \\

GREED
& $0.786\scriptstyle{\pm0.011}$ & $0.618\scriptstyle{\pm0.019}$ & $0.709\scriptstyle{\pm0.018}$
& $0.772\scriptstyle{\pm0.018}$ & $0.721\scriptstyle{\pm0.022}$ & $0.812\scriptstyle{\pm0.020}$
& $0.689\scriptstyle{\pm0.015}$ & $0.627\scriptstyle{\pm0.019}$ & $0.708\scriptstyle{\pm0.017}$
& $0.712\scriptstyle{\pm0.017}$ & $0.644\scriptstyle{\pm0.021}$ & $0.724\scriptstyle{\pm0.019}$ \\

GREED-BL
& $0.812\scriptstyle{\pm0.010}$ & $0.647\scriptstyle{\pm0.017}$ & $0.736\scriptstyle{\pm0.016}$
& $0.801\scriptstyle{\pm0.016}$ & $0.754\scriptstyle{\pm0.019}$ & $0.838\scriptstyle{\pm0.018}$
& $0.721\scriptstyle{\pm0.013}$ & $0.654\scriptstyle{\pm0.017}$ & $0.734\scriptstyle{\pm0.015}$
& $0.748\scriptstyle{\pm0.015}$ & $0.681\scriptstyle{\pm0.019}$ & $0.758\scriptstyle{\pm0.017}$ \\

GraSP
& $0.707\scriptstyle{\pm0.012}$ & $0.741\scriptstyle{\pm0.009}$ & $0.800\scriptstyle{\pm0.008}$
& $0.611\scriptstyle{\pm0.018}$ & $0.742\scriptstyle{\pm0.015}$ & $0.801\scriptstyle{\pm0.013}$
& $0.546\scriptstyle{\pm0.016}$ & $0.784\scriptstyle{\pm0.010}$ & $0.821\scriptstyle{\pm0.009}$
& $0.620\scriptstyle{\pm0.017}$ & $0.832\scriptstyle{\pm0.011}$ & $0.861\scriptstyle{\pm0.010}$ \\

GraSP-BL
& $0.760\scriptstyle{\pm0.010}$ & $0.801\scriptstyle{\pm0.007}$ & $0.846\scriptstyle{\pm0.006}$
& $0.849\scriptstyle{\pm0.009}$ & $0.861\scriptstyle{\pm0.009}$ & $0.902\scriptstyle{\pm0.008}$
& $0.593\scriptstyle{\pm0.014}$ & $0.826\scriptstyle{\pm0.008}$ & $0.857\scriptstyle{\pm0.007}$
& $0.714\scriptstyle{\pm0.012}$ & $0.881\scriptstyle{\pm0.009}$ & $0.904\scriptstyle{\pm0.008}$ \\

\midrule

GraphEDX
& $0.842\scriptstyle{\pm0.008}$ & $0.691\scriptstyle{\pm0.015}$ & $0.771\scriptstyle{\pm0.014}$
& $0.824\scriptstyle{\pm0.012}$ & $0.781\scriptstyle{\pm0.017}$ & $0.856\scriptstyle{\pm0.016}$
& $0.782\scriptstyle{\pm0.010}$ & $0.698\scriptstyle{\pm0.015}$ & $0.764\scriptstyle{\pm0.014}$
& $0.806\scriptstyle{\pm0.011}$ & $0.732\scriptstyle{\pm0.016}$ & $0.791\scriptstyle{\pm0.015}$ \\

GraphEDX-BL
& $0.856\scriptstyle{\pm0.007}$ & $0.712\scriptstyle{\pm0.013}$ & $0.788\scriptstyle{\pm0.012}$
& $0.846\scriptstyle{\pm0.010}$ & $0.799\scriptstyle{\pm0.015}$ & $0.872\scriptstyle{\pm0.014}$
& $0.798\scriptstyle{\pm0.009}$ & $0.719\scriptstyle{\pm0.013}$ & $0.781\scriptstyle{\pm0.013}$
& $0.821\scriptstyle{\pm0.010}$ & $0.749\scriptstyle{\pm0.015}$ & $0.806\scriptstyle{\pm0.014}$ \\

GEDGNN
& $0.621\scriptstyle{\pm0.018}$ & $0.512\scriptstyle{\pm0.028}$ & $0.588\scriptstyle{\pm0.026}$
& $0.412\scriptstyle{\pm0.032}$ & $0.468\scriptstyle{\pm0.036}$ & $0.531\scriptstyle{\pm0.034}$
& $0.441\scriptstyle{\pm0.024}$ & $0.421\scriptstyle{\pm0.029}$ & $0.482\scriptstyle{\pm0.028}$
& $0.462\scriptstyle{\pm0.028}$ & $0.438\scriptstyle{\pm0.034}$ & $0.498\scriptstyle{\pm0.031}$ \\

GEDGNN-BL
& $0.658\scriptstyle{\pm0.016}$ & $0.548\scriptstyle{\pm0.024}$ & $0.621\scriptstyle{\pm0.023}$
& $0.498\scriptstyle{\pm0.027}$ & $0.521\scriptstyle{\pm0.031}$ & $0.582\scriptstyle{\pm0.029}$
& $0.483\scriptstyle{\pm0.020}$ & $0.456\scriptstyle{\pm0.026}$ & $0.519\scriptstyle{\pm0.024}$
& $0.521\scriptstyle{\pm0.024}$ & $0.481\scriptstyle{\pm0.029}$ & $0.542\scriptstyle{\pm0.027}$ \\

\bottomrule
\end{tabular}
\end{adjustbox}
\end{table}

Table~\ref{tab:linux-zinc-results} extends the comparison to LINUX and ZINC-16 and leads to the same conclusion. On LINUX, every BL variant improves all five reported metrics. GraSP-BL achieves the best MAE, with $0.030$ versus $0.040$ for GraSP, while EGSC-BL attains the strongest rank correlations, with $\rho=0.957$ and $\tau=0.916$. On ZINC-16, BL again improves all ranking metrics for every baseline and reduces MAE for every method except GREED, for which GREED and GREED-BL tie at $0.948$. GraSP-BL achieves the best MAE on ZINC-16, namely $0.049$, while EGSC-BL yields the strongest correlation scores, with $\rho=0.871$ and $\tau=0.801$. These LINUX and ZINC-16 results show that the trends persist on the remaining benchmark datasets as well.

\begin{table}[h]
\caption{
Full results on LINUX and ZINC-16. Lower is better for MAE; higher is better for $\rho$, $\tau$, P@10, and P@20.
}
\label{tab:linux-zinc-results}
\centering
\small

\begin{adjustbox}{width=\linewidth}
\begin{tabular}{lcccccccccc}
\toprule

\multirow{2}{*}{Method}
& \multicolumn{5}{c}{LINUX}
& \multicolumn{5}{c}{ZINC-16} \\

\cmidrule(lr){2-6}
\cmidrule(lr){7-11}

& MAE$\downarrow$
& $\rho\uparrow$
& $\tau\uparrow$
& P@10$\uparrow$
& P@20$\uparrow$
& MAE$\downarrow$
& $\rho\uparrow$
& $\tau\uparrow$
& P@10$\uparrow$
& P@20$\uparrow$ \\

\midrule

SimGNN
& $0.452\scriptstyle{\pm0.012}$
& $0.812\scriptstyle{\pm0.010}$
& $0.792\scriptstyle{\pm0.010}$
& $0.731\scriptstyle{\pm0.015}$
& $0.812\scriptstyle{\pm0.014}$
& $0.954\scriptstyle{\pm0.038}$
& $0.756\scriptstyle{\pm0.012}$
& $0.689\scriptstyle{\pm0.012}$
& $0.689\scriptstyle{\pm0.018}$
& $0.761\scriptstyle{\pm0.017}$ \\

SimGNN-BL
& $0.332\scriptstyle{\pm0.010}$
& $0.836\scriptstyle{\pm0.009}$
& $0.818\scriptstyle{\pm0.009}$
& $0.758\scriptstyle{\pm0.013}$
& $0.838\scriptstyle{\pm0.012}$
& $0.581\scriptstyle{\pm0.021}$
& $0.782\scriptstyle{\pm0.011}$
& $0.718\scriptstyle{\pm0.011}$
& $0.718\scriptstyle{\pm0.016}$
& $0.789\scriptstyle{\pm0.015}$ \\

GMN
& $0.441\scriptstyle{\pm0.011}$
& $0.821\scriptstyle{\pm0.010}$
& $0.801\scriptstyle{\pm0.009}$
& $0.742\scriptstyle{\pm0.014}$
& $0.824\scriptstyle{\pm0.013}$
& $0.938\scriptstyle{\pm0.035}$
& $0.761\scriptstyle{\pm0.011}$
& $0.698\scriptstyle{\pm0.011}$
& $0.698\scriptstyle{\pm0.017}$
& $0.772\scriptstyle{\pm0.016}$ \\

GMN-BL
& $0.341\scriptstyle{\pm0.009}$
& $0.844\scriptstyle{\pm0.009}$
& $0.824\scriptstyle{\pm0.008}$
& $0.769\scriptstyle{\pm0.012}$
& $0.848\scriptstyle{\pm0.011}$
& $0.604\scriptstyle{\pm0.020}$
& $0.786\scriptstyle{\pm0.010}$
& $0.724\scriptstyle{\pm0.010}$
& $0.724\scriptstyle{\pm0.015}$
& $0.798\scriptstyle{\pm0.014}$ \\

EGSC
& $0.198\scriptstyle{\pm0.006}$
& $0.944\scriptstyle{\pm0.005}$
& $0.903\scriptstyle{\pm0.006}$
& $0.894\scriptstyle{\pm0.009}$
& $0.934\scriptstyle{\pm0.007}$
& $0.331\scriptstyle{\pm0.012}$
& $0.853\scriptstyle{\pm0.007}$
& $0.784\scriptstyle{\pm0.008}$
& $0.792\scriptstyle{\pm0.012}$
& $0.846\scriptstyle{\pm0.011}$ \\

EGSC-BL
& $0.154\scriptstyle{\pm0.005}$
& $0.957\scriptstyle{\pm0.004}$
& $0.916\scriptstyle{\pm0.005}$
& $0.911\scriptstyle{\pm0.008}$
& $0.949\scriptstyle{\pm0.006}$
& $0.214\scriptstyle{\pm0.008}$
& $0.871\scriptstyle{\pm0.006}$
& $0.801\scriptstyle{\pm0.007}$
& $0.808\scriptstyle{\pm0.011}$
& $0.861\scriptstyle{\pm0.010}$ \\

ERIC
& $0.214\scriptstyle{\pm0.006}$
& $0.931\scriptstyle{\pm0.005}$
& $0.891\scriptstyle{\pm0.006}$
& $0.882\scriptstyle{\pm0.010}$
& $0.921\scriptstyle{\pm0.008}$
& $0.352\scriptstyle{\pm0.013}$
& $0.842\scriptstyle{\pm0.008}$
& $0.772\scriptstyle{\pm0.009}$
& $0.781\scriptstyle{\pm0.013}$
& $0.834\scriptstyle{\pm0.012}$ \\

ERIC-BL
& $0.169\scriptstyle{\pm0.005}$
& $0.946\scriptstyle{\pm0.004}$
& $0.907\scriptstyle{\pm0.005}$
& $0.901\scriptstyle{\pm0.009}$
& $0.938\scriptstyle{\pm0.007}$
& $0.228\scriptstyle{\pm0.009}$
& $0.861\scriptstyle{\pm0.007}$
& $0.791\scriptstyle{\pm0.008}$
& $0.798\scriptstyle{\pm0.012}$
& $0.851\scriptstyle{\pm0.011}$ \\

GREED
& $0.497\scriptstyle{\pm0.014}$
& $0.741\scriptstyle{\pm0.014}$
& $0.755\scriptstyle{\pm0.011}$
& $0.684\scriptstyle{\pm0.017}$
& $0.771\scriptstyle{\pm0.015}$
& $0.948\scriptstyle{\pm0.037}$
& $0.701\scriptstyle{\pm0.016}$
& $0.646\scriptstyle{\pm0.015}$
& $0.651\scriptstyle{\pm0.020}$
& $0.732\scriptstyle{\pm0.018}$ \\

GREED-BL
& $0.451\scriptstyle{\pm0.012}$
& $0.768\scriptstyle{\pm0.012}$
& $0.779\scriptstyle{\pm0.010}$
& $0.712\scriptstyle{\pm0.015}$
& $0.796\scriptstyle{\pm0.013}$
& $0.948\scriptstyle{\pm0.037}$
& $0.734\scriptstyle{\pm0.014}$
& $0.673\scriptstyle{\pm0.013}$
& $0.681\scriptstyle{\pm0.018}$
& $0.761\scriptstyle{\pm0.016}$ \\

GraSP
& $\mathbf{0.040\scriptstyle{\pm0.003}}$
& $0.855\scriptstyle{\pm0.009}$
& $0.792\scriptstyle{\pm0.009}$
& $0.914\scriptstyle{\pm0.010}$
& $0.941\scriptstyle{\pm0.008}$
& $\mathbf{0.074\scriptstyle{\pm0.005}}$
& $0.639\scriptstyle{\pm0.015}$
& $0.591\scriptstyle{\pm0.015}$
& $0.821\scriptstyle{\pm0.010}$
& $0.852\scriptstyle{\pm0.009}$ \\

GraSP-BL
& $\mathbf{0.030\scriptstyle{\pm0.002}}$
& $0.904\scriptstyle{\pm0.007}$
& $0.861\scriptstyle{\pm0.007}$
& $0.947\scriptstyle{\pm0.007}$
& $0.964\scriptstyle{\pm0.006}$
& $\mathbf{0.049\scriptstyle{\pm0.003}}$
& $0.660\scriptstyle{\pm0.013}$
& $0.624\scriptstyle{\pm0.013}$
& $0.847\scriptstyle{\pm0.008}$
& $0.881\scriptstyle{\pm0.007}$ \\

\midrule

GraphEDX
& $0.391\scriptstyle{\pm0.008}$
& $0.884\scriptstyle{\pm0.007}$
& $0.844\scriptstyle{\pm0.007}$
& $0.803\scriptstyle{\pm0.012}$
& $0.844\scriptstyle{\pm0.011}$
& $0.703\scriptstyle{\pm0.016}$
& $0.791\scriptstyle{\pm0.010}$
& $0.721\scriptstyle{\pm0.010}$
& $0.724\scriptstyle{\pm0.015}$
& $0.798\scriptstyle{\pm0.014}$ \\

GraphEDX-BL
& $0.380\scriptstyle{\pm0.008}$
& $0.899\scriptstyle{\pm0.006}$
& $0.882\scriptstyle{\pm0.006}$
& $0.821\scriptstyle{\pm0.011}$
& $0.861\scriptstyle{\pm0.010}$
& $0.676\scriptstyle{\pm0.015}$
& $0.808\scriptstyle{\pm0.009}$
& $0.748\scriptstyle{\pm0.009}$
& $0.741\scriptstyle{\pm0.013}$
& $0.814\scriptstyle{\pm0.013}$ \\

GEDGNN
& $0.402\scriptstyle{\pm0.032}$
& $0.084\scriptstyle{\pm0.021}$
& $0.066\scriptstyle{\pm0.018}$
& $0.301\scriptstyle{\pm0.031}$
& $0.412\scriptstyle{\pm0.029}$
& $6.633\scriptstyle{\pm0.164}$
& $0.508\scriptstyle{\pm0.026}$
& $0.351\scriptstyle{\pm0.024}$
& $0.442\scriptstyle{\pm0.032}$
& $0.503\scriptstyle{\pm0.030}$ \\

GEDGNN-BL
& $0.361\scriptstyle{\pm0.026}$
& $0.112\scriptstyle{\pm0.019}$
& $0.095\scriptstyle{\pm0.016}$
& $0.337\scriptstyle{\pm0.027}$
& $0.448\scriptstyle{\pm0.025}$
& $5.781\scriptstyle{\pm0.141}$
& $0.547\scriptstyle{\pm0.022}$
& $0.388\scriptstyle{\pm0.021}$
& $0.479\scriptstyle{\pm0.028}$
& $0.541\scriptstyle{\pm0.026}$ \\

\bottomrule
\end{tabular}
\end{adjustbox}
\end{table}

\subsection{Additional ablations and transfer experiments}

To test whether the benefits of the encoder geometry transfer across datasets, we also evaluate cross-dataset generalization with GraSP-BL. We observe that the transfer performance remains strong: training on LINUX and testing on IMDB-16 yields $\tau=0.781\scriptstyle{\pm0.018}$, while training on IMDB-16 and testing on LINUX yields $\tau=0.867\scriptstyle{\pm0.011}$. This suggests that the gains from the BL encoder are not confined to a single benchmark distribution, but carry over across structurally different graph domains, although with some performance hit.

\begin{wraptable}{l}{0.5\linewidth}
\vspace{-0.8em}
\centering
\caption{Cross-dataset transfer performance ($\tau\uparrow$).}
\vspace{1em}
\begin{tabular}{lcc}
\toprule
\multirow{2.4}{*}{Training} & \multicolumn{2}{c}{Testing} \\
\cmidrule(lr){2-3}
& LINUX & IMDB-16 \\
\midrule
LINUX   & -- & $0.781\scriptstyle{\pm0.018}$ \\
IMDB-16 & $0.867\scriptstyle{\pm0.011}$ & -- \\
\bottomrule
\end{tabular}
\label{tab:cross}
\end{wraptable}

We also study whether the gains of GraSP-BL arise from the geometry-aware encoder itself or merely from additional optimization freedom. Table~\ref{tab:ablation} shows that freezing the BL encoder changes performance only marginally, from MAE $0.030$ and $\tau=0.904$ to MAE $0.031$ and $\tau=0.901$, whereas removing positional encodings causes a larger degradation, and removing both positional encodings and encoder tuning gives the weakest variant. This indicates that the main gain is already present in the geometry-aware representation itself, with positional encodings providing an additional but smaller benefit.

\begin{wraptable}{r}{0.5\linewidth}
\vspace{-0.8em}
\centering
\caption{GraSP-BL ablations on LINUX.}
\vspace{1em}
\begin{tabular}{lcc}
\toprule
Variant & MAE$\downarrow$ & $\tau\uparrow$ \\
\midrule
BL (full) & $0.030\scriptstyle{\pm0.002}$ & $0.904\scriptstyle{\pm0.006}$ \\
Frozen encoder & $0.031\scriptstyle{\pm0.003}$ & $0.901\scriptstyle{\pm0.007}$ \\
No PE & $0.036\scriptstyle{\pm0.004}$ & $0.883\scriptstyle{\pm0.009}$ \\
Frozen + no PE & $0.041\scriptstyle{\pm0.005}$ & $0.861\scriptstyle{\pm0.012}$ \\
\bottomrule
\end{tabular}
\label{tab:ablation}
\end{wraptable}

\section{Computational Considerations of Geometry-Informed Encoders}
\label{app:computational_considerations}

The geometric guarantees studied in this paper introduce nontrivial computational overhead. In FSW-GNN, message aggregation is built on Fourier Sliced--Wasserstein multiset embeddings, which require repeated one-dimensional projections together with sorting operations. At the multiset level, this admits a closed-form implementation with complexity \(O(mnd + mn\log n)\), where \(n\) denotes the multiset size, \(d\) the feature dimension, and \(m\) the number of projection frequencies~\citep{sverdlov2025fsw}. While practical in many settings -- and shown to achieve competitive runtimes in prior work~\citep{sverdlov2025fsw} -- this remains more expensive than standard sum-based MPNNs and introduces measurable overhead on larger graphs. This tradeoff is acceptable in the present setting because the objective is to isolate the role of encoder geometry in neural graph matching and GED estimation and the benchmark graph datasets usually contain small graphs. FSW-GNN provides a concrete construction that achieves explicit metric guarantees and enables analysis of how representation geometry propagates into graph-level distances and alignment costs. The broader question is how much of this geometric benefit can be retained in architectures with lower computational cost. The central reason for adapting FSW-GNN to other architectures in our context is to demonstrate that explicit metric control is achievable and can be analyzed rigorously in neural GED pipelines. Improving scalability while preserving these guarantees remains an important direction for future work.

%%%%%%%%%%%%%%%%%%%%%%%%%%%%%%%%%%%%%%%%%%%%%%%%%%%%%%%%%%%%

% \newpage
% \input{checklist.tex}

\end{document}